\documentclass[10pt,twocolumn,letterpaper]{article}

\usepackage[pagenumbers]{cvpr}              %

\usepackage[dvipsnames]{xcolor}

\usepackage{pifont}%
\newcommand{\cmark}{{\color{ForestGreen} \ding{51}}}%
\newcommand{\xmark}{{\color{red} \ding{55}}}%

\newcommand{\rom}[1]{\uppercase\expandafter{\romannumeral #1\relax}}

\usepackage{xcolor}

\usepackage{calc}

\usepackage{amsthm}
\usepackage{amssymb}

\usepackage[pagebackref,breaklinks,colorlinks]{hyperref}
\usepackage{colortbl}
\usepackage{tabulary}
\usepackage{tabularx}
\usepackage{etoolbox}
\usepackage{multirow}
\usepackage{cuted}
\usepackage[percentage]{overpic}
\usepackage{booktabs}
\usepackage{pgfplots}
\pgfplotsset{compat=newest}%

\title{Revisiting Map Relations for Unsupervised Non-Rigid Shape Matching}

\author{Dongliang Cao $\qquad$ Paul Roetzer $\qquad$ Florian Bernard\\
University of Bonn
}

\newtheorem{theorem}{Theorem}[section]

\newtheorem{lemma}[theorem]{Lemma}

\begin{document}
\maketitle

\setkeys{Gin}{keepaspectratio}

\def\pathOurs{figs/ours/}
\def\pathDiscOp{figs/dscrtopt/}
\def\pathAttFMap{figs/afmaps/}
\def\pathAttFMapFast{figs/afmaps_fst/}
\def\pathGeoFMap{figs/gmaps/}
\def\pathDpfm{figs/dpfm/}
\def\pathDpfmUn{figs/dpfm_un/}
\def\pathURSMM{figs/urssm/}
\def\srcEnd{_M}
\def\trgtEnd{_N}

\begin{strip}
  \centerline{
  \footnotesize
  \begin{tabular}{c}
    \setlength{\tabcolsep}{0pt}
    \includegraphics[width=\textwidth]{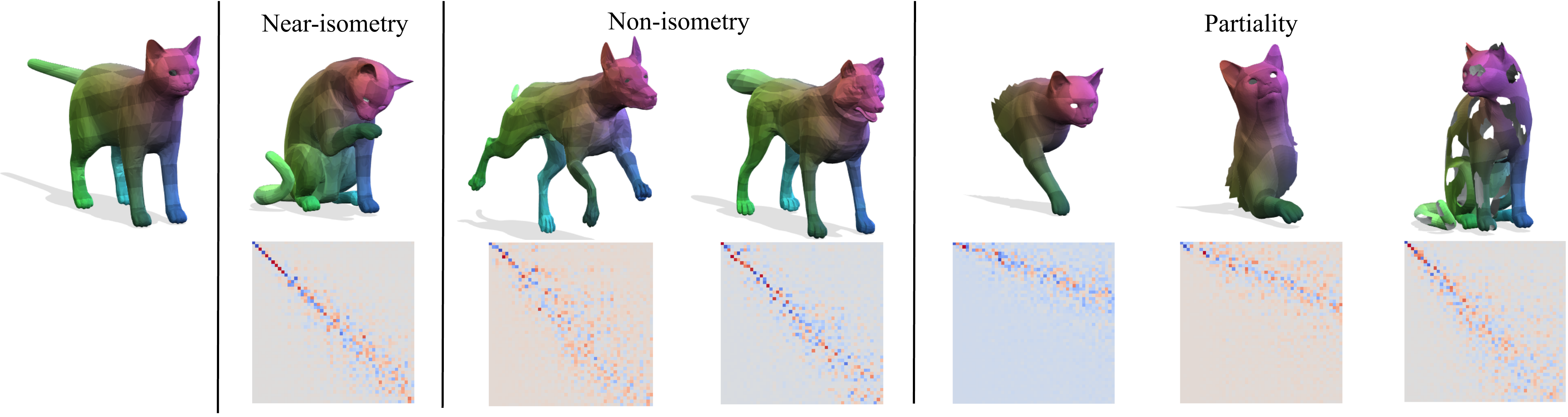}
  \end{tabular}
  }
\captionof{figure}{\textbf{Qualitative 3D shape matching results of our method.} 
The leftmost reference shape is matched to the other shapes in the first row. The second row visualises the corresponding functional maps. We observe that the diagonal structure of the functional maps changes significantly, especially under non-isometry or partiality. To better account for the varying structure of the functional map, we propose a self-adaptive functional map solver that can adjust the functional map regularisation strength and structure depending on the input shapes.
}
\label{fig:teaser}
\end{strip}

\begin{abstract}
We propose a novel unsupervised learning approach for non-rigid 3D shape matching. Our approach improves upon recent state-of-the art deep functional map methods and can be applied to a broad range of different challenging scenarios. Previous deep functional map methods mainly focus on feature extraction and aim exclusively at obtaining more expressive features for functional map computation. However, the importance of the functional map computation itself is often neglected and the relationship between the functional map and point-wise map is underexplored. In this paper, we systematically investigate the coupling relationship between the functional map from the functional map solver and the point-wise map based on feature similarity. To this end, we propose a self-adaptive functional map solver to adjust the functional map regularisation for different shape matching scenarios, together with a vertex-wise contrastive loss to obtain more discriminative features. Using different challenging datasets (including non-isometry, topological noise and partiality), we demonstrate that our method substantially outperforms previous state-of-the-art methods.  
\end{abstract}    
\section{Introduction}
\label{sec:intro}

3D shape matching is a fundamental problem in shape analysis, computer vision and computer graphics with a broad range of applications, including texture transfer~\cite{dinh2005texture}, deformation transfer~\cite{sumner2004deformation} and statistical shape analysis~\cite{loper2015smpl,li2017flame,egger20203d}. Even though 3D shape matching is a long-standing problem and has been studied for decades~\cite{van2011survey,tam2012registration}, finding correspondences between two non-rigidly deformed 3D shapes is still a challenging problem, especially for shapes with large non-isometric deformation, topological noise, or partiality. 

Notably, in the case of 3D shapes represented by triangle meshes, the functional map framework~\cite{ovsjanikov2012functional} is one of the most dominant pipelines in this area and has been extended by many follow-up works due to its efficiency and well-justified theoretical properties~\cite{nogneng2017informative,ren2018continuous,rodola2017partial,donati2022complex}. 

Meanwhile, with the recent rapid development in deep learning, many learning-based methods for non-rigid 3D shape matching are also based on the functional map framework, including both supervised~\cite{litany2017deep,donati2020deep,attaiki2021dpfm} and unsupervised~\cite{halimi2019unsupervised,roufosse2019unsupervised,sharma2020weakly,eisenberger2020deep,cao2022unsupervised,donati2022deep,li2022learning,cao2023unsupervised} approaches. Most of them mainly focus on training the feature extraction module to obtain functional maps based on the extracted features and then rely on off-the-shelf post-processing~\cite{melzi2019zoomout} to obtain final point-wise correspondences. In contrast, the recent work by~\citet{cao2023unsupervised} explicitly models the relationship between functional maps and pointwise maps and thus leads to more robust matching in a broad range of challenging scenarios. However, the method only focuses on extracting more expressive features and ignores the importance of the functional map computation itself. Further, it lacks a discussion about insights between the relationship between the functional map and point-wise map.   

In this paper, we improve upon the recent work by~\citet{cao2023unsupervised} by proposing a novel functional map solver that is self-adaptive to different shape matching scenarios. Moreover, we systematically analyse the relationship between the functional map and the point-wise map and introduce a vertex-wise contrastive loss to obtain more discriminative features leading to  more accurate correspondences. 
We summarise our main contributions as follows:
\begin{itemize}
    \item For the first time we propose a  functional map solver that is self-adaptive for different challenging matching scenarios.
    \item We introduce a vertex-wise contrastive loss to obtain more discriminative features that can be used directly for matching via nearest neighbour search.
    \item We set the new state-of-the-art performance on numerous challenging benchmarks in diverse settings, including non-isometric, topologically noisy and partial shape matching, even compared to recent supervised methods.
\end{itemize}

\section{Related work}
\label{sec:related_work}
3D shape matching is a long-standing problem that has been studied for decades. In the following we focus on reviewing those methods that are
most relevant to our work. A more comprehensive overview can be found in~\cite{tam2012registration,van2011survey,sahilliouglu2020recent}.

\subsection{Axiomatic shape matching methods}
Shape matching can be formulated as establishing point-wise correspondences between a given pair of shapes. A simple formulation for doing so is the linear assignment
problem (LAP)~\cite{munkres1957algorithms}. However, the LAP cannot take  geometric relations into account and thus leads to spatially non-smooth matchings. To compensate for this, several shape matching approaches~\cite{windheuser2011geometrically,holzschuh2020simulated,roetzer2022scalable} establish correspondences by explicitly incorporating geometric constraints. Some methods~\cite{huang2008non,ezuz2019elastic,eisenberger2019divergence,bernard2020mina} attempt to solve the problem based on non-rigid shape registration. Overall, directly establishing point-wise correspondences often leads to complex optimisation problems that are difficult to solve.

In contrast, the functional map framework finds correspondences in the functional domain~\cite{ovsjanikov2012functional}. Here, the correspondence relationship can be encoded with a small matrix, namely the functional map. Due to its simple yet efficient formulation, the functional map framework has been extended by many follow-up works, e.g.\ in terms of improving the matching accuracy and robustness~\cite{eynard2016coupled,ren2019structured}, extending it to more challenging scenarios (e.g.\ non-isometry~\cite{kovnatsky2013coupled,ren2018continuous,eisenberger2020smooth,ren2021discrete,magnet2022smooth}, partiality~\cite{rodola2017partial,litany2017fully}), considering multi-shape matching~\cite{huang2014functional,cohen2020robust,huang2020consistent,gao2021isometric}, and matching with non-unique solutions~\cite{ren2020maptree}. Nevertheless,  axiomatic functional map methods rely on handcrafted features~(e.g.\ HKS~\cite{bronstein2010scale}, WKS~\cite{aubry2011wave}, SHOT~\cite{salti2014shot}), which limits their performance. In contrast, our method (among others) directly learns discriminative features from training data and achieves more accurate and robust matching performance on challenging settings.

\subsection{Deep functional map methods}
In contrast to axiomatic approaches, deep functional map methods aim to learn features directly from training data. The supervised FMNet~\cite{litany2017deep} is the pioneer work that learns a non-linear transformation of SHOT feature~\cite{salti2014shot} based on a point-wise MLP. Later works~\cite{halimi2019unsupervised,roufosse2019unsupervised} enable unsupervised training of FMNet by introducing isometry regularisation in the spatial and spectral domain, respectively. Instead of using simple point-wise MLPs, follow-up works~\cite{donati2020deep,sharma2020weakly} replace FMNet by point-based networks~\cite{qi2017pointnet++,thomas2019kpconv} and lead to better matching performance. More recently,~\citet{sharp2020diffusionnet} introduces DiffusionNet with a learnable diffusion process and has set the new state-of-the-art matching performance for a broad range of shape matching scenarios, including near-isometry~\cite{cao2022unsupervised,attaiki2023understanding}, non-isometry~\cite{donati2022deep,li2022learning,attaiki2023ncp}, partiality~\cite{attaiki2021dpfm,cao2023unsupervised}, as well as shapes represented as point clouds~\cite{cao2023self}. Despite the rapid progress of deep functional map methods, existing approaches mostly focus on learning more expressive features for functional map computation, while ignoring the importance of the functional map computation itself. In this work, we systematically investigate the functional map computation process and introduce a self-adaptive functional map solver to better regularise the functional map structure for different kinds of input shapes.
\section{Background}
\label{sec:background}
In this section we explain the background and introduce the notation used throughout the rest of the paper in~\cref{table:notation}.

\begin{table}[!ht]
\small\centering
    \begin{tabularx}{\columnwidth}{lp{6.2cm}}
        \toprule
        \textbf{Symbol} &\textbf{Description} \\
        \toprule
        $\mathcal{X}, \mathcal{Y}$ & 3D shapes with $n_{\mathcal{X}}$ , $n_{\mathcal{Y}}$ vertices\\
        $L_{\mathcal{X}}$ & $\mathbb{R}^{n_{\mathcal{X}}\times n_{\mathcal{X}}}$ Laplacian matrix of shape $\mathcal{X}$\\
        $\Lambda_{\mathcal{X}}$ & $\mathbb{R}^{k\times k}$ eigenvalue matrix of Laplacian $L_{\mathcal{X}}$\\
        $\Phi_{\mathcal{X}}$ & $\mathbb{R}^{n_{\mathcal{X}}\times k}$  LBO eigenfunctions of shape $\mathcal{X}$\\
        $\Phi_{\mathcal{X}}^{\dagger}$ & $\mathbb{R}^{k\times n_{\mathcal{X}}}$  Moore-Penrose inverse of $\Phi_{\mathcal{X}}$\\
        $F_{\mathcal{X}}$ & $\mathbb{R}^{n_{\mathcal{X}}\times c}$ vertex-wise features of shape $\mathcal{X}$\\
        $A_{\mathcal{X}}$ & $\Phi_{\mathcal{X}}^{\dagger}F_{\mathcal{X}}$ projected feature coefficients of shape $\mathcal{X}$\\
        $C_{\mathcal{XY}}$ & $\mathbb{R}^{k\times k}$ functional map between shapes $\mathcal{X}$ and $\mathcal{Y}$\\
        $\Pi_{\mathcal{YX}}$ & point-wise map between shapes $\mathcal{Y}$ and $\mathcal{X}$\\
        \bottomrule
    \end{tabularx}
    \caption{Summary of the notation used in this paper.}
    \vspace{-0.5cm}
    \label{table:notation}
\end{table}

\subsection{Functional map framework}
\label{subsec:fmap_pipeline}
We consider a pair of 3D shapes $\mathcal{X}$ and $\mathcal{Y}$ represented as triangle meshes, with $n_{\mathcal{X}}$ and $n_{\mathcal{Y}}$ vertices, respectively. Here we summarise the common pipeline of the functional map framework.
\begin{enumerate}
    \item Compute the associated positive semi-definite Laplacian matrices $L_{\mathcal{X}}, L_{\mathcal{Y}}$~\cite{pinkall1993computing}. The Laplacian matrix can be computed as $L_{\mathcal{X}}= M_{\mathcal{X}}^{-1}W_{\mathcal{X}}$, where $M_{\mathcal{X}}$ is the diagonal lumped mass matrix and $W_{\mathcal{X}}$ is the cotangent weight matrix. 
    \item Compute the first $k$ eigenfunctions $\Phi_{\mathcal{X}}, \Phi_{\mathcal{Y}}$ and the corresponding eigenvalues $\Lambda_{\mathcal{X}}, \Lambda_{\mathcal{Y}}$ of the respective Laplacian matrices (i.e.\ LBO eigenfunctions/eigenvalues).
    \item Compute $c$-dimensional features $F_{\mathcal{X}}, F_{\mathcal{Y}}$ defined on each shape either from handcrafted feature descriptors or from a learnable feature extractor.
    \item Compute the functional map $C_{\mathcal{XY}}$ associated with the LBO eigenfunctions by solving (variants of) the  least squares problem
    \begin{equation}
    \label{eq:fmap}
        C_{\mathcal{XY}}=\mathrm{argmin}_{C}~ E_{\mathrm{data}}\left(C\right)+\lambda E_{\mathrm{reg}}\left(C\right).
    \end{equation}
    Here, minimising $E_{\mathrm{data}}=\left\|CA_{\mathcal{X}}-A_{\mathcal{Y}}\right\|^{2}_{F}$ enforces descriptor preservation, while minimising the regularisation term $E_{\mathrm{reg}}$ imposes some form of structural properties (e.g.\ Laplacian commutativity~\cite{ovsjanikov2012functional}).
    \item Recover the point-wise map $\Pi_{\mathcal{YX}}$ based on the relationship $C_{\mathcal{XY}} = \Phi_{\mathcal{Y}}^{\dagger}\Pi_{\mathcal{YX}}\Phi_{\mathcal{X}}$, e.g.\ either by nearest neighbour search in the spectral domain or by other post-processing techniques~\cite{vestner2017product,melzi2019zoomout,ezuz2019reversible}.
\end{enumerate} 
We emphasise that most deep functional methods mainly focus on the third step that aims to extract more expressive features for functional map computation while ignoring the importance of the other steps (i.e.\ the functional map computation and point-wise map conversion). However, we argue that this may lead to sub-optimal performance, since the  three interrelated aspects feature learning, functional map computation, and point-wise map conversion are considered in an isolated rather than a joint manner. Therefore, in this paper we systematically investigate the functional map computation step and the relationship between the functional map and the associated point-wise map. 
\subsection{Deep functional maps}
Instead of relying on handcrafted features~\cite{bronstein2010scale,aubry2011wave,salti2014shot} to compute functional maps, many deep functional map methods~\cite{roufosse2019unsupervised,sharma2020weakly} have been proposed. The common pipeline of those methods is shown in Fig.~\ref{fig:3dv_pipeline} (left). 

The common deep functional map framework mainly consists of two modules: a feature extractor and a functional map solver. The feature extractor is used to extract vertex-wise features and the functional map solver is used to compute functional maps based on the extracted features. To train the feature extractor, structural regularisation (e.g.\ orthogonality, bijectivity~\cite{roufosse2019unsupervised}) is imposed on the computed functional maps, i.e.
\begin{equation}
    \label{eq:l_fmap}
    L_{\mathrm{fmap}} = \lambda_{\mathrm{bij}}L_{\mathrm{bij}} + \lambda_{\mathrm{orth}}L_{\mathrm{orth}},
\end{equation}
where
 \begin{equation}
    \label{eq:bij}
    L_{\mathrm{bij}}=\left\|C_{\mathcal{XY}}C_{\mathcal{YX}}-I\right\|^{2}_{F}+\left\|C_{\mathcal{YX}}C_{\mathcal{XY}}-I\right\|^{2}_{F},
\end{equation}
\begin{equation}
    \label{eq:orth}
    L_{\mathrm{orth}}=\left\|C_{\mathcal{XY}}^{\top}C_{\mathcal{YX}}-I\right\|^{2}_{F}+\left\|C_{\mathcal{XY}}^{\top}C_{\mathcal{YX}}-I\right\|^{2}_{F}.
\end{equation} 
After training, off-the-shelf post-processing techniques~\cite{vestner2017product,melzi2019zoomout} are used to convert functional maps to point-wise maps.

As pointed out by recent works~\cite{cao2023unsupervised,ren2021discrete,attaiki2023understanding}, a major downside of this common pipeline is that the relation between the functional maps and associated point-wise maps is ignored, so that the performance is often sub-optimal, especially in the presence of large non-isometry, topological noise or partiality. To compensate for this,~\citet{cao2023unsupervised} proposed to directly obtain point-wise maps based on the extracted features and introduced a coupling loss $L_{\mathrm{couple}}$ to explicitly regularise the relation between the point-wise map $\Pi_{\mathcal{YX}}$ and the corresponding functional map $C_{\mathcal{XY}}$, i.e.
\begin{equation}
    \label{eq:l_couple}
    L_{\mathrm{couple}} = \left\|C_{\mathcal{XY}} - C_{\mathcal{XY}}^{\Pi}\right\|^{2}_{F},
\end{equation}
where $C_{\mathcal{XY}}^{\Pi}=\Phi_{\mathcal{Y}}^{\dagger}\Pi_{\mathcal{YX}}\Phi_{\mathcal{X}}$.

\begin{figure*}[!ht]
    \begin{center}  
    \begin{tabular}{c|c}
    \includegraphics[width=\columnwidth]{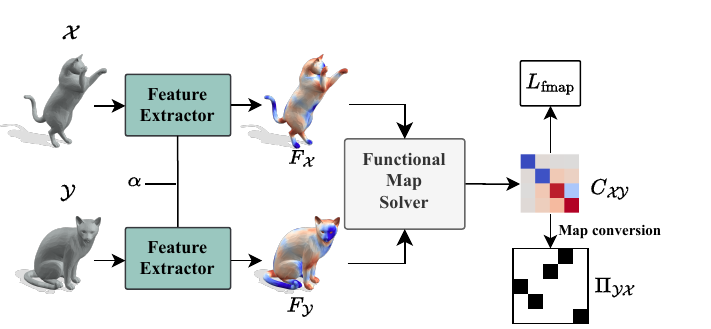}  &  \includegraphics[width=\columnwidth]{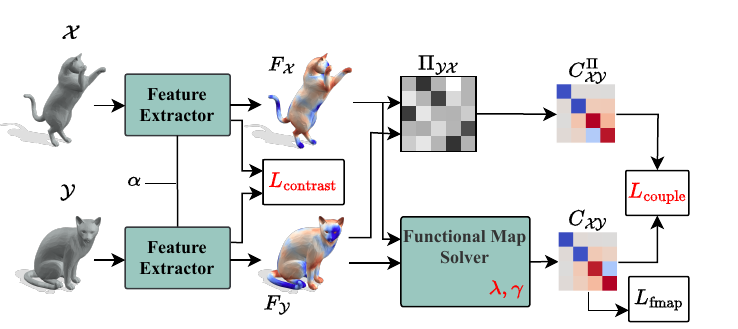}\\
    \footnotesize{Common deep functional map shape matching pipeline} & \footnotesize{Our proposed shape matching pipeline}\\
    \end{tabular}
    \caption{\textbf{Left: Common pipeline of deep functional map methods}. A Siamese feature extractor computes vertex-wise features for each shape. The extracted features are used for functional map computation. During training, structural regularisation $L_{\mathrm{fmap}}$ is imposed on the functional maps. During inference, the computed functional maps are typically converted to  point-wise maps via map conversion. \\
    \textbf{Right: Our proposed shape matching pipeline.} Point-wise correspondences are obtained based on feature similarity. A coupling loss \textcolor{red}{$L_\mathrm{couple}$} regularises the relation between the point-wise map $\Pi_{\mathcal{YX}}$ and the functional map $C_{\mathcal{XY}}$~\cite{cao2023unsupervised}. {To better balance the functional map regularisation and the coupling relationship between the functional map and the point-wise map}, we introduce a self-adaptive functional map solver (with learnable parameters \textcolor{red}{$\lambda$} and \textcolor{red}{$\gamma$}) to adjust the regularisation strength and structure, respectively. Additionally, a vertex-wise contrastive loss \textcolor{red}{$L_{\mathrm{contrast}}$} is introduced to improve the discriminative power of the features.
    }
    \vspace{-0.6cm}
    \label{fig:3dv_pipeline}
    \end{center}
\end{figure*}

By explicitly modelling the relationship between functional maps and point-wise maps, the method proposed by~\citet{cao2023unsupervised} substantially outperforms existing methods and is robust in different challenging scenarios. However, there are two major limitations:
\begin{itemize}
    \item In their approach, one functional map $C_{\mathcal{XY}}$ is computed from the functional map solver, while the other one $C_{\mathcal{XY}}^{\Pi}$ is converted from the point-wise map based on the feature similarity. Yet, the underlying relationship between them is  not well-understood.
    \item The coupling loss $L_{\mathrm{couple}}$ regularises the functional maps computed from the functional map solver based on the extracted features, while the functional map computation itself is {not optimised in a data-driven manner}. 
\end{itemize}
In the following, we theoretically analyse the relationship between $C_{\mathcal{XY}}$ and $C_{\mathcal{XY}}^{\Pi}$, and revisit the map relations by introducing a self-adaptive functional map solver and a vertex-wise contrastive loss.

\section{Theoretical analysis of map relations}
\label{sec:theory}
In this section, we  analyse the underlying relation between the functional map computed from the functional map solver and the functional map converted by the point-wise map based on the deep feature similarity.

W.l.o.g.\ we assume that $n_{\mathcal{Y}} \leq n_{\mathcal{X}}$. With that, a (partial or full) shape $\mathcal{Y}$ is matched to a full shape $\mathcal{X}$ and thereby the point-wise map $\Pi_{\mathcal{YX}}$ should be a (partial) permutation matrix, i.e.
    \begin{equation}
        \label{eq:permutation_mat}
        \mathcal{P} := \left\{\Pi \in\{0,1\}^{n_{\mathcal{Y}} \times n_{\mathcal{X}}}: \Pi \mathbf{1}_{n_{\mathcal{X}}} = \mathbf{1}_{n_{\mathcal{Y}}}, \mathbf{1}_{n_{\mathcal{Y}}}^{\top} \Pi \leq \mathbf{1}_{n_{\mathcal{X}}}^{\top}\right\},
    \end{equation}
where $\Pi_{\mathcal{YX}}(i,j)$ indicates whether the $i$-th point in $\mathcal{Y}$ corresponds to the $j$-th point in $\mathcal{X}$.

Firstly, we analyse the point-wise map computed based on the feature similarity. We note that
\begin{equation}
    \label{eq:pi}
    \Pi_{\mathcal{YX}} = \mathrm{argmin}_{\Pi \in \mathcal{P}}\left\|\Pi {F}_{\mathcal{X}} - F_{\mathcal{Y}}\right\|_{F}^{2}.
\end{equation}

\begin{lemma}
\label{lemma:pmap}
 If there exists a unique solution to~\cref{eq:pi}, then the rows of ${F}_{\mathcal{X}}$ and ${F}_{\mathcal{Y}}$ have non-repeated rows.
\end{lemma}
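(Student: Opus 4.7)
The plan is to prove the contrapositive of~\cref{lemma:pmap}: if either $F_{\mathcal{X}}$ or $F_{\mathcal{Y}}$ has two identical rows, then~\cref{eq:pi} admits more than one minimiser, violating uniqueness. The key observation that makes every step clean is that the Frobenius objective decomposes row-wise as $\|\Pi F_{\mathcal{X}} - F_{\mathcal{Y}}\|_F^2 = \sum_{i=1}^{n_{\mathcal{Y}}} \|F_{\mathcal{X},\pi(i)} - F_{\mathcal{Y},i}\|^2$, where $\pi$ is the assignment $\{1,\dots,n_{\mathcal{Y}}\} \to \{1,\dots,n_{\mathcal{X}}\}$ determined by $\Pi \in \mathcal{P}$. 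The row-sum constraint $\Pi \mathbf{1} = \mathbf{1}$ together with the column-sum constraint $\mathbf{1}^{\top}\Pi \leq \mathbf{1}^{\top}$ guarantees that $\pi$ is a well-defined injection, so disjoint columns are chosen by distinct rows.

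For the $F_{\mathcal{Y}}$ part, I would assume two rows $i_1 \neq i_2$ satisfy $F_{\mathcal{Y},i_1}=F_{\mathcal{Y},i_2}$. Given any optimiser $\Pi^{\ast}$ with assignment $\pi^{\ast}$, injectivity forces $\pi^{\ast}(i_1) \neq \pi^{\ast}(i_2)$. Define $\Pi'$ by the assignment that swaps the images of $i_1$ and $i_2$ while leaving the rest unchanged. Because the two summands that change are $\|F_{\mathcal{X},\pi^{\ast}(i_1)} - F_{\mathcal{Y},i_1}\|^2 + \|F_{\mathcal{X},\pi^{\ast}(i_2)} - F_{\mathcal{Y},i_2}\|^2$ and the two factors $F_{\mathcal{Y},i_1},F_{\mathcal{Y},i_2}$ coincide, this sum is invariant under the swap. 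Hence $\Pi' \in \mathcal{P}$ is also optimal, yet $\Pi' \neq \Pi^{\ast}$, which contradicts uniqueness.

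The $F_{\mathcal{X}}$ part uses the same swap idea but branches on how the duplicated columns interact with $\pi^{\ast}$. Suppose $F_{\mathcal{X},j_1}=F_{\mathcal{X},j_2}$ with $j_1 \neq j_2$. If both columns are images of $\pi^{\ast}$, swapping the two preimages gives a distinct optimiser by the identical row-wise computation. If only $j_1$ is in the image of $\pi^{\ast}$, say $\pi^{\ast}(i)=j_1$, I would modify $\Pi^{\ast}$ by redirecting $i$ to $j_2$; by $F_{\mathcal{X},j_1}=F_{\mathcal{X},j_2}$ the objective is unchanged, and the modified map is still in $\mathcal{P}$ and distinct from $\Pi^{\ast}$. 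Under the (full-matching) assumption $n_{\mathcal{Y}}=n_{\mathcal{X}}$, every column must be used, so these two subcases exhaust the situation.

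The main obstacle is the genuinely partial regime $n_{\mathcal{Y}} < n_{\mathcal{X}}$ in which neither duplicated column of $F_{\mathcal{X}}$ is hit by $\pi^{\ast}$. In that case the two repeated rows of $F_{\mathcal{X}}$ never enter the objective, so the swap construction no longer produces an alternative minimiser, and the claim is strictly speaking only about rows of $F_{\mathcal{X}}$ that can actually be matched. I would handle this either by explicitly restricting the statement to the image of $\pi^{\ast}$, or by noting that the full-matching case is the one relevant to the subsequent use of the lemma; the proof above then covers all remaining configurations.
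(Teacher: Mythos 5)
Your proof is correct and is essentially the paper's own argument made explicit: the paper proves the same contrapositive by right-multiplying an optimal $\Pi_{\mathcal{YX}}$ with a permutation $\Pi_{\mathcal{XX}}\neq I$ satisfying $\Pi_{\mathcal{XX}}F_{\mathcal{X}}=F_{\mathcal{X}}$ (and invoking orthogonal invariance of the Frobenius norm for the $F_{\mathcal{Y}}$ half), which is precisely your row-swap construction written in matrix form. The obstacle you flag in the partial regime $n_{\mathcal{Y}}<n_{\mathcal{X}}$ is genuine, and the paper's proof silently suffers from it as well: if neither of the duplicated rows $j_1,j_2$ of $F_{\mathcal{X}}$ lies in the image of the optimal assignment, then $\Pi_{\mathcal{YX}}\Pi_{\mathcal{XX}}=\Pi_{\mathcal{YX}}$ and no second minimiser is produced. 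A concrete counterexample to the lemma as stated is $n_{\mathcal{Y}}=1$ with $F_{\mathcal{Y}}=(0)$ and $F_{\mathcal{X}}$ having rows $0,5,5$: the minimiser of~\cref{eq:pi} is unique even though $F_{\mathcal{X}}$ has repeated rows. Your proposed repair (restrict the conclusion about $F_{\mathcal{X}}$ to rows reachable by the matching, or assume $n_{\mathcal{Y}}=n_{\mathcal{X}}$) is the right one; the $F_{\mathcal{Y}}$ half needs no such caveat because the assignment is total and injective on $\mathcal{Y}$, exactly as you argue.
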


\begin{proof}
 If $F_{\mathcal{X}}$ has repeated rows, we can find  a (full) permutation matrix $\Pi_{\mathcal{XX}} \neq I$ that satisfies $\Pi_{\mathcal{XX}}F_{\mathcal{X}}=F_{\mathcal{X}}$. Therefore, any solution $\Pi_{\mathcal{YX}}$ has an equivalent solution $\Pi_{\mathcal{YX}}' := \Pi_{\mathcal{YX}}\Pi_{\mathcal{XX}}$ and is thus not unique. Due to the orthogonal invariance of the Frobenius norm, an analogous statement can be made for $F_{\mathcal{Y}}$.
\end{proof}

\noindent \textbf{Discussion.} {To obtain a valid point-wise map based on the feature similarity, the features $F_{\mathcal{X}}, F_{\mathcal{Y}}$ should have non-repeated rows. To this end, based on Lemma~\ref{lemma:pmap} we propose a vertex-wise contrastive loss to encourage more discriminative features.} 

\begin{theorem}
\label{theorem:fmap}
Consider the following conditions:
\begin{enumerate}[label=(\roman*)]
    \item $\Pi_{\mathcal{YX}}F_{\mathcal{X}}=F_{\mathcal{Y}}, \Pi_{\mathcal{YX}} \in \mathcal{P}$, where {$n_{\mathcal{Y}} \leq n_{\mathcal{X}}$}. 
    \item {$F_{\mathcal{X}}$ is in the span of $\Phi_{\mathcal{X}}$ and $ F_{\mathcal{Y}}$ is in the span of $\Phi_{\mathcal{Y}}$.}
    \item $\lambda=0$ in~\cref{eq:fmap} and $A_{\mathcal{X}}$ {$\in 
    \mathbb{R}^{k \times c}$} ($k \leq c$) is full rank.
\end{enumerate}
If conditions (i)-(iii) hold, then we have $C_{\mathcal{XY}} = C_{\mathcal{XY}}^{\Pi}$, and $\left\|C_{\mathcal{XY}}A_{\mathcal{X}}-A_{\mathcal{Y}}\right\|^{2}_{F}=0$.
\end{theorem}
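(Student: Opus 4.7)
My plan is to convert everything to the spectral coefficient representation and then exploit the fact that, with $\lambda=0$, the functional map $C_{\mathcal{XY}}$ is simply the (unique) minimiser of the data term. Given that $\Pi_{\mathcal{YX}}$ exactly transports $F_{\mathcal{X}}$ to $F_{\mathcal{Y}}$ (condition (i)) and the features live in the respective eigenfunction spans (condition (ii)), I expect $C_{\mathcal{XY}}^{\Pi}$ to achieve a zero data-term residual, and a rank argument based on condition (iii) then gives $C_{\mathcal{XY}}=C_{\mathcal{XY}}^{\Pi}$ by uniqueness.

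First I would use condition (ii) to rewrite the features in the spectral basis. Since $F_{\mathcal{X}}$ lies in the span of $\Phi_{\mathcal{X}}$, there exists $B$ with $F_{\mathcal{X}}=\Phi_{\mathcal{X}}B$; applying $\Phi_{\mathcal{X}}^{\dagger}$ to both sides and using $\Phi_{\mathcal{X}}^{\dagger}\Phi_{\mathcal{X}}=I$ (the LBO eigenfunctions are linearly independent) yields $B=A_{\mathcal{X}}$, i.e.\ $F_{\mathcal{X}}=\Phi_{\mathcal{X}}A_{\mathcal{X}}$. The same argument gives $F_{\mathcal{Y}}=\Phi_{\mathcal{Y}}A_{\mathcal{Y}}$. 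Plugging both into condition (i) produces $\Pi_{\mathcal{YX}}\Phi_{\mathcal{X}}A_{\mathcal{X}}=\Phi_{\mathcal{Y}}A_{\mathcal{Y}}$, and left-multiplying by $\Phi_{\mathcal{Y}}^{\dagger}$ (again using $\Phi_{\mathcal{Y}}^{\dagger}\Phi_{\mathcal{Y}}=I$ on the right) gives exactly
\[
C_{\mathcal{XY}}^{\Pi}\,A_{\mathcal{X}} \;=\; A_{\mathcal{Y}},
\]
since by definition $C_{\mathcal{XY}}^{\Pi}=\Phi_{\mathcal{Y}}^{\dagger}\Pi_{\mathcal{YX}}\Phi_{\mathcal{X}}$.

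Next I would close the argument via uniqueness. With $\lambda=0$, the functional map solver in~\cref{eq:fmap} reduces to the unconstrained least-squares problem $\min_{C}\|CA_{\mathcal{X}}-A_{\mathcal{Y}}\|_{F}^{2}$. Because $A_{\mathcal{X}}\in\mathbb{R}^{k\times c}$ with $k\leq c$ has full row rank by condition (iii), the matrix $A_{\mathcal{X}}A_{\mathcal{X}}^{\top}$ is invertible and the quadratic objective is strictly convex in $C$, with unique minimiser $C=A_{\mathcal{Y}}A_{\mathcal{X}}^{\dagger}$. The identity established above shows that $C_{\mathcal{XY}}^{\Pi}$ already attains zero objective value, hence it is necessarily this unique minimiser; thus $C_{\mathcal{XY}}=C_{\mathcal{XY}}^{\Pi}$, and $\|C_{\mathcal{XY}}A_{\mathcal{X}}-A_{\mathcal{Y}}\|_{F}^{2}=0$ follows immediately.

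\textbf{Main obstacle.} There is no deep difficulty here; the entire argument is an exercise in chasing the pseudo-inverse identities. The only points that need care are (a) justifying $\Phi_{\mathcal{Z}}^{\dagger}\Phi_{\mathcal{Z}}=I$ for $\mathcal{Z}\in\{\mathcal{X},\mathcal{Y}\}$, which holds because the first $k$ LBO eigenfunctions have full column rank, and (b) observing that condition (iii) is tailored precisely to guarantee uniqueness of the data-term minimiser, so that matching the residual forces the two functional maps to coincide. Without the full rank of $A_{\mathcal{X}}$ one could only conclude that $C_{\mathcal{XY}}^{\Pi}$ lies in the affine set of minimisers.
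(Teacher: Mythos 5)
Your proposal is correct and follows essentially the same route as the paper's own proof: rewrite the features as $F_{\mathcal{X}}=\Phi_{\mathcal{X}}A_{\mathcal{X}}$, $F_{\mathcal{Y}}=\Phi_{\mathcal{Y}}A_{\mathcal{Y}}$ via condition (ii), substitute into condition (i), pre-multiply by $\Phi_{\mathcal{Y}}^{\dagger}$ to get $C_{\mathcal{XY}}^{\Pi}A_{\mathcal{X}}=A_{\mathcal{Y}}$, and conclude by uniqueness of the least-squares minimiser under the full-rank assumption. The only difference is that you spell out the justifications ($\Phi^{\dagger}\Phi=I$ and the strict convexity via invertibility of $A_{\mathcal{X}}A_{\mathcal{X}}^{\top}$) slightly more explicitly than the paper does.
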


\begin{proof}
By condition (i), we have $\Pi_{\mathcal{YX}}F_{\mathcal{X}}=F_{\mathcal{Y}}$ and {from condition (ii) we know that $F_{\mathcal{X}} = \Phi_{\mathcal{X}} A_{\mathcal{X}}$ (since $A_{\mathcal{X}}$ is the matrix of projected feature coefficients). The same holds for $\mathcal{Y}$}. Putting these together,
\begin{equation}
\label{eq:theorem3.2.1}
\Pi_{\mathcal{YX}}\Phi_{\mathcal{X}}A_{\mathcal{X}} = \Phi_{\mathcal{Y}}A_{\mathcal{Y}}.
\end{equation}
Pre-multiplying~\cref{eq:theorem3.2.1} by $\Phi_{\mathcal{Y}}^{\dagger}$  we obtain
\begin{equation}
\Phi_{\mathcal{Y}}^{\dagger}\Pi_{\mathcal{YX}}\Phi_{\mathcal{X}}A_{\mathcal{X}} = C_{\mathcal{XY}}^{\Pi}A_{\mathcal{X}} =  A_{\mathcal{Y}},
\end{equation}
where the definition $C_{\mathcal{XY}}^{\Pi}=\Phi_{\mathcal{Y}}^{\dagger}\Pi_{\mathcal{YX}}\Phi_{\mathcal{X}}$ is used.
Thus $\left\|C_{\mathcal{XY}}^{\Pi}A_{\mathcal{X}}-A_{\mathcal{Y}}\right\|^{2}_{F}=0$ and $C_{\mathcal{XY}} = C_{\mathcal{XY}}^{\Pi}$ 
($C_{\mathcal{XY}}=\mathrm{argmin}_{C}\left\|CA_{\mathcal{X}}-A_{\mathcal{Y}}\right\|^{2}_{F}$  achieves 0, and $A_{\mathcal{X}}$ is full rank so that the solution is unique, implying  $C_{\mathcal{XY}} = C_{\mathcal{XY}}^{\Pi}$).    
\end{proof}

\noindent \textbf{Discussion.} \cref{theorem:fmap} builds a connection between the functional map $C_{\mathcal{XY}}$ computed from the functional map solver, i.e.\ \cref{eq:fmap}, and the functional map $C_{\mathcal{XY}}^{\Pi}$ converted from the point-wise map $\Pi_{\mathcal{YX}}$. It explicitly shows that $C_{\mathcal{XY}}$ and $C_{\mathcal{XY}}^{\Pi}$ are equal under certain conditions. However, in practical situations, the assumptions are too restrictive and often \textbf{not} satisfied. For example, when computing functional maps using~\cref{eq:fmap}, structural regularisation $E_{\mathrm{reg}}$ is typically needed to preserve the structure of the functional map (e.g.\ Laplacian commutativity for isometry). Furthermore, we often do not want to constrain the feature $F_{\mathcal{X}}$ to lie in the span of the corresponding LBO eigenfunctions $\Phi_{\mathcal{X}}$, which limits its discriminative power and expressiveness, since the first $k$ LBO eigenfunctions correspond to the $k$ smoothest orthonormal functions defined on the surface w.r.t.\ the Dirichlet energy~\cite{bobenko2007discrete}. 
Even though the conditions of the theoretical results are not strictly met, the results give insights about the relations between variables, which we transfer into soft constraints that approximate the conditions.
For instance, we note that the functional map solver plays a crucial role to balance the functional map regularisation and the coupling relation between the functional map and the point-wise map. On the one hand, the regularisation term $E_{\mathrm{reg}}$ in~\cref{eq:fmap} preserves the functional map structure. On the other hand, it may result an invalid functional map (i.e.\ a functional map without an associated point-wise map). Therefore, it is important to adjust the functional map regularisation in a data-driven manner.

\section{Revisiting the map relations}
\label{sec:method}
In the previous section, we theoretically analyse the relationship between the functional map computed from the functional map solver and the functional map converted from the point-wise map based on deep feature similarity. Motivated by our analysis, we propose two simple yet efficient extensions from the existing framework, which we introduce in the following. We highlight the different parts in~\cref{fig:3dv_pipeline} (right) with \textcolor{red}{red colour}, compared to the common deep functional map pipeline shown in~\cref{fig:3dv_pipeline} (left).

\subsection{Self-adaptive functional map solver}
As discussed in~\cref{theorem:fmap}., we only consider $E_{\mathrm{data}}$ and ignore $E_{\mathrm{reg}}$ in~\cref{eq:fmap} with some additional assumptions (i.e.\ $F_{\mathcal{X}}, F_{\mathcal{Y}}$ in the span of $\Phi_{\mathcal{X}}, \Phi_{\mathcal{Y}}$, and $A_{\mathcal{X}}$ is full rank). With that, the functional map computed by the functional map solver $C_{\mathcal{XY}}$ is equal to the functional map $C_{\mathcal{XY}}^{\Pi}$ converted by the point-wise map. However, as shown in~\cref{fig:teaser}, the valid functional maps often exhibit certain structures that needs to be imposed by the regularisation term $E_{\mathrm{reg}}$. To this end, we propose a self-adaptive functional map solver that can optimise the regularisation strength and the regularisation structure based on the training data. Specifically, we use the regularisation term proposed by~\citet{ren2019structured}, which is an extension of the standard Laplacian commutativity. The standard Laplacian commutativity can be formulated as 
\begin{equation}
    \label{eq:lap}
    \begin{split}
        E_{\mathrm{lap}} &= \left\|C_{\mathcal{XY}}\Lambda_{\mathcal{X}}-\Lambda_{\mathcal{Y}}C_{\mathcal{XY}}\right\|_{F}^{2} \\
        &= \sum_{ij}(\Lambda_{\mathcal{Y}}(i,i) - \Lambda_{\mathcal{X}}(j,j))^{2}\left[C_{\mathcal{XY}}\right]_{ij}^{2} \\
        &= \sum_{ij}\left[M_{\mathrm{lap}}\right]_{ij}\left[C_{\mathcal{XY}}\right]_{ij}^{2}. 
    \end{split}
\end{equation}
\citet{ren2019structured} extended the standard mask $M_{\mathrm{lap}}$ to a resolvent mask $M_{\mathrm{res}}^{\gamma}$ in the form
\begin{equation}
    \left[M_{\mathrm{res}}^{\gamma}\right]_{ij} = \left[M_{\mathrm{re}}^{\gamma}\right]_{ij} + \left[M_{\mathrm{im}}^{\gamma}\right]_{ij},
\end{equation}
where 
\begin{equation}
    \label{eq:re}
    \left[M_{\mathrm{re}}^{\gamma}\right]_{ij} = \left(\frac{\Lambda_{\mathcal{Y}}^{\gamma}(i,i)}{\Lambda_{\mathcal{Y}}^{2\gamma}(i,i) + 1} - \frac{\Lambda_{\mathcal{X}}^{\gamma}(j,j)}{\Lambda_{\mathcal{X}}^{2\gamma}(j,j) + 1}\right)^{2},
\end{equation}
\begin{equation}
    \label{eq:img}
    \left[M_{\mathrm{im}}^{\gamma}\right]_{ij} = \left(\frac{1}{\Lambda_{\mathcal{Y}}^{2\gamma}(i,i) + 1} - \frac{1}{\Lambda_{\mathcal{X}}^{2\gamma}(j,j) + 1}\right)^{2}.
\end{equation}
The parameter $\gamma$ in the resolvent mask controls the regularisation structure of the functional map as shown in~\cref{fig:resolvent_mask}. In general, the $\gamma$ is chosen in the range $(0, 1]$ to keep the funnel-structure regularisation to be similar to the ground-truth functional map, i.e.\ more diagonal-dominant entries for smaller eigenvalues. Additionally, a larger $\gamma$ imposes larger penalisation on the non-zero off-diagonal entries and a smaller $\gamma$ provides more flexibility of the functional map~(see~\cref{fig:resolvent_mask}).

\begin{figure}[!ht]
    \begin{center}  \includegraphics[width=\columnwidth]{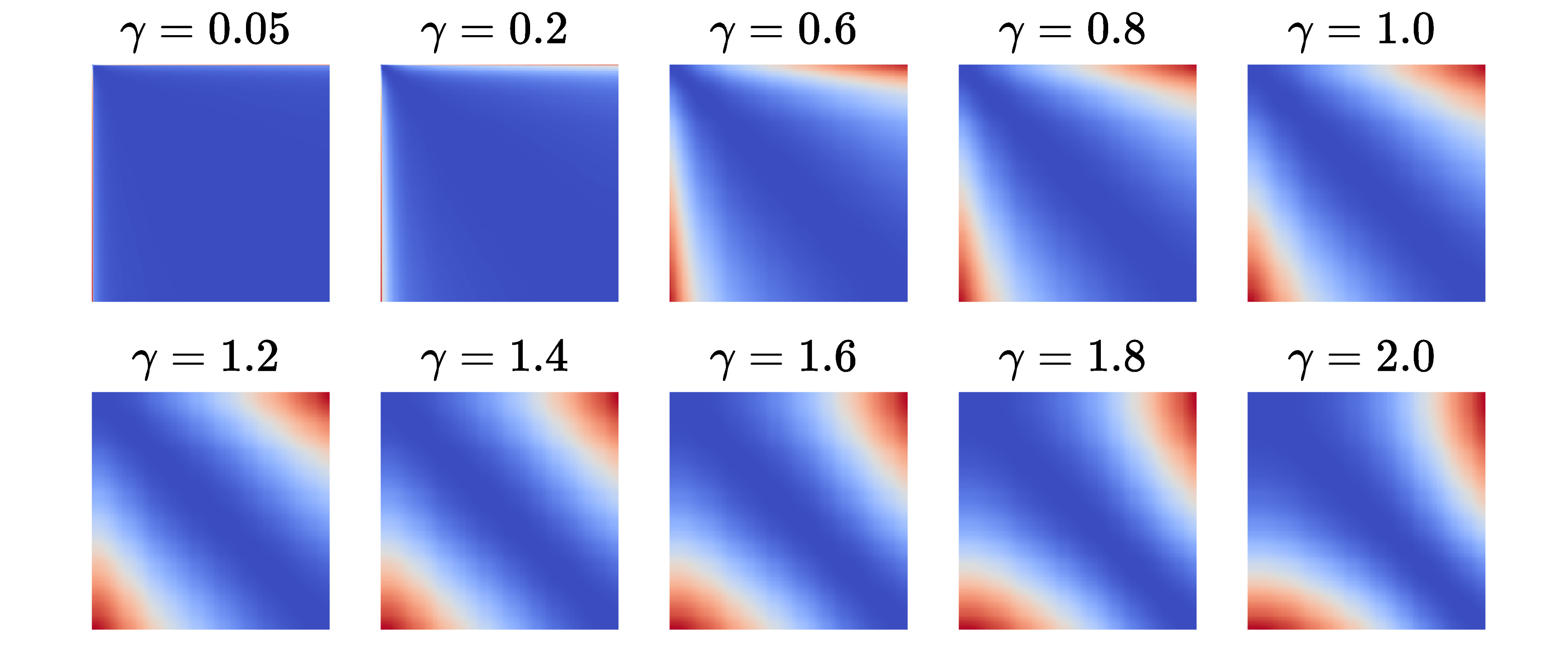}
    \caption{\textbf{The resolvent mask $M_{\mathrm{res}}^{\gamma}$ for different $\gamma$.} The red region indicates large penalty, while the blue region indicates small penalty. We notice the funnel-like structure changes w.r.t.\ the change of $\gamma$ and it reverses the direction for $\gamma > 1$. 
    }
    \vspace{-0.5cm}
    \label{fig:resolvent_mask}
    \end{center}
\end{figure}

Instead of manually choosing the regularisation strength (i.e.\ $\lambda$ in~\cref{eq:fmap}) and structure (i.e.\ $\gamma$ in~\cref{eq:re},~\cref{eq:img}), we propose to learn these parameters during training. To this end, the functional map solver is optimised from the input shapes to find a better balance between the data term $E_{\mathrm{data}}$ and regularisation term $E_{\mathrm{reg}}$ and thus to better couple $C_{\mathcal{XY}}$ and $C_{\mathcal{XY}}^{\Pi}$. In the experiment, we show the simple modification leads to better matching performance especially for the most challenging scenarios. We also visualise the regularisation for each evaluated datasets in~\cref{fig:fmap_reg}.

\subsection{Vertex-wise contrastive loss}
As discussed in~\cref{lemma:pmap}, a valid point-wise map based on the feature similarity requires $F_{\mathcal{X}}, F_{\mathcal{Y}}$ both have distinct rows. To this end, we propose a vertex-wise contrastive loss to encourage more discriminative features. We first compute a point-wise map $\Pi_{\mathcal{XX}}$ that maps shape $\mathcal{X}$ to itself. To make the computation differentiable, we use the softmax operator to approximate a soft point-wise map, i.e.
\begin{equation}
    \label{eq:soft_corr}
    \Pi_{\mathcal{XX}} = \mathrm{Softmax}\left( {F_{\mathcal{X}}F_{\mathcal{X}}^{T}} / \tau\right),
\end{equation}
where parameter $\tau$ is to determine the softness of the point-wise map. Similar to~\cite{cao2023unsupervised}, the computed point-wise map $\Pi_{\mathcal{XX}}$ is projected to the associated functional map $C_{\mathcal{XX}}$, i.e.
\begin{equation}
    C_{\mathcal{XX}} = \Phi_{\mathcal{X}}^{\dagger}\Pi_{\mathcal{XX}}\Phi_{\mathcal{X}}.
\end{equation}
Our vertex-wise contrastive loss regularises the functional map $C_{\mathcal{XX}}$ to be an identity matrix, i.e.
\begin{equation}
    L_{\mathrm{contrast}} = \left\|C_{\mathcal{XX}} - I\right\|_{F}^{2}.
\end{equation}
Similarly, we also apply the vertex-wise contrastive loss $L_{\mathrm{contrast}}$ for the functional map $C_{\mathcal{YY}}$. There are two main advantages of applying the regularisation on the functional map domain. The first advantage is to make $L_{\mathrm{contrast}}$ comparable to other loss terms (i.e.\ \cref{eq:l_fmap} and~\cref{eq:l_couple}). The second advantage is to make the $L_{\mathrm{contrast}}$ discretisation-agnostic. Overall, the total unsupervised loss can be expressed as
\begin{equation}
    \label{eq:l_total}
    L_{\mathrm{total}} = L_{\mathrm{fmap}} + \lambda_{\mathrm{couple}}L_{\mathrm{couple}} + \lambda_{\mathrm{contrast}}L_{\mathrm{contrast}}.
\end{equation}

\section{Experimental results}
\label{sec:experiment}
In this section we compare our method  to previous methods on diverse benchmark shape matching datasets with different settings (including near-isometric, non-isometric, topological noisy, and partial shape matching). 
\subsection{Near-isometric shape matching}
\noindent \textbf{Datasets.} We evaluate our method on three standard benchmark datasets, namely the FAUST~\cite{bogo2014faust}, SCAPE~\cite{anguelov2005scape} and SHREC'19~\cite{melzi2019shrec} datasets. Following prior works, we choose the more challenging remeshed versions from~\cite{ren2018continuous,donati2020deep}. The FAUST dataset consists of 100 shapes, where the train/test split is 80/20. The SCAPE dataset contains 71 shapes, where the last 20 shapes are used for evaluation. The SHREC'19 dataset is a more challenging dataset with significant variance in the mesh connectivity and shape geometry. It has a total of 430 pairs for evaluation.

\begin{table}[ht!]
\small
\setlength{\tabcolsep}{2pt}
\centering
\small
\begin{tabular}{@{}lccc@{}}
\toprule
\multicolumn{1}{l}{Train}  & \multicolumn{1}{c}{\textbf{FAUST}}   & \multicolumn{1}{c}{\textbf{SCAPE}}  & \multicolumn{1}{c}{\textbf{FAUST + SCAPE}} \\ \cmidrule(lr){2-2} \cmidrule(lr){3-3} \cmidrule(lr){4-4}
\multicolumn{1}{l}{Test} & \multicolumn{1}{c}{\textbf{FAUST}} &  \multicolumn{1}{c}{\textbf{SCAPE}} &  \multicolumn{1}{c}{\textbf{SHREC'19}}
\\ \midrule
\multicolumn{4}{c}{Axiomatic Methods} \\
\multicolumn{1}{l}{BCICP~\cite{ren2018continuous}} & \multicolumn{1}{c}{6.1}  & \multicolumn{1}{c}{11.0} & \multicolumn{1}{c}{-}\\
\multicolumn{1}{l}{ZoomOut~\cite{melzi2019zoomout}} & \multicolumn{1}{c}{6.1} & \multicolumn{1}{c}{7.5} &  \multicolumn{1}{c}{-}\\
\multicolumn{1}{l}{Smooth Shells~\cite{eisenberger2020smooth}} & \multicolumn{1}{c}{2.5}  & \multicolumn{1}{c}{4.7} & \multicolumn{1}{c}{-}\\ 
\multicolumn{1}{l}{DiscreteOp~\cite{ren2021discrete}} & \multicolumn{1}{c}{5.6}  & \multicolumn{1}{c}{13.1} & \multicolumn{1}{c}{-}\\ 
\midrule
\multicolumn{4}{c}{Supervised Methods} \\ 
\multicolumn{1}{l}{FMNet~\cite{litany2017deep}} & \multicolumn{1}{c}{11.0} & \multicolumn{1}{c}{17.0} & \multicolumn{1}{c}{-} \\

\multicolumn{1}{l}{3D-CODED~\cite{groueix20183d}} & \multicolumn{1}{c}{2.5}  & \multicolumn{1}{c}{31.0} & \multicolumn{1}{c}{-} \\
\multicolumn{1}{l}{GeomFMaps~\cite{donati2020deep}}& \multicolumn{1}{c}{2.6} & \multicolumn{1}{c}{3.0} & \multicolumn{1}{c}{7.9}\\

\midrule
\multicolumn{4}{c}{Unsupervised Methods} \\

\multicolumn{1}{l}{WSupFMNet~\cite{sharma2020weakly}} & \multicolumn{1}{c}{3.8}  & \multicolumn{1}{c}{4.4}  & \multicolumn{1}{c}{-} \\

\multicolumn{1}{l}{Deep Shells~\cite{eisenberger2020deep}} & \multicolumn{1}{c}{1.7}  & \multicolumn{1}{c}{2.5} &  \multicolumn{1}{c}{21.1} \\

\multicolumn{1}{l}{DUO-FMNet~\cite{donati2022deep}}  & \multicolumn{1}{c}{2.5}  & \multicolumn{1}{c}{2.6} & \multicolumn{1}{c}{6.4} \\
\multicolumn{1}{l}{AttentiveFMaps~\cite{li2022learning}}  & \multicolumn{1}{c}{1.9}  & \multicolumn{1}{c}{2.2} & \multicolumn{1}{c}{5.8}\\
\multicolumn{1}{l}{AttentiveFMaps-Fast~\cite{li2022learning}}  & \multicolumn{1}{c}{1.9}  & \multicolumn{1}{c}{2.1} &  \multicolumn{1}{c}{6.3}\\
\multicolumn{1}{l}{URSSM~\cite{cao2023unsupervised}}  & \multicolumn{1}{c}{1.6}  & \multicolumn{1}{c}{1.9} &  \multicolumn{1}{c}{4.6}\\
\multicolumn{1}{l}{Ours}  & \multicolumn{1}{c}{\textbf{1.5}}  & \multicolumn{1}{c}{\textbf{1.8}} & \multicolumn{1}{c}{\textbf{3.4}} \\\hline
\end{tabular}
\caption{\textbf{Near-isometric shape matching and cross-dataset generalisation on FAUST, SCAPE and SHREC'19.} The \textbf{best} results in each column are highlighted. Our method outperforms previous axiomatic, supervised and unsupervised methods and demonstrates better cross-dataset generalisation ability.}
\label{tab:near_isometric}
\end{table}

\noindent \textbf{Results.} The mean geodesic error~\cite{kim2011blended} is used as quantitative measure. We compare our method with state-of-the-art axiomatic, supervised and unsupervised methods. The results are summarised in~\cref{tab:near_isometric}. Our
method outperforms the previous state of the art,
even in comparison to the supervised methods. Meanwhile, our method
achieves substantially better cross-dataset generalisation ability
compared to existing learning-based methods.

\subsection{Non-isometric shape matching}
\textbf{Datasets.} In the context of non-isometric shape matching, we consider the SMAL~\cite{zuffi20173d} dataset and the DT4D-H~\cite{magnet2022smooth} dataset. The SMAL dataset contains 49 animal shapes of eight species. Following~\citet{donati2022deep}, five species are used for training and three different species are used for testing (i.e.\ 29/20 shapes for train/test split). The DT4D-H dataset based on DeformingThings4D~\cite{li20214dcomplete} is introduced by~\citet{magnet2022smooth}. Following \citet{li2022learning}, nine classes of humanoid shapes are used for evaluation, resulting in 198/95 shapes for train/test split. 

\begin{table}[ht!]
    \setlength{\tabcolsep}{4pt}
    \small
    \centering
    \begin{tabular}{@{}lccc@{}}
    \toprule
    \multicolumn{1}{l}{\multirow{2}{*}{\textbf{Geo. error ($\times$100)}}}  & \multicolumn{1}{c}{\multirow{2}{*}{\textbf{SMAL}}}   & \multicolumn{2}{c}{\textbf{DT4D-H}}\\ \cmidrule(lr){3-4}
    &  & \multicolumn{1}{c}{\textbf{intra-class}} & \multicolumn{1}{c}{\textbf{inter-class}}
    \\ \midrule
    \multicolumn{4}{c}{Axiomatic Methods} \\
    \multicolumn{1}{l}{ZoomOut~\cite{melzi2019zoomout}}  & \multicolumn{1}{c}{38.4} & \multicolumn{1}{c}{4.0} & \multicolumn{1}{c}{29.0} \\
    \multicolumn{1}{l}{Smooth Shells~\cite{eisenberger2020smooth}}  & \multicolumn{1}{c}{36.1} & \multicolumn{1}{c}{1.1} & \multicolumn{1}{c}{6.3} \\
    \multicolumn{1}{l}{DiscreteOp~\cite{ren2021discrete}}  & \multicolumn{1}{c}{38.1} & \multicolumn{1}{c}{3.6} & \multicolumn{1}{c}{27.6} \\
    \midrule
    \multicolumn{4}{c}{Supervised Methods} \\ 
    \multicolumn{1}{l}{FMNet~\cite{litany2017deep}}  & \multicolumn{1}{c}{42.0} & \multicolumn{1}{c}{9.6} & \multicolumn{1}{c}{38.0} \\
    \multicolumn{1}{l}{GeomFMaps~\cite{donati2020deep}}  & \multicolumn{1}{c}{8.4} & \multicolumn{1}{c}{2.1} & \multicolumn{1}{c}{4.1} \\
    \midrule
    \multicolumn{4}{c}{Unsupervised Methods} \\
    \multicolumn{1}{l}{WSupFMNet~\cite{sharma2020weakly}}  & \multicolumn{1}{c}{7.6} & \multicolumn{1}{c}{3.3} & \multicolumn{1}{c}{22.6} \\
    \multicolumn{1}{l}{Deep Shells~\cite{eisenberger2020deep}}  & \multicolumn{1}{c}{29.3} & \multicolumn{1}{c}{3.4} & \multicolumn{1}{c}{31.1} \\
    \multicolumn{1}{l}{DUO-FMNet~\cite{donati2022deep}}  & \multicolumn{1}{c}{6.7} & \multicolumn{1}{c}{2.6} & \multicolumn{1}{c}{15.8} \\
    \multicolumn{1}{l}{AttentiveFMaps~\cite{li2022learning}}  & \multicolumn{1}{c}{5.4} & \multicolumn{1}{c}{1.7} & \multicolumn{1}{c}{11.6} \\
    \multicolumn{1}{l}{AttentiveFMaps-Fast~\cite{li2022learning}}  & \multicolumn{1}{c}{5.8} & \multicolumn{1}{c}{1.2} & \multicolumn{1}{c}{14.6} \\
    \multicolumn{1}{l}{URSSM~\cite{cao2023unsupervised}}  & \multicolumn{1}{c}{3.9} & \multicolumn{1}{c}{\textbf{0.9}} & \multicolumn{1}{c}{4.1} \\
    \multicolumn{1}{l}{Ours}  & \multicolumn{1}{c}{\textbf{3.6}} & \multicolumn{1}{c}{1.0} & \multicolumn{1}{c}{\textbf{4.0}} \\
    \hline
    \end{tabular}
    \caption{\textbf{Non-isometric matching on SMAL and DT4D-H.} Our method outperforms all existing methods for challenging non-isometric inter-class shape matching on both SMAL and DT4D-H datasets and shows comparable performance on intra-class shape matching on DT4D-H dataset.}
    \label{tab:non-isometry}
\end{table}

\begin{figure}[ht!]
    \centering
    \begin{tabular}{cc}
    \hspace{-1.2cm}
       \newcommand{\pckLineWidth}{2pt}
\newcommand{\plotWidth}{\columnwidth}
\newcommand{\plotHeight}{0.8\columnwidth}
\newcommand{\pckTitle}{\textbf{SMAL}}
\definecolor{cPLOT0}{RGB}{28,213,227}
\definecolor{cPLOT1}{RGB}{80,150,80}
\definecolor{cPLOT2}{RGB}{90,130,213}
\definecolor{cPLOT3}{RGB}{247,179,43}
\definecolor{cPLOT4}{RGB}{124,42,43}
\definecolor{cPLOT5}{RGB}{242,64,0}

\pgfplotsset{%
    label style = {font=\large},
    tick label style = {font=\large},
    title style =  {font=\LARGE},
    legend style={  fill= gray!10,
                    fill opacity=0.6, 
                    font=\large,
                    draw=gray!20, %
                    text opacity=1}
}
\begin{tikzpicture}[scale=0.5, transform shape]
	\begin{axis}[
		width=\plotWidth,
		height=\plotHeight,
		grid=major,
		title=\pckTitle,
		legend style={
			at={(0.97,0.03)},
			anchor=south east,
			legend columns=1},
		legend cell align={left},
        xlabel={\LARGE Mean Geodesic Error},
		xmin=0,
        xmax=0.2,
        ylabel near ticks,
        xtick={0, 0.05, 0.1, 0.15, 0.2},
	ymin=0,
        ymax=1,
        ytick={0, 0.20, 0.40, 0.60, 0.80, 1.0}
	]

\addplot [color=cPLOT5, smooth, line width=\pckLineWidth]
table[row sep=crcr]{%
0.0 0.06488340934619913 \\
0.010526315789473684 0.13704104694753083 \\
0.021052631578947368 0.3482007280994979 \\
0.031578947368421054 0.5091134238269884 \\
0.042105263157894736 0.6313163984788405 \\
0.05263157894736842 0.7207291821737424 \\
0.06315789473684211 0.788068912316791 \\
0.07368421052631578 0.8386001001475146 \\
0.08421052631578947 0.877629210593983 \\
0.09473684210526316 0.9058721630509805 \\
0.10526315789473684 0.9262048152007687 \\
0.11578947368421053 0.9395975152589625 \\
0.12631578947368421 0.9487271792234507 \\
0.1368421052631579 0.9551447402254672 \\
0.14736842105263157 0.9603266974327049 \\
0.15789473684210525 0.9645085328389114 \\
0.16842105263157894 0.9678350543367934 \\
0.17894736842105263 0.9708245929815539 \\
0.18947368421052632 0.9734906822211095 \\
0.2 0.9757561814023359 \\
    };
\addlegendentry{\textcolor{black}{AttentiveFMaps: {0.78}}} 

\addplot [color=cPLOT3, smooth, line width=\pckLineWidth]
table[row sep=crcr]{%
0.0 0.11681666238107483 \\
0.010526315789473684 0.2009649010028285 \\
0.021052631578947368 0.4125514271020828 \\
0.031578947368421054 0.5778947030084854 \\
0.042105263157894736 0.6992298791463101 \\
0.05263157894736842 0.7843635896117254 \\
0.06315789473684211 0.8472923630753407 \\
0.07368421052631578 0.893088840318848 \\
0.08421052631578947 0.9245872975057855 \\
0.09473684210526316 0.944853432759064 \\
0.10526315789473684 0.9571213679609154 \\
0.11578947368421053 0.9645358704037027 \\
0.12631578947368421 0.9692600925687838 \\
0.1368421052631579 0.9726851375674981 \\
0.14736842105263157 0.9756203394188737 \\
0.15789473684210525 0.9781679094883003 \\
0.16842105263157894 0.9803272049370018 \\
0.17894736842105263 0.9822512213936745 \\
0.18947368421052632 0.9838518899460016 \\
0.2 0.98525777834919 \\
    };
\addlegendentry{\textcolor{black}{URSSM: {0.82}}}       

\addplot [color=cPLOT1, smooth, line width=\pckLineWidth]
table[row sep=crcr]{%
0.0  0.12317305219850862 \\
0.005128205128205128  0.13463293905888404 \\
0.010256410256410256  0.2076780663409617 \\
0.015384615384615385  0.32447672923630755 \\
0.020512820512820513  0.43185587554641297 \\
0.02564102564102564  0.5207431216250964 \\
0.03076923076923077  0.6035677552069941 \\
0.035897435897435895  0.6739926716379532 \\
0.041025641025641026  0.7303259192594498 \\
0.046153846153846156  0.776029827719208 \\
0.05128205128205128  0.8147878632039085 \\
0.05641025641025641  0.846931087683209 \\
0.06153846153846154  0.8739296734379017 \\
0.06666666666666667  0.8963679609154024 \\
0.07179487179487179  0.9148212908202623 \\
0.07692307692307693  0.9293783749035742 \\
0.08205128205128205  0.9411976086397531 \\
0.08717948717948718  0.9505110568269478 \\
0.09230769230769231  0.9579937001799949 \\
0.09743589743589744  0.9637490357418359 \\
0.10256410256410256  0.9681672666495243 \\
0.1076923076923077  0.9715678837747493 \\
0.11282051282051282  0.9743533041913088 \\
0.11794871794871795  0.9764862432501928 \\
0.12307692307692308  0.9781299820005143 \\
0.1282051282051282  0.9796650809976858 \\
0.13333333333333333  0.9809160452558499 \\
0.13846153846153847  0.98205901259964 \\
0.14358974358974358  0.9830367703779892 \\
0.14871794871794872  0.9839001028542042 \\
0.15384615384615385  0.984615582411931 \\
0.15897435897435896  0.9853445615839548 \\
0.1641025641025641  0.9859919002314219 \\
0.16923076923076924  0.9865460272563641 \\
0.17435897435897435  0.9870911545384418 \\
0.1794871794871795  0.9875674980714837 \\
0.18461538461538463  0.9880547698637182 \\
0.18974358974358974  0.9884764721007971 \\
0.19487179487179487  0.9888371046541528 \\
0.2  0.9892433787606069 \\
    };
\addlegendentry{\textcolor{black}{Ours: \textbf{0.84}}}  
\end{axis}
\end{tikzpicture}  & 
       \hspace{-1.3cm} \newcommand{\pckLineWidth}{2pt}
\newcommand{\plotWidth}{\columnwidth}
\newcommand{\plotHeight}{0.8\columnwidth}
\newcommand{\pckTitle}{\textbf{DT4D-H inter-class}}
\definecolor{cPLOT0}{RGB}{28,213,227}
\definecolor{cPLOT1}{RGB}{80,150,80}
\definecolor{cPLOT2}{RGB}{90,130,213}
\definecolor{cPLOT3}{RGB}{247,179,43}
\definecolor{cPLOT4}{RGB}{124,42,43}
\definecolor{cPLOT5}{RGB}{242,64,0}

\pgfplotsset{%
    label style = {font=\large},
    tick label style = {font=\large},
    title style =  {font=\LARGE},
    legend style={  fill= gray!10,
                    fill opacity=0.6, 
                    font=\large,
                    draw=gray!20, %
                    text opacity=1}
}
\begin{tikzpicture}[scale=0.5, transform shape]
	\begin{axis}[
		width=\plotWidth,
		height=\plotHeight,
		grid=major,
		title=\pckTitle,
		legend style={
			at={(0.97,0.03)},
			anchor=south east,
			legend columns=1},
		legend cell align={left},
        xlabel={\LARGE Mean Geodesic Error},
		xmin=0,
        xmax=0.2,
        ylabel near ticks,
        xtick={0, 0.05, 0.1, 0.15, 0.2},
	ymin=0,
        ymax=1,
        ytick={0, 0.20, 0.40, 0.60, 0.80, 1.0}
	]

\addplot [color=cPLOT5, smooth, line width=\pckLineWidth]
table[row sep=crcr]{%
0.0 0.03456135231465077 \\
0.010526315789473684 0.08081747504012339 \\
0.021052631578947368 0.26366498530545884 \\
0.031578947368421054 0.4222855743377035 \\
0.042105263157894736 0.5487909831794402 \\
0.05263157894736842 0.6365265648123059 \\
0.06315789473684211 0.6959819288409029 \\
0.07368421052631578 0.7381741251016112 \\
0.08421052631578947 0.7694500489818038 \\
0.09473684210526316 0.7939294870458762 \\
0.10526315789473684 0.8120624465889905 \\
0.11578947368421053 0.8260296600454384 \\
0.12631578947368421 0.8372674406486441 \\
0.1368421052631579 0.8469622944327491 \\
0.14736842105263157 0.8553124413781603 \\
0.15789473684210525 0.8624852533505638 \\
0.16842105263157894 0.8689187527356859 \\
0.17894736842105263 0.8752861787940055 \\
0.18947368421052632 0.8819866811180357 \\
0.2 0.8887820205515142 \\
    };
\addlegendentry{\textcolor{black}{AttentiveFMaps: {0.68}}} 

\addplot [color=cPLOT3, smooth, line width=\pckLineWidth]
table[row sep=crcr]{%
0.0 0.050371427975905124 \\
0.010526315789473684 0.12944817308293557 \\
0.021052631578947368 0.3864891510515455 \\
0.031578947368421054 0.5803202576234445 \\
0.042105263157894736 0.7183634866706964 \\
0.05263157894736842 0.8076292806970007 \\
0.06315789473684211 0.8629943514600746 \\
0.07368421052631578 0.897993205077433 \\
0.08421052631578947 0.9198287721199742 \\
0.09473684210526316 0.9337159472247119 \\
0.10526315789473684 0.942982262334035 \\
0.11578947368421053 0.9494709965191654 \\
0.12631578947368421 0.9543339516851825 \\
0.1368421052631579 0.95824395022615 \\
0.14736842105263157 0.9615212497655127 \\
0.15789473684210525 0.9646511870271172 \\
0.16842105263157894 0.9679835129332806 \\
0.17894736842105263 0.9716687162598745 \\
0.18947368421052632 0.9755318173291369 \\
0.2 0.9789688809221085 \\
    };
\addlegendentry{\textcolor{black}{URSSM: {0.81}}}       

\addplot [color=cPLOT1, smooth, line width=\pckLineWidth]
table[row sep=crcr]{%
0.0  0.05191133251349605 \\
0.005128205128205128  0.059836588365258354 \\
0.010256410256410256  0.12867290576734686 \\
0.015384615384615385  0.25443889780519835 \\
0.020512820512820513  0.38276476228192674 \\
0.02564102564102564  0.48862830106092503 \\
0.03076923076923077  0.5770309731746461 \\
0.035897435897435895  0.6524018800675324 \\
0.041025641025641026  0.7145776101048419 \\
0.046153846153846156  0.7647877107780812 \\
0.05128205128205128  0.8039087479417221 \\
0.05641025641025641  0.834578548054276 \\
0.06153846153846154  0.8592774662859287 \\
0.06666666666666667  0.8792195218542218 \\
0.07179487179487179  0.8950679492256707 \\
0.07692307692307693  0.9076370969422849 \\
0.08205128205128205  0.9177336015173938 \\
0.08717948717948718  0.9256923108989724 \\
0.09230769230769231  0.9320629676720095 \\
0.09743589743589744  0.9371590553807032 \\
0.10256410256410256  0.9414431915292745 \\
0.1076923076923077  0.9451615357358735 \\
0.11282051282051282  0.9483395168518248 \\
0.11794871794871795  0.9510300769118536 \\
0.12307692307692308  0.9533681764178669 \\
0.1282051282051282  0.9554263501261021 \\
0.13333333333333333  0.9572072034516539 \\
0.13846153846153847  0.958844446297184 \\
0.14358974358974358  0.9602656481230589 \\
0.14871794871794872  0.9615184359172103 \\
0.15384615384615385  0.9627219292577693 \\
0.15897435897435896  0.9639430352043687 \\
0.1641025641025641  0.9651800862913479 \\
0.16923076923076924  0.9665380286387227 \\
0.17435897435897435  0.9681233716155658 \\
0.1794871794871795  0.9698857786022469 \\
0.18461538461538463  0.9717841882568731 \\
0.18974358974358974  0.9738263126081247 \\
0.19487179487179487  0.9760231986160035 \\
0.2  0.9791147216374513 \\
    };
\addlegendentry{\textcolor{black}{Ours: \textbf{0.82}}}  
\end{axis}
\end{tikzpicture}
       \\
    \end{tabular}
    \caption{\textbf{Non-isometric matching on SMAL and DT4D-H inter-class datasets.} Proportion of correct keypoints (PCK) curves and corresponding area under curve (AUC) of our method compared to the existing state-of-the-art methods.
    }
    \label{fig:non_iso_pck}
\end{figure}
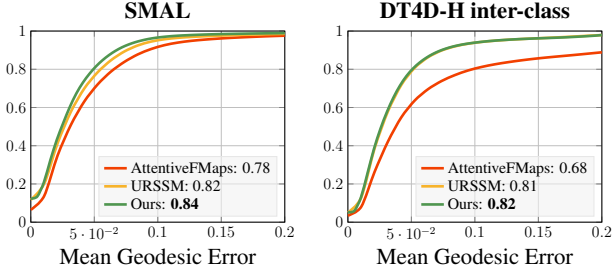

\noindent \textbf{Results.}
Tab.~\ref{tab:non-isometry} summarises the matching results on the SMAL and DT4D-H datasets. In the context of inter-class shape matching, our approach outperforms the existing state of the art on both challenging non-isometric datasets, even in comparison to supervised methods. Meanwhile, our method demonstrates comparable and near-perfect matching results for intra-class matching on the DT4D-H dataset.~\cref{fig:non_iso_pck} shows the PCK curves and the corresponding AUC of our method compared to  existing state-of-the-art methods.~\cref{fig:shrec20} demonstrates the qualitative results of our method applied on the challenging SHREC'20 dataset~\cite{dyke2020track}.

\begin{figure}[ht!]
    \centering
    \def\rowOnecolumnOne{dog-bison}
\def\rowOnecolumnTwo{dog-elephant_a}
\def\rowOnecolumnThree{dog-giraffe_b}
\def\rowOnecolumnFour{dog-pig}
\def\rowOnecolumnFive{dog-leopard}
\def\hspaceCols{-0.5cm}
\def\height{1.8cm}
\def\width{1.6cm}
\def\heightT{\height}
\def\widthT{\width}
\def\heightQ{\height}
\def\widthQ{\width}
\begin{tabular}{cccccc}%
        \setlength{\tabcolsep}{0pt} 
        \hspace{\hspaceCols}
        \includegraphics[height=\heightT, width=\widthT]{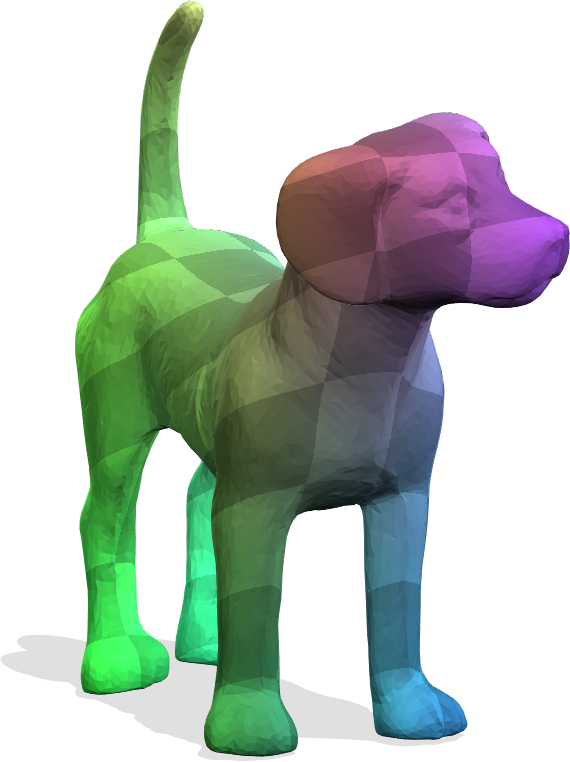}&
        \hspace{\hspaceCols}
        \includegraphics[height=\heightT, width=\widthT]{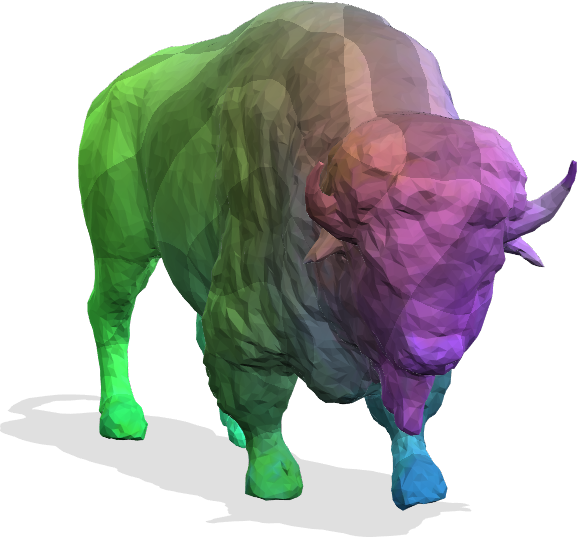}&
        \hspace{\hspaceCols}
        \includegraphics[height=\heightT, width=\widthT]{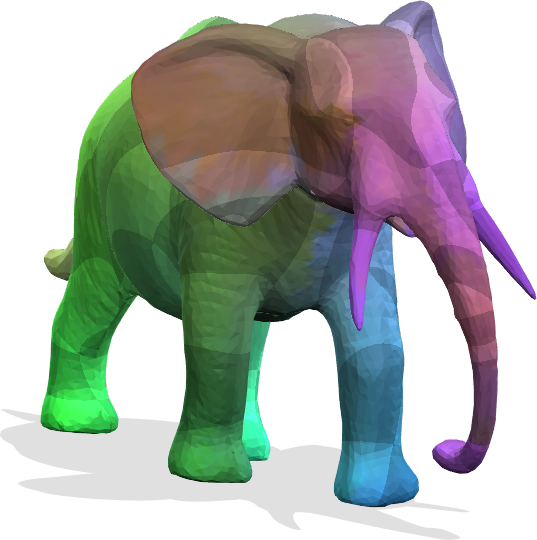}&
        \hspace{\hspaceCols}
        \includegraphics[height=\heightT, width=\widthT]{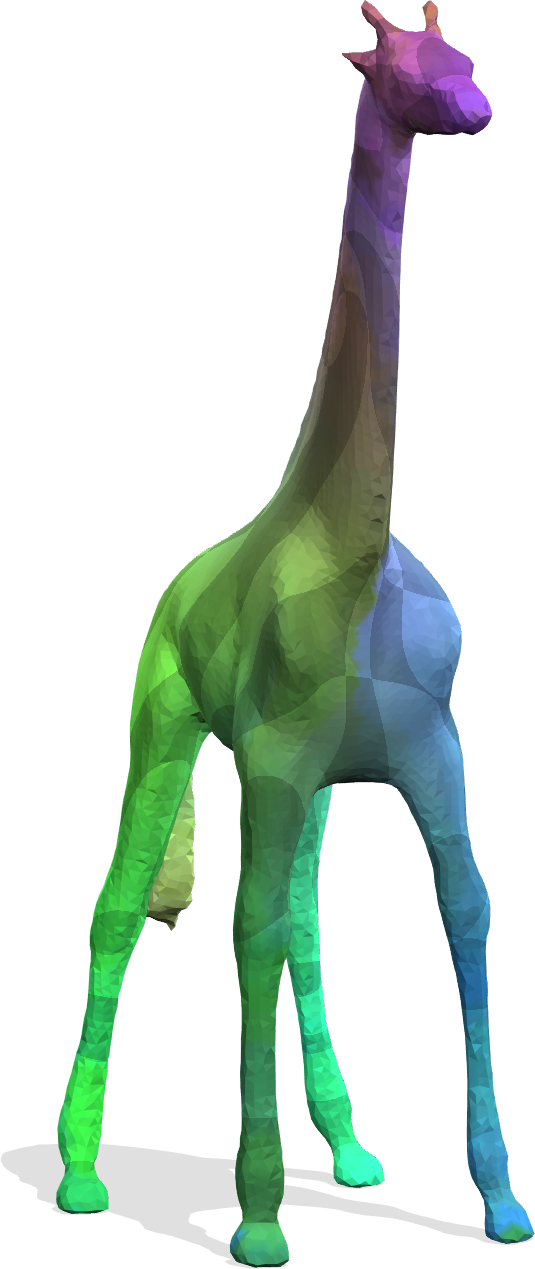}&
        \hspace{\hspaceCols}
        \includegraphics[height=\heightT, width=\widthT]{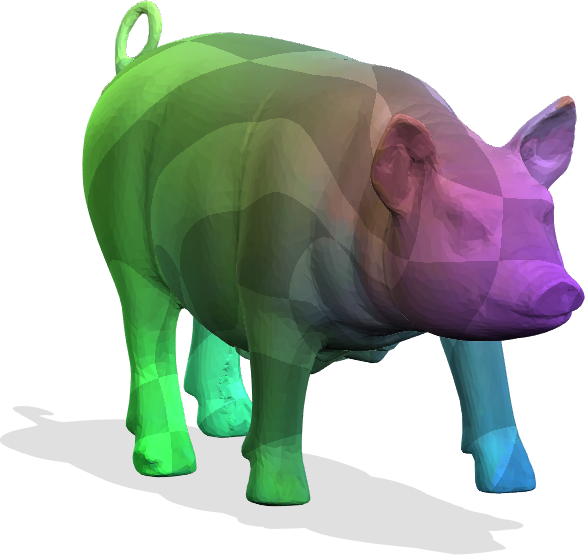}
        &
        \hspace{\hspaceCols}
        \includegraphics[height=\heightT, width=\widthT]{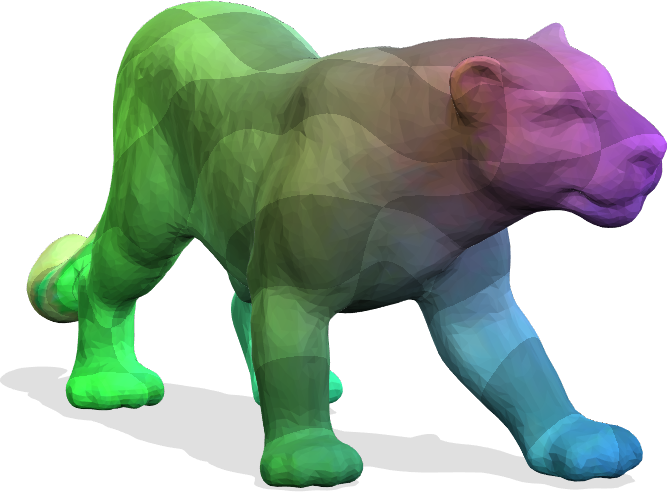}
    \end{tabular}
    \caption{\textbf{Qualitative results on the challenging SHREC'20 dataset of our method.} Our method is capable of finding reliable correspondences even for shapes with extremely large non-isometric deformations.}
    \label{fig:shrec20}
\end{figure}

\subsection{Matching with topological noise}
\textbf{Datasets.} The mesh topology is often degraded due to self-intersections of separate parts of  real-world scanned objects. Such topological noise presents a large challenge to matching methods based on the functional map framework as it distorts the intrinsic shape geometry~\cite{lahner2016shrec}. To evaluate our method for matching with topologically noisy shapes, we use the TOPKIDS dataset~\cite{lahner2016shrec}. Due to the small amount of data (26 shapes), we consider only axiomatic and unsupervised methods for comparison.

\begin{table}[ht!]
\setlength{\tabcolsep}{3pt}
    \centering
    \small
    \begin{tabular}{@{}lcc@{}}
    \toprule
    \multicolumn{1}{c}{\textbf{Geo. error ($\times$100)}}      & \multicolumn{1}{c}{\textbf{TOPKIDS}} & \multicolumn{1}{c}{\textbf{Fully intrinsic}}
    \\ \midrule
    \multicolumn{3}{c}{Axiomatic Methods} \\
    \multicolumn{1}{l}{ZoomOut~\cite{melzi2019zoomout}}  & \multicolumn{1}{c}{33.7} & \multicolumn{1}{c}{\cmark}  \\ 
    \multicolumn{1}{l}{Smooth Shells~\cite{eisenberger2020smooth}}  & \multicolumn{1}{c}{11.8} & \multicolumn{1}{c}{\xmark}\\
    \multicolumn{1}{l}{DiscreteOp~\cite{ren2021discrete}}  & \multicolumn{1}{c}{35.5} & \multicolumn{1}{c}{\cmark}
    \\\midrule
    \multicolumn{3}{c}{Unsupervised Methods} \\
    \multicolumn{1}{l}{WSupFMNet~\cite{sharma2020weakly}}  & \multicolumn{1}{c}{47.9} & \multicolumn{1}{c}{\cmark}\\
    \multicolumn{1}{l}{Deep Shells~\cite{eisenberger2020deep}}  & \multicolumn{1}{c}{13.7} & \multicolumn{1}{c}{\xmark} \\
    \multicolumn{1}{l}{NeuroMorph~\cite{eisenberger2021neuromorph}}  & \multicolumn{1}{c}{13.8} & \multicolumn{1}{c}{\xmark} \\
    \multicolumn{1}{l}{AttentiveFMaps~\cite{li2022learning}}  & \multicolumn{1}{c}{23.4} & \multicolumn{1}{c}{\cmark} \\
    \multicolumn{1}{l}{AttentiveFMaps-Fast~\cite{li2022learning}}  & \multicolumn{1}{c}{28.5} & \multicolumn{1}{c}{\cmark}\\
    \multicolumn{1}{l}{URSSM~\cite{cao2023unsupervised}}  & \multicolumn{1}{c}{{9.2}} & \multicolumn{1}{c}{\cmark} \\
    \multicolumn{1}{l}{Ours}  & \multicolumn{1}{c}{\textbf{6.6}} & \multicolumn{1}{c}{\cmark} \\
    \hline
    \end{tabular}
    \caption{\textbf{Quantitative results on the TOPKIDS dataset.} We distinguish fully intrinsic methods (i.e.\ methods based on the functional map framework) from the methods that rely on additional extrinsic information (e.g.\ a weak alignment).  Our method outperforms  all existing methods substantially, even in comparison to methods relying on additional extrinsic information.}
    \label{tab:topkids}
\end{table}

\begin{figure}[ht!]
    \centering
    \begin{tabular}{cc}
    \hspace{-1.2cm}
       \newcommand{\pckLineWidth}{2pt}
\newcommand{\plotWidth}{\columnwidth}
\newcommand{\plotHeight}{0.8\columnwidth}
\newcommand{\pckTitle}{\textbf{TOPKIDS}}
\definecolor{cPLOT0}{RGB}{28,213,227}
\definecolor{cPLOT1}{RGB}{80,150,80}
\definecolor{cPLOT2}{RGB}{90,130,213}
\definecolor{cPLOT3}{RGB}{247,179,43}
\definecolor{cPLOT4}{RGB}{124,42,43}
\definecolor{cPLOT5}{RGB}{242,64,0}

\pgfplotsset{%
    label style = {font=\large},
    tick label style = {font=\large},
    title style =  {font=\LARGE},
    legend style={  fill= gray!10,
                    fill opacity=0.6, 
                    font=\large,
                    draw=gray!20, %
                    text opacity=1}
}
\begin{tikzpicture}[scale=0.5, transform shape]
	\begin{axis}[
		width=\plotWidth,
		height=\plotHeight,
		grid=major,
		title=\pckTitle,
		legend style={
			at={(0.97,0.03)},
			anchor=south east,
			legend columns=1},
		legend cell align={left},
        xlabel={\LARGE Mean Geodesic Error},
		xmin=0,
        xmax=0.2,
        ylabel near ticks,
        xtick={0, 0.05, 0.1, 0.15, 0.2},
	ymin=0,
        ymax=1,
        ytick={0, 0.20, 0.40, 0.60, 0.80, 1.0}
	]

\addplot [color=cPLOT5, smooth, line width=\pckLineWidth]
table[row sep=crcr]{%
0.0 0.06531700633935275 \\
0.010526315789473684 0.16553915459815935 \\
0.021052631578947368 0.2928293328585914 \\
0.031578947368421054 0.37656451974952204 \\
0.042105263157894736 0.4298994527567283 \\
0.05263157894736842 0.46736665299203517 \\
0.06315789473684211 0.4941521599467463 \\
0.07368421052631578 0.5145363912905498 \\
0.08421052631578947 0.5312052510584939 \\
0.09473684210526316 0.5456061860936738 \\
0.10526315789473684 0.5583274635622673 \\
0.11578947368421053 0.5697406206218604 \\
0.12631578947368421 0.5801591417491659 \\
0.1368421052631579 0.5905737926977468 \\
0.14736842105263157 0.5999860673565905 \\
0.15789473684210525 0.6084191867980463 \\
0.16842105263157894 0.6156177192262738 \\
0.17894736842105263 0.6220847878755041 \\
0.18947368421052632 0.6285015442013112 \\
0.2 0.6345622440844318 \\
    };
\addlegendentry{\textcolor{black}{AttentiveFMaps: {0.50}}} 

\addplot [color=cPLOT3, smooth, line width=\pckLineWidth]
table[row sep=crcr]{%
0.0 0.1457277097056342 \\
0.010526315789473684 0.30846098472827477 \\
0.021052631578947368 0.48728259271013136 \\
0.031578947368421054 0.609193222543017 \\
0.042105263157894736 0.6856911752184716 \\
0.05263157894736842 0.737915366931645 \\
0.06315789473684211 0.7737454815663387 \\
0.07368421052631578 0.7998188756356769 \\
0.08421052631578947 0.8187479197789354 \\
0.09473684210526316 0.832680563188408 \\
0.10526315789473684 0.843435789864776 \\
0.11578947368421053 0.8532583034684542 \\
0.12631578947368421 0.8613972893268211 \\
0.1368421052631579 0.8683016881719597 \\
0.14736842105263157 0.8742308019784354 \\
0.15789473684210525 0.8792426834271206 \\
0.16842105263157894 0.8829967567902286 \\
0.17894736842105263 0.8861354717360848 \\
0.18947368421052632 0.8891309900691214 \\
0.2 0.8914879289125572 \\
    };
\addlegendentry{\textcolor{black}{URSSM: {0.76}}}       

\addplot [color=cPLOT1, smooth, line width=\pckLineWidth]
table[row sep=crcr]{%
0.0  0.15510515275595427 \\
0.005128205128205128  0.2005681422368085 \\
0.010256410256410256  0.32403845409581017 \\
0.015384615384615385  0.42582415455945755 \\
0.020512820512820513  0.5108984232891874 \\
0.02564102564102564  0.5813318059027965 \\
0.03076923076923077  0.6377899731409596 \\
0.035897435897435895  0.6823628215150975 \\
0.041025641025641026  0.7189747122522118 \\
0.046153846153846156  0.7491272746975456 \\
0.05128205128205128  0.7750265107242652 \\
0.05641025641025641  0.7959254758384742 \\
0.06153846153846154  0.8132058238449451 \\
0.06666666666666667  0.8273745481566339 \\
0.07179487179487179  0.8397591200761652 \\
0.07692307692307693  0.8501969920970951 \\
0.08205128205128205  0.8588932836918409 \\
0.08717948717948718  0.8658325141455032 \\
0.09230769230769231  0.8718003297392274 \\
0.09743589743589744  0.8770792535199275 \\
0.10256410256410256  0.8819943805004915 \\
0.1076923076923077  0.8864566965702476 \\
0.11282051282051282  0.8906829317377877 \\
0.11794871794871795  0.8946614754669371 \\
0.12307692307692308  0.8984271593662195 \\
0.1282051282051282  0.9017090709248953 \\
0.13333333333333333  0.904278869598198 \\
0.13846153846153847  0.9064887416500894 \\
0.14358974358974358  0.9085244556593624 \\
0.14871794871794872  0.910304737872795 \\
0.15384615384615385  0.912073409550053 \\
0.15897435897435896  0.9134628037122754 \\
0.1641025641025641  0.914716741619128 \\
0.16923076923076924  0.915982290062155 \\
0.17435897435897435  0.9170311084965904 \\
0.1794871794871795  0.918176681399147 \\
0.18461538461538463  0.9195622053826445 \\
0.18974358974358974  0.9207967923958729 \\
0.19487179487179487  0.9219036635111809 \\
0.2  0.9229563521243411 \\
    };
\addlegendentry{\textcolor{black}{Ours: \textbf{0.79}}}  
\end{axis}
\end{tikzpicture}  & 
       \hspace{-1.3cm} \newcommand{\pckLineWidth}{2pt}
\newcommand{\plotWidth}{\columnwidth}
\newcommand{\plotHeight}{0.8\columnwidth}
\newcommand{\pckTitle}{\textbf{SHREC'16}}
\definecolor{cPLOT0}{RGB}{28,213,227}
\definecolor{cPLOT1}{RGB}{80,150,80}
\definecolor{cPLOT2}{RGB}{90,130,213}
\definecolor{cPLOT3}{RGB}{247,179,43}
\definecolor{cPLOT5}{RGB}{242,64,0}

\pgfplotsset{%
    label style = {font=\large},
    tick label style = {font=\large},
    title style =  {font=\LARGE},
    legend style={  fill= gray!10,
                    fill opacity=0.6, 
                    font=\large,
                    draw=gray!20, %
                    text opacity=1}
}
\begin{tikzpicture}[scale=0.5, transform shape]
	\begin{axis}[
		width=\plotWidth,
		height=\plotHeight,
		grid=major,
		title=\pckTitle,
		legend style={
			at={(0.97,0.03)},
			anchor=south east,
			legend columns=1},
		legend cell align={left},
        xlabel={\LARGE Mean Geodesic Error},
		xmin=0,
        xmax=0.2,
        ylabel near ticks,
        xtick={0, 0.05, 0.1, 0.15, 0.2},
	ymin=0,
        ymax=1,
        ytick={0, 0.20, 0.40, 0.60, 0.80, 1.0}
	]   

\addplot [color=cPLOT5, smooth, line width=\pckLineWidth]
table[row sep=crcr]{%
0.0 0.09820953899901268 \\
0.010526315789473684 0.2295535239614187 \\
0.021052631578947368 0.4490357047922837 \\
0.031578947368421054 0.6150508847877268 \\
0.042105263157894736 0.7275171831092884 \\
0.05263157894736842 0.8033685064935064 \\
0.06315789473684211 0.8551836504139135 \\
0.07368421052631578 0.8906309333940913 \\
0.08421052631578947 0.9150563435102909 \\
0.09473684210526316 0.931423017771702 \\
0.10526315789473684 0.9422823156375788 \\
0.11578947368421053 0.949004851143009 \\
0.12631578947368421 0.9532175609478241 \\
0.1368421052631579 0.9562068048910154 \\
0.14736842105263157 0.9580342902711324 \\
0.15789473684210525 0.9593598105111263 \\
0.16842105263157894 0.9602723665223665 \\
0.17894736842105263 0.9610579479000532 \\
0.18947368421052632 0.9617877553732816 \\
0.2 0.9624594155844156 \\
    };
\addlegendentry{\textcolor{black}{DPFM (CUTS): 0.82}}

\addplot [color=cPLOT5, smooth, dashed, line width=\pckLineWidth]
table[row sep=crcr]{%
0.0 0.025942907493282673 \\
0.010526315789473684 0.07528458314250837 \\
0.021052631578947368 0.19621478735278905 \\
0.031578947368421054 0.32684963805656314 \\
0.042105263157894736 0.4429562515998823 \\
0.05263157894736842 0.5427140272454511 \\
0.06315789473684211 0.6240629260910979 \\
0.07368421052631578 0.6905988740311462 \\
0.08421052631578947 0.7444385044431017 \\
0.09473684210526316 0.7867607237823916 \\
0.10526315789473684 0.8196455966886137 \\
0.11578947368421053 0.845340359174661 \\
0.12631578947368421 0.8644849641539301 \\
0.1368421052631579 0.879182984740388 \\
0.14736842105263157 0.8888336488112548 \\
0.15789473684210525 0.8954491076357266 \\
0.16842105263157894 0.8999575106092874 \\
0.17894736842105263 0.9036222205582479 \\
0.18947368421052632 0.9066243398036781 \\
0.2 0.9091440999824122 \\
    };
\addlegendentry{\textcolor{black}{DPFM (HOLES): 0.67}}  

\addplot [color=cPLOT3, smooth, line width=\pckLineWidth]
table[row sep=crcr]{%
0.0 0.30081240031897927 \\
0.010526315789473684 0.523318238778765 \\
0.021052631578947368 0.7392498765854029 \\
0.031578947368421054 0.8435755582137161 \\
0.042105263157894736 0.8925260594668489 \\
0.05263157894736842 0.919430441634389 \\
0.06315789473684211 0.9356464076858814 \\
0.07368421052631578 0.9457165641376167 \\
0.08421052631578947 0.9525720076706918 \\
0.09473684210526316 0.9573353364471785 \\
0.10526315789473684 0.9603566207184628 \\
0.11578947368421053 0.9624736557302347 \\
0.12631578947368421 0.9640732987772461 \\
0.1368421052631579 0.9652884578871421 \\
0.14736842105263157 0.9662745879851143 \\
0.15789473684210525 0.9669972753854332 \\
0.16842105263157894 0.9678516841345789 \\
0.17894736842105263 0.9686693058403585 \\
0.18947368421052632 0.9694786207944103 \\
0.2 0.9702392819169134 \\
    };
\addlegendentry{\textcolor{black}{URSSM (CUTS): 0.90}} 

\addplot [color=cPLOT3, smooth, dashed, line width=\pckLineWidth]
table[row sep=crcr]{%
0.0 0.13387466674560872 \\
0.010526315789473684 0.30691445179108456 \\
0.021052631578947368 0.53647557469513 \\
0.031578947368421054 0.6711887103599234 \\
0.042105263157894736 0.7441372476974756 \\
0.05263157894736842 0.7889208913716934 \\
0.06315789473684211 0.8188245414541472 \\
0.07368421052631578 0.8400239612301723 \\
0.08421052631578947 0.8551817030931929 \\
0.09473684210526316 0.8658946019318742 \\
0.10526315789473684 0.874153912593751 \\
0.11578947368421053 0.8804071667760827 \\
0.12631578947368421 0.8854623594497972 \\
0.1368421052631579 0.8897705050490979 \\
0.14736842105263157 0.8934534993670126 \\
0.15789473684210525 0.8964469117700836 \\
0.16842105263157894 0.8988787328409904 \\
0.17894736842105263 0.9011242274854117 \\
0.18947368421052632 0.9031198357541257 \\
0.2 0.9050370824416075 \\
    };
\addlegendentry{\textcolor{black}{URSSM (HOLES): 0.79}}

\addplot [color=cPLOT1, smooth, line width=\pckLineWidth]
table[row sep=crcr]{%
0.0  0.2883249791144528 \\
0.005128205128205128  0.3406788752183489 \\
0.010256410256410256  0.5121053106250475 \\
0.015384615384615385  0.6441589295207716 \\
0.020512820512820513  0.7370687609174451 \\
0.02564102564102564  0.8037541771094403 \\
0.03076923076923077  0.8490497076023392 \\
0.035897435897435895  0.8807126718310929 \\
0.041025641025641026  0.9037935748462064 \\
0.046153846153846156  0.9207262949039264 \\
0.05128205128205128  0.9335341193893826 \\
0.05641025641025641  0.9436529296726666 \\
0.06153846153846154  0.9513497284878863 \\
0.06666666666666667  0.9575133382699172 \\
0.07179487179487179  0.9621212121212122 \\
0.07692307692307693  0.9658485702893598 \\
0.08205128205128205  0.9687725468975469 \\
0.08717948717948718  0.9708741550846814 \\
0.09230769230769231  0.9727040138224349 \\
0.09743589743589744  0.9741743088782563 \\
0.10256410256410256  0.975434561783246 \\
0.1076923076923077  0.9763957716260347 \\
0.11282051282051282  0.9772181400470874 \\
0.11794871794871795  0.9779242139439508 \\
0.12307692307692308  0.9786575814536341 \\
0.1282051282051282  0.979209387104124 \\
0.13333333333333333  0.9797232190324295 \\
0.13846153846153847  0.980162290195185 \\
0.14358974358974358  0.9806393350801246 \\
0.14871794871794872  0.9809905920103289 \\
0.15384615384615385  0.9813240487582593 \\
0.15897435897435896  0.9816468253968254 \\
0.1641025641025641  0.9819185748462064 \\
0.16923076923076924  0.9821665907192223 \\
0.17435897435897435  0.9824241000227842 \\
0.1794871794871795  0.9826721158958001 \\
0.18461538461538463  0.9829177584111795 \\
0.18974358974358974  0.983159840890104 \\
0.19487179487179487  0.9833924299384825 \\
0.2  0.9836451925267715 \\
    };
\addlegendentry{\textcolor{black}{Ours (CUTS): \textbf{0.91}}}  

\addplot [color=cPLOT1, smooth, dashed, line width=\pckLineWidth]
table[row sep=crcr]{%
0.0  0.1236702475007009 \\
0.005128205128205128  0.15652203440595827 \\
0.010256410256410256  0.29120556680673076 \\
0.015384615384615385  0.4231769178126322 \\
0.020512820512820513  0.5313402790717113 \\
0.02564102564102564  0.616178009650664 \\
0.03076923076923077  0.6798380875594895 \\
0.035897435897435895  0.7268245623505688 \\
0.041025641025641026  0.7627246147657598 \\
0.046153846153846156  0.7900249189828319 \\
0.05128205128205128  0.8117754818801903 \\
0.05641025641025641  0.8296959048237648 \\
0.06153846153846154  0.8446020189426062 \\
0.06666666666666667  0.8573758273676951 \\
0.07179487179487179  0.8682593803166156 \\
0.07692307692307693  0.8774224612153707 \\
0.08205128205128205  0.8851593613356993 \\
0.08717948717948718  0.8920525684314276 \\
0.09230769230769231  0.897734653755 \\
0.09743589743589744  0.9024694346298983 \\
0.10256410256410256  0.9064893837471115 \\
0.1076923076923077  0.909861543792805 \\
0.11282051282051282  0.9127443792979151 \\
0.11794871794871795  0.9151613987368113 \\
0.12307692307692308  0.9174660999092747 \\
0.1282051282051282  0.9194869580208302 \\
0.13333333333333333  0.9213188776531925 \\
0.13846153846153847  0.9229122298049145 \\
0.14358974358974358  0.9244315737965838 \\
0.14871794871794872  0.9257854877834295 \\
0.15384615384615385  0.9269791958708671 \\
0.15897435897435896  0.9281345938519244 \\
0.1641025641025641  0.9291402341444047 \\
0.16923076923076924  0.9301162711728639 \\
0.17435897435897435  0.9311088512018054 \\
0.1794871794871795  0.9320544142820076 \\
0.18461538461538463  0.9328998486750798 \\
0.18974358974358974  0.933825386017856 \\
0.19487179487179487  0.9346891047798823 \\
0.2  0.9355023238562257 \\
    };
\addlegendentry{\textcolor{black}{Ours (HOLES): \textbf{0.81}}}  
\end{axis}
\end{tikzpicture}
       \\
    \end{tabular}
    \caption{\textbf{Matching with topological noise on TOPKIDS and partial shape matching on SHREC'16.} Our method improves the state of the art over existing approaches.
    }
    \label{fig:topkids_shrec16_pck}
\end{figure}
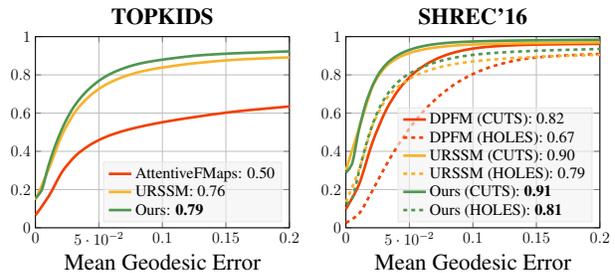

\begin{figure}
    \centering
    \includegraphics[width=\columnwidth]{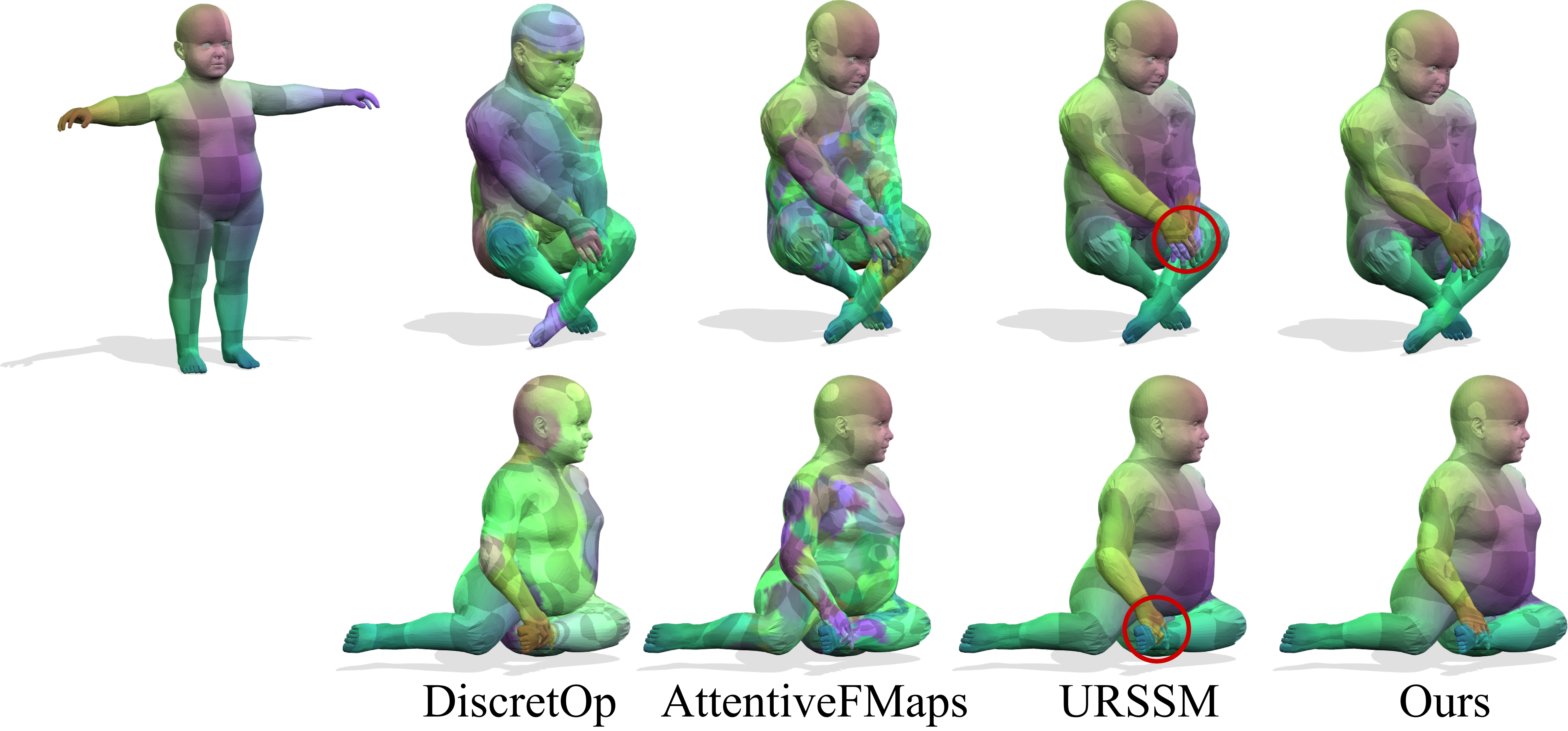}
    \caption{\textbf{Qualitative results on TOPKIDS dataset.} Compared to existing fully intrinsic approaches, our method is more robust to topological noise.}
    \label{fig:topkids}
\end{figure}

\noindent \textbf{Results.} We compare our method with  state-of-the-art axiomatic methods and unsupervised methods. The quantitative results are summarised in~\cref{tab:topkids}. Our method outperforms  the existing methods substantially, even in comparison to methods relying on additional extrinsic alignment information. We show the PCK curves of our method in~\cref{fig:topkids_shrec16_pck} (left) and qualitative results in~\cref{fig:topkids}. 

\subsection{Partial shape matching}
\textbf{Datasets.} We evaluate our method on the SHREC’16 partial dataset~\cite{cosmo2016shrec}. The dataset contains 200 training shapes and 400 test shapes, with 8 different classes (humans and animals). Each class has a complete shape to be matched by the other partial shapes. The dataset is divided into two subsets, namely CUTS (missing a large part) with 120/200 train/test split, and HOLES (missing many small parts) with 80/200 train/test split.

\begin{table}[th!]
    \setlength{\tabcolsep}{3.5pt}
    \small
    \centering
    \begin{tabular}{@{}lcccc@{}}
    \toprule
    \multicolumn{1}{l}{Train}  & \multicolumn{2}{c}{\textbf{CUTS}} & \multicolumn{2}{c}{\textbf{HOLES}}\\ \cmidrule(lr){2-3} \cmidrule(lr){4-5}
    \multicolumn{1}{l}{Test} & \multicolumn{1}{c}{\textbf{CUTS}} & \multicolumn{1}{c}{\textbf{HOLES}} & \multicolumn{1}{c}{\textbf{CUTS}} & \multicolumn{1}{c}{\textbf{HOLES}}
    \\ \midrule
    \multicolumn{5}{c}{Axiomatic Methods} \\
    \multicolumn{1}{l}{PFM~\cite{rodola2017partial}}  & \multicolumn{1}{c}{9.7} & \multicolumn{1}{c}{23.2} & \multicolumn{1}{c}{9.7} & \multicolumn{1}{c}{23.2} \\
    \multicolumn{1}{l}{FSP~\cite{litany2017fully}}  & \multicolumn{1}{c}{16.1} & \multicolumn{1}{c}{33.7}  & \multicolumn{1}{c}{16.1} & \multicolumn{1}{c}{33.7} \\
    \midrule
    \multicolumn{5}{c}{Supervised Methods} \\ 
    \multicolumn{1}{l}{GeomFMaps~\cite{donati2020deep}}  & \multicolumn{1}{c}{12.8} & \multicolumn{1}{c}{20.6} & \multicolumn{1}{c}{19.8} & \multicolumn{1}{c}{15.3}\\
    \multicolumn{1}{l}{DPFM~\cite{attaiki2021dpfm}}  & \multicolumn{1}{c}{3.2} & \multicolumn{1}{c}{15.8} & \multicolumn{1}{c}{8.6} & \multicolumn{1}{c}{13.1}\\
    \midrule
    \multicolumn{5}{c}{Unsupervised Methods} \\
    \multicolumn{1}{l}{DPFM-unsup~\cite{attaiki2021dpfm}}  & \multicolumn{1}{c}{9.0} & \multicolumn{1}{c}{22.8} & \multicolumn{1}{c}{16.5} & \multicolumn{1}{c}{20.5} \\
    \multicolumn{1}{l}{ConsistFMaps~\cite{cao2022unsupervised}}  & \multicolumn{1}{c}{8.4} & \multicolumn{1}{c}{23.7} & \multicolumn{1}{c}{15.7} & \multicolumn{1}{c}{17.9} \\
    \multicolumn{1}{l}{URSSM~\cite{cao2023unsupervised}}  & \multicolumn{1}{c}{3.3} & \multicolumn{1}{c}{\textbf{13.7}}  & \multicolumn{1}{c}{{5.2}}  & \multicolumn{1}{c}{{9.1}} \\
    \multicolumn{1}{l}{Ours}  & \multicolumn{1}{c}{\textbf{2.3}} & \multicolumn{1}{c}{{15.2}}  & \multicolumn{1}{c}{\textbf{5.1}}  & \multicolumn{1}{c}{\textbf{6.9}} \\
    \hline
    \end{tabular}
    \caption{\textbf{Partial shape matching on SHREC'16 dataset.} Our method substantially outperforms state-of-the-art methods and shows comparable cross-dataset generalisation ability, even in comparison to the supervised approached.}
    \label{tab:partial}
\end{table}

\begin{figure}[ht!]
    \centering
    \includegraphics[width=\columnwidth]{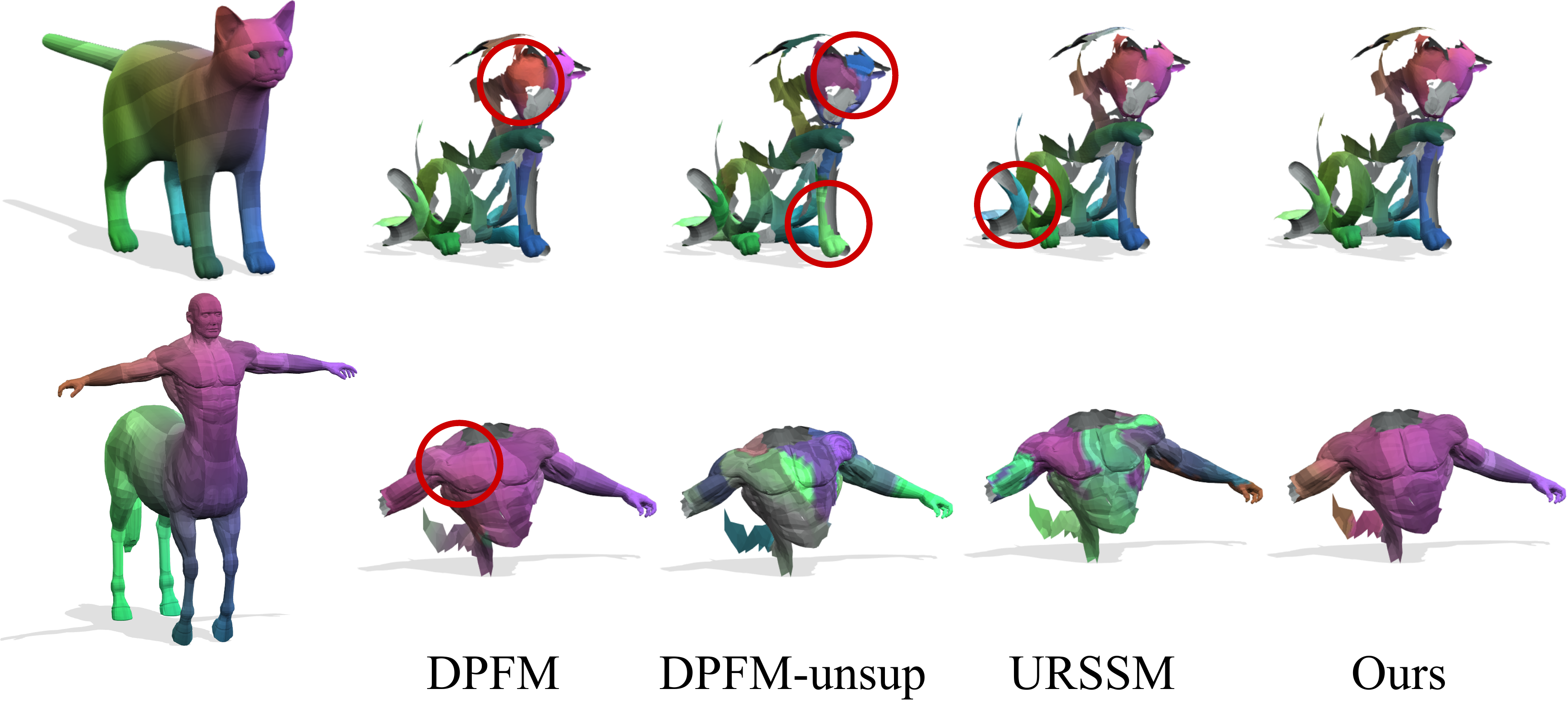}
    \caption{\textbf{Qualitative results on SHREC'16 dataset.} Compared to existing methods, our method is more robust to partiality.}
    \label{fig:shrec16}
\end{figure}

\noindent \textbf{Results.} We summarise the quantitative results on the SHREC'16 datasets in~\cref{tab:partial} and the corresponding PCK curve in~\cref{fig:topkids_shrec16_pck} (right). Compared to existing methods, our approach is more robust to partiality. We qualitatively compare our method to existing approaches in~\cref{fig:shrec16}.

\subsection{Analysis of self-adaptive functional map solver}
We summarise the learned parameters of the functional map solver for different kinds of datasets to better understand the learned regularisation strength and structure.~\cref{fig:fmap_reg} visualises the different regularisation strength (i.e.\ $\lambda$) and different regularisation structure (i.e.\ $\gamma$) for different datasets.

\begin{figure}[!ht]
    \begin{center}  \includegraphics[width=\columnwidth]{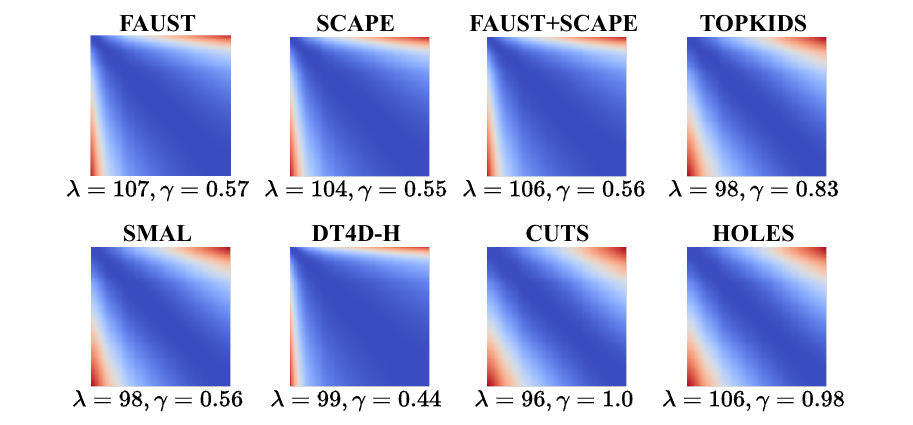}
    \caption{\textbf{Different regularisation strength and structure for different datasets.} The self-adaptive functional map solver enables to adjust the regularisation based on the training data.
    }
    \label{fig:fmap_reg}
    \vspace{-0.5cm}
    \end{center}
\end{figure}
We obverse that the regularisation strength (i.e.\ $\lambda$) for near-isometric shape matching (FAUST, SCAPE) is stronger than the strength for non-isometric shape matching (SMAL, DT4D-H), since in theory functional maps for isometric shape matching are diagonal matrices. In the context of regularisation structure, the funnel-like structure is narrower for topological noisy (TOPKIDS) and partial shapes (CUTS, HOLES).
\section{Limitation and future work}
\label{sec:limitation}
We build upon the existing state-of-the-art method~\cite{cao2023unsupervised} by introducing the self-adaptive functional map solver and the vertex-wise contrastive loss, and thereby achieve the new state of the art on a wide range of benchmark datasets. Yet, there are also some
limitations that give rise to interesting future researches.
Our unsupervised method is applicable in various settings. However, it can not be used for partial-to-partial shape matching. Therefore, it is interesting to investigate how to extend the current framework for partial-to-partial shape matching. 
{For functional map computation, we optimise the two parameters (i.e.\ $\gamma,\lambda$) that control the regularisation strength and structure. Meanwhile, the number of LBO eigenfunctions is also an important parameter for functional map computation. How to automatically select the best number of LBO eigenfunctions is thereby an another interesting future work direction.}   

\section{Conclusion}
\label{sec:conclusion}
We theoretically analyse the relationship between the functional map from the functional map solver and the functional map from the point-wise map. Based on our theoretical analysis, we extend the current state-of-the-art methods. We evaluate our proposed method on diverse shape matching benchmark datasets with different settings and demonstrate the new state-of-the-art performance. {We believe a more accurate and robust non-rigid 3D shape matching method would be beneficial for the shape analysis community to better explore the shape relationship.} 
\section{Acknowledgement}
This work was supported by the Visual Computing Incubator at the University of Bonn.

{
    \small
    \bibliographystyle{ieeenat_fullname}
    \bibliography{main}
}
\clearpage
\setcounter{page}{1}
\maketitlesupplementary
\section{Implementation details}
Our implementation is based on the official code\footnote{\url{https://github.com/dongliangcao/Unsupervised-Learning-of-Robust-Spectral-Shape-Matching}} from~\citet{cao2023unsupervised}. We use the DiffusionNet~\cite{sharp2020diffusionnet} as our feature extractor. The dimension of the output channels $F_{\mathcal{X}}$ is 256 (i.e.\ $c=256$) and the dimension of the LBO eigenfunctions $\Phi_{\mathcal{X}}$ is 200 (i.e.\ $k=200$). In the context of the functional map solver, we initialise the $\lambda=100$ in~\cref{eq:fmap} and the $\gamma=0.5$ in~\cref{eq:re} and~\cref{eq:img}. To compute the point-wise map $\Pi$ based on feature similarity, we use the row-wise softmax operator and set the $\tau=0.07$ in~\cref{eq:soft_corr}. To train the feature extractor and the functional map solver, we set $\lambda_{\mathrm{bij}} = 1.0, \lambda_{\mathrm{orth}} = 1.0$ in~\cref{eq:l_fmap} and $\lambda_{\mathrm{couple}} = 1.0, \lambda_{\mathrm{contrast}} = 10.0$ in~\cref{eq:l_total}, and use the Adam optimiser~\cite{kingma2015adam} with learning rate equal to $10^{-3}$. For inference, we follow the baseline~\cite{cao2023unsupervised} to use test-time-adaptation to refine the matching results.

\section{Qualitative results}
In this section, we show additional qualitative shape matching results of our method.

\begin{figure}[!ht]
    \centering
    \def\rowOnecolumnOne{3-6}
\def\rowOnecolumnTwo{3-8}
\def\rowOnecolumnThree{3-20}
\def\rowOnecolumnFour{3-21}
\def\rowOnecolumnFive{3-30}
\def\rowTwocolumnOne{10-8}
\def\rowTwocolumnTwo{10-12}
\def\rowTwocolumnThree{10-24}
\def\rowTwocolumnFour{10-32}
\def\rowTwocolumnFive{10-36}
\def\rowThreecolumnOne{25-31}
\def\rowThreecolumnTwo{25-4}
\def\rowThreecolumnThree{25-5}
\def\rowThreecolumnFour{25-34}
\def\rowThreecolumnFive{25-38}
\def\rowFourcolumnOne{41-9}
\def\rowFourcolumnTwo{41-25}
\def\rowFourcolumnThree{41-33}
\def\rowFourcolumnFour{41-35}
\def\rowFourcolumnFive{41-29}
\def\hspaceCols{-0.5cm}
\def\height{1.8cm}
\def\width{1.6cm}
\def\heightT{\height}
\def\widthT{\width}
\def\heightQ{\height}
\def\widthQ{\width}
\begin{tabular}{cccccc}%
        \setlength{\tabcolsep}{0pt} 
        \hspace{\hspaceCols}
        \includegraphics[height=\heightT, width=\widthT]{\pathOurs\rowOnecolumnOne\srcEnd}&
        \hspace{\hspaceCols}
        \includegraphics[height=\heightT, width=\widthT]{\pathOurs\rowOnecolumnOne\trgtEnd}&
        \hspace{\hspaceCols}
        \includegraphics[height=\heightT, width=\widthT]{\pathOurs\rowOnecolumnTwo\trgtEnd}&
        \hspace{\hspaceCols}
        \includegraphics[height=\heightT, width=\widthT]{\pathOurs\rowOnecolumnThree\trgtEnd}&
        \hspace{\hspaceCols}
        \includegraphics[height=\heightT, width=\widthT]{\pathOurs\rowOnecolumnFour\trgtEnd}&
        \hspace{\hspaceCols}
        \includegraphics[height=\heightT, width=\widthT]{\pathOurs\rowOnecolumnFive\trgtEnd}
        \\
        \hspace{\hspaceCols}
        \includegraphics[height=\heightT, width=\widthT]{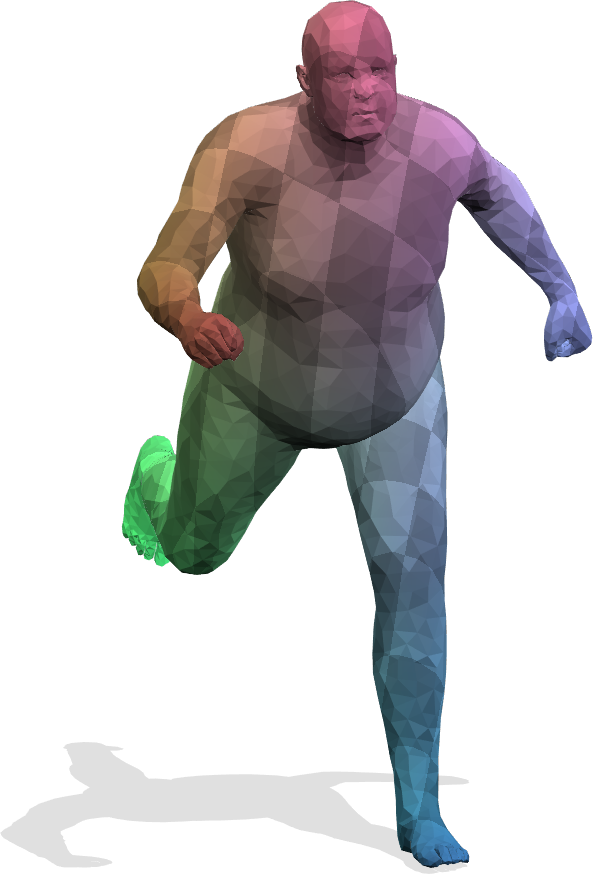}&
        \hspace{\hspaceCols}
        \includegraphics[height=\heightT, width=\widthT]{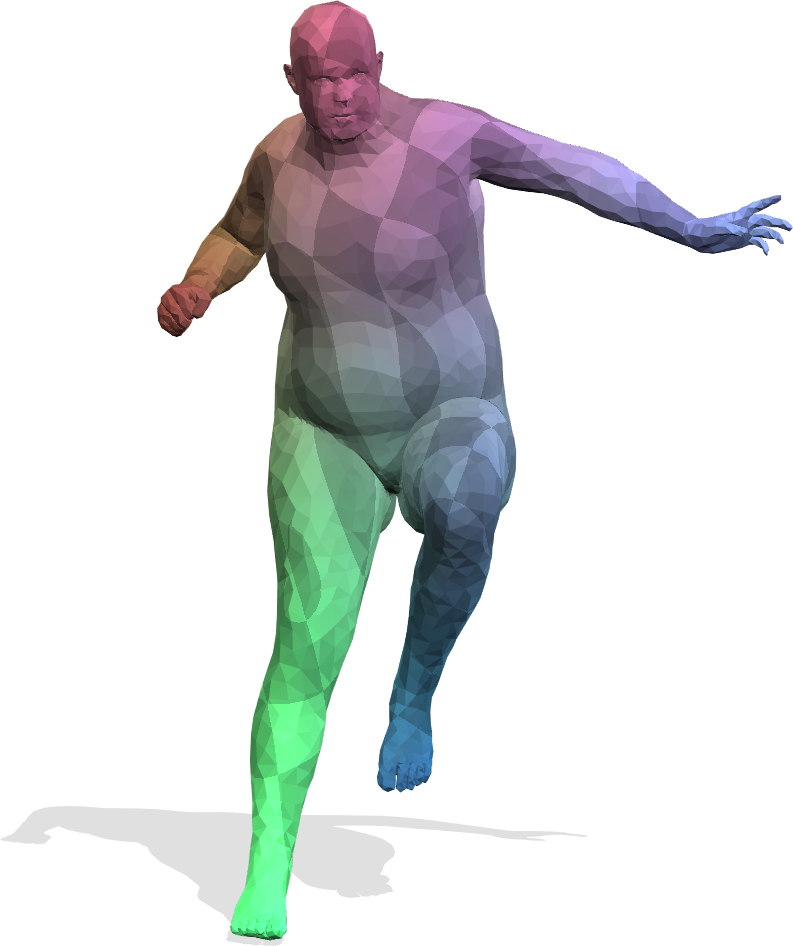}&
        \hspace{\hspaceCols}
        \includegraphics[height=\heightT, width=\widthT]{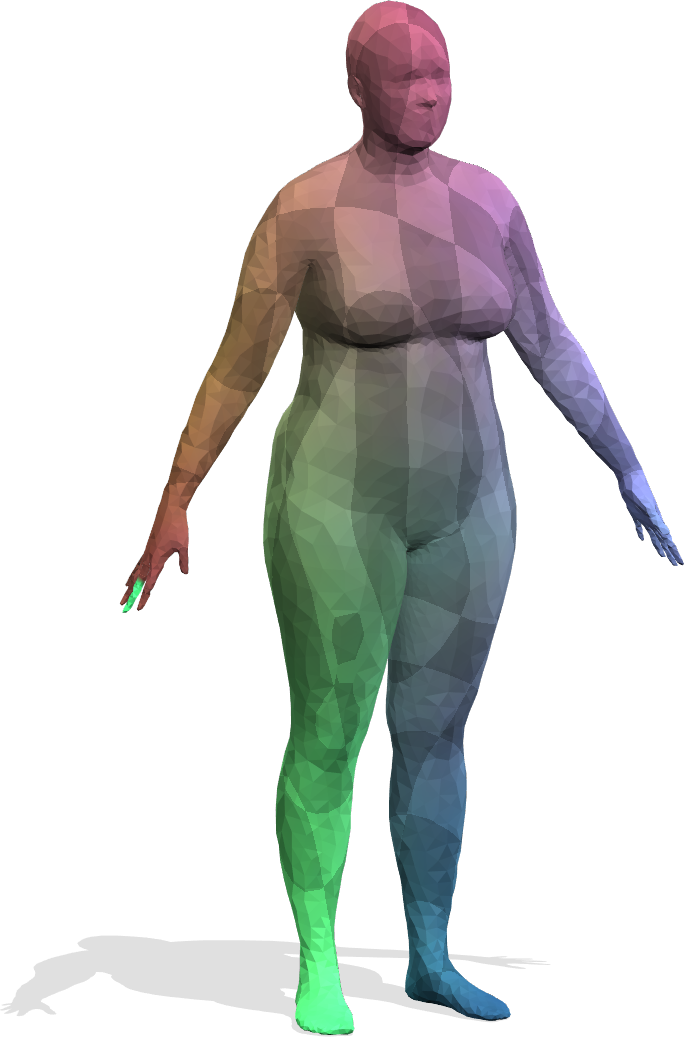}&
        \hspace{\hspaceCols}
        \includegraphics[height=\heightT, width=\widthT]{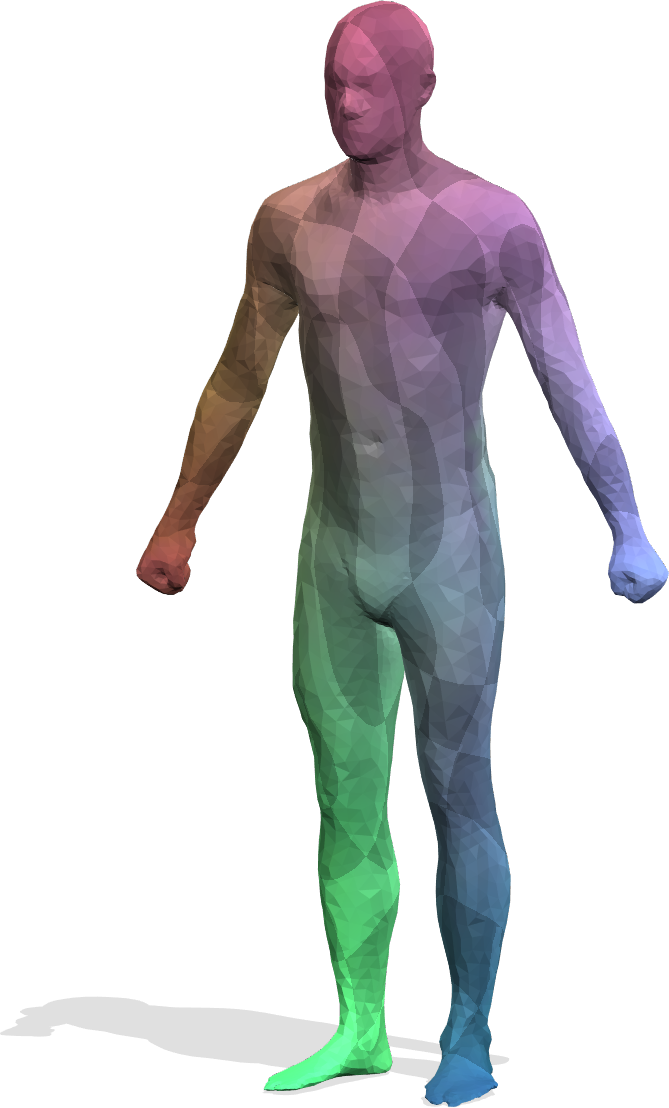}&
        \hspace{\hspaceCols}
        \includegraphics[height=\heightT, width=\widthT]{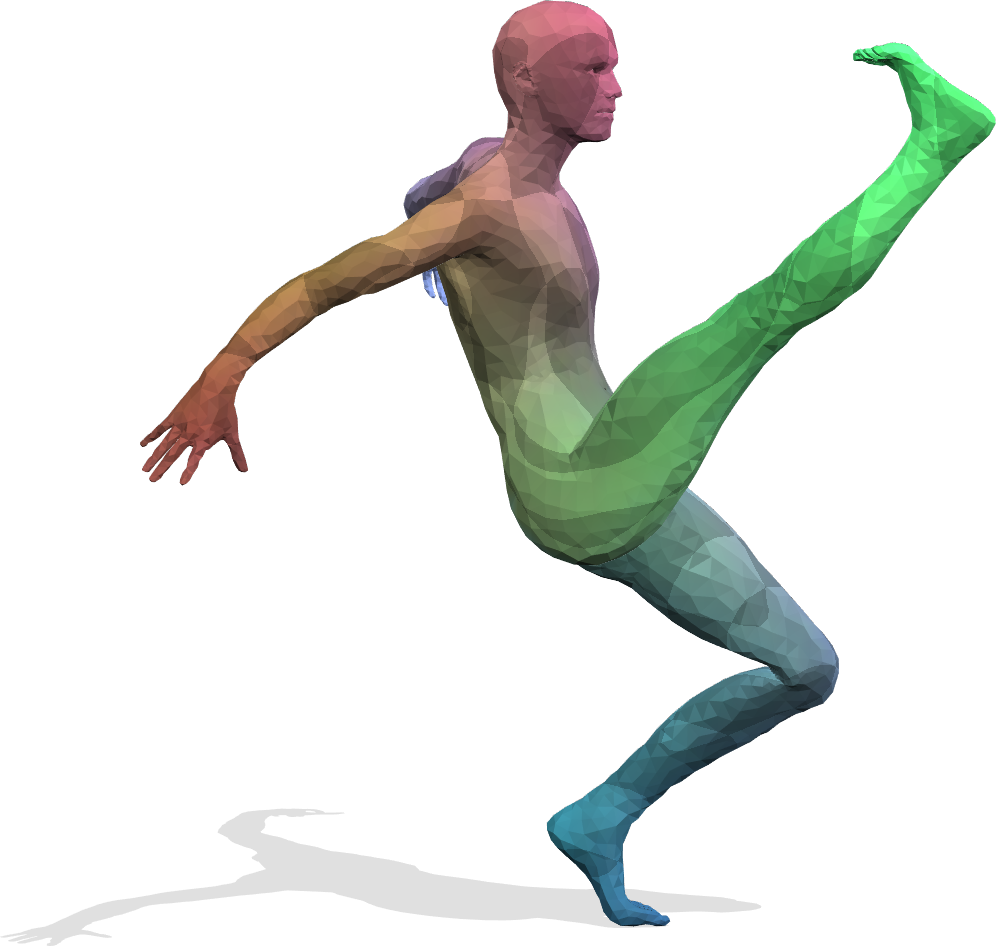}&
        \hspace{\hspaceCols}
        \includegraphics[height=\heightT, width=\widthT]{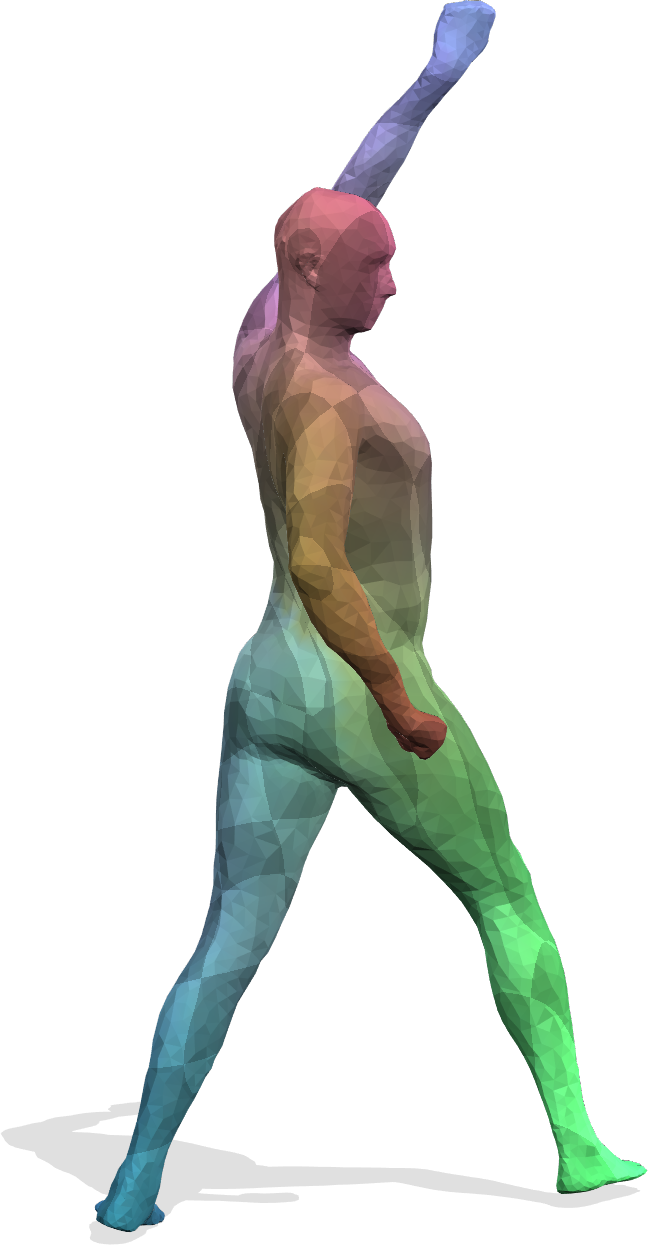} \\
        \hspace{\hspaceCols}
        \includegraphics[height=\heightT, width=\widthT]{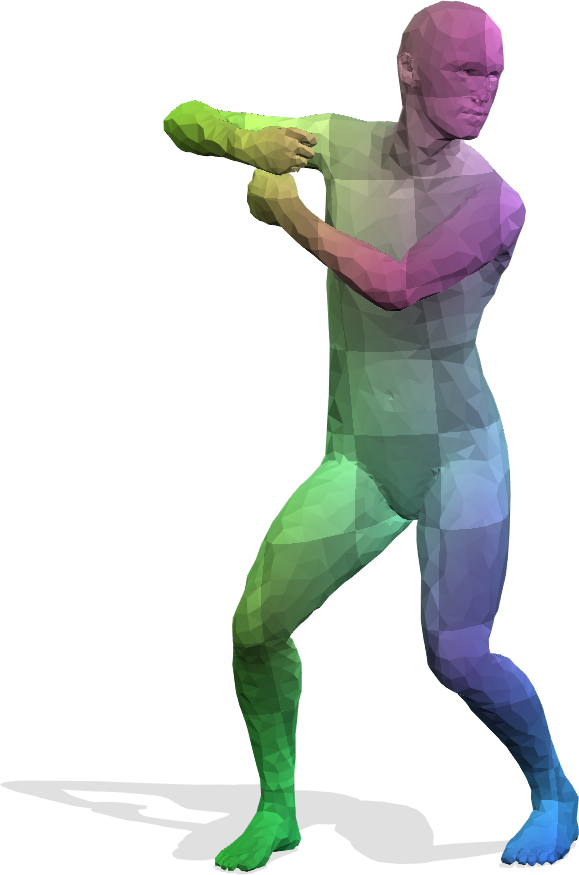}&
        \hspace{\hspaceCols}
        \includegraphics[height=\heightT, width=\widthT]{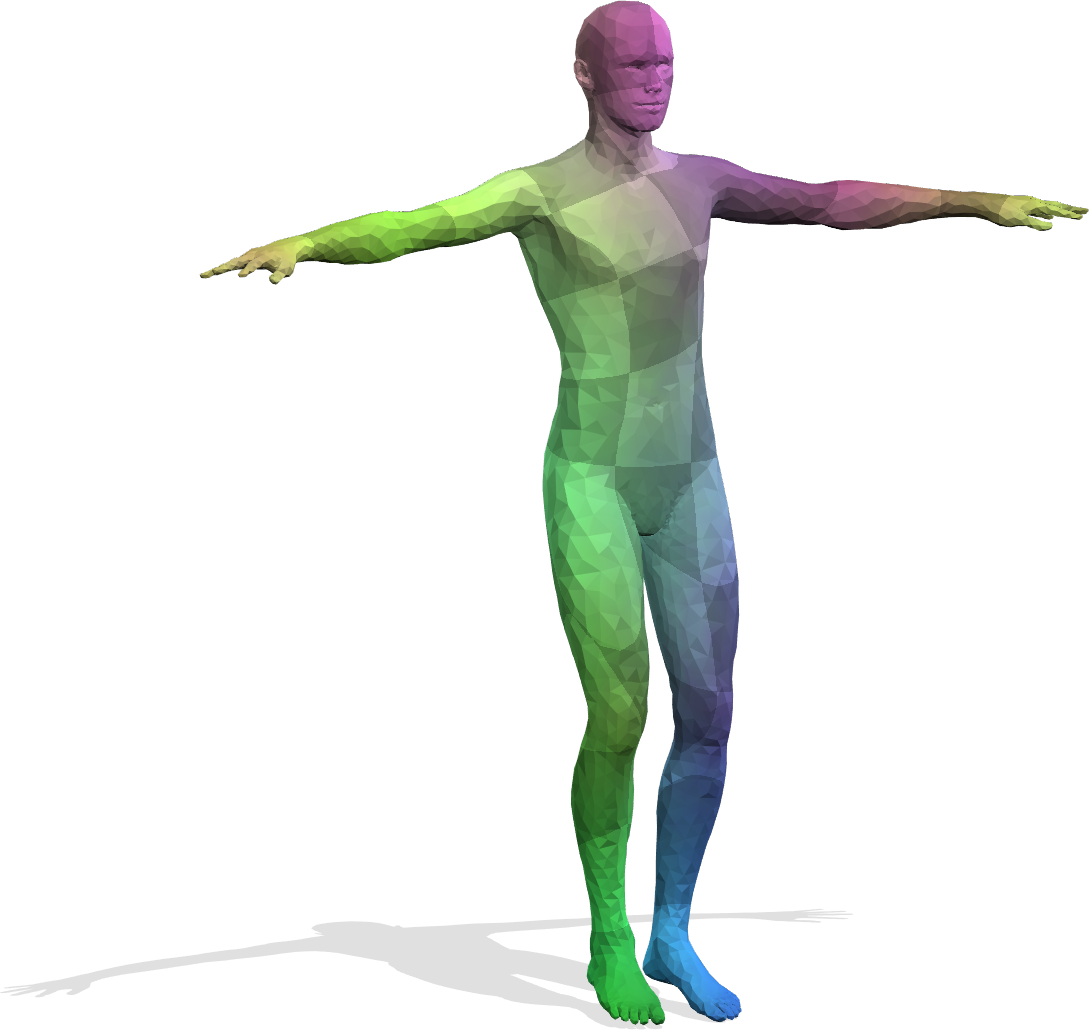}&
        \hspace{\hspaceCols}
        \includegraphics[height=\heightT, width=\widthT]{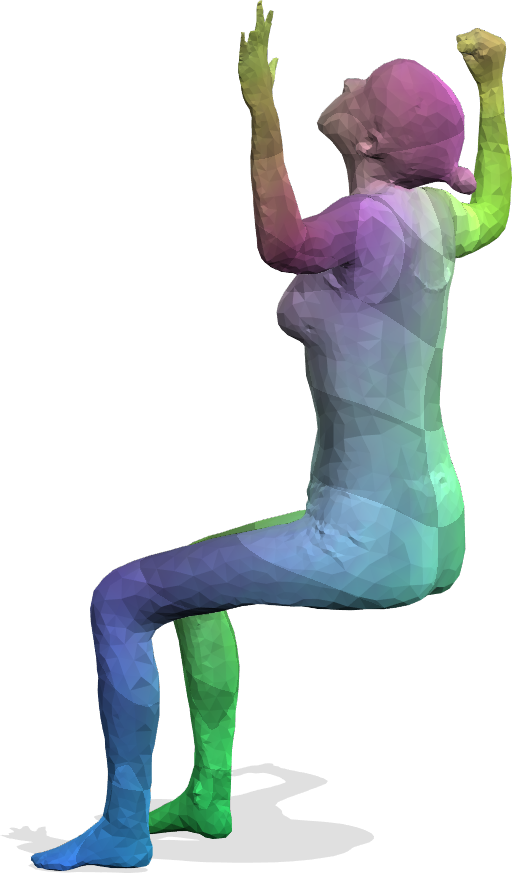}&
        \hspace{\hspaceCols}
        \includegraphics[height=\heightT, width=\widthT]{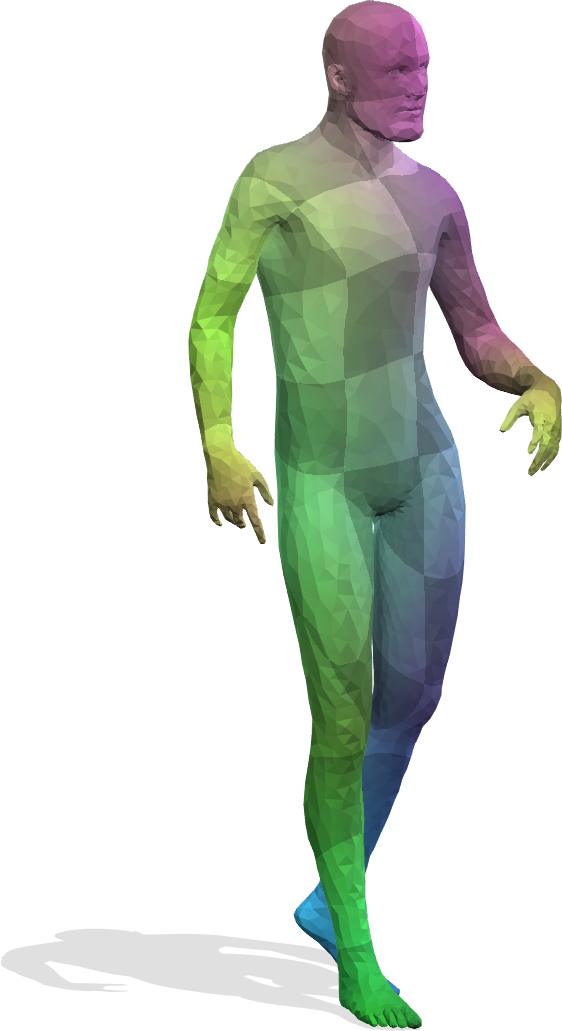}&
        \hspace{\hspaceCols}
        \includegraphics[height=\heightT, width=\widthT]{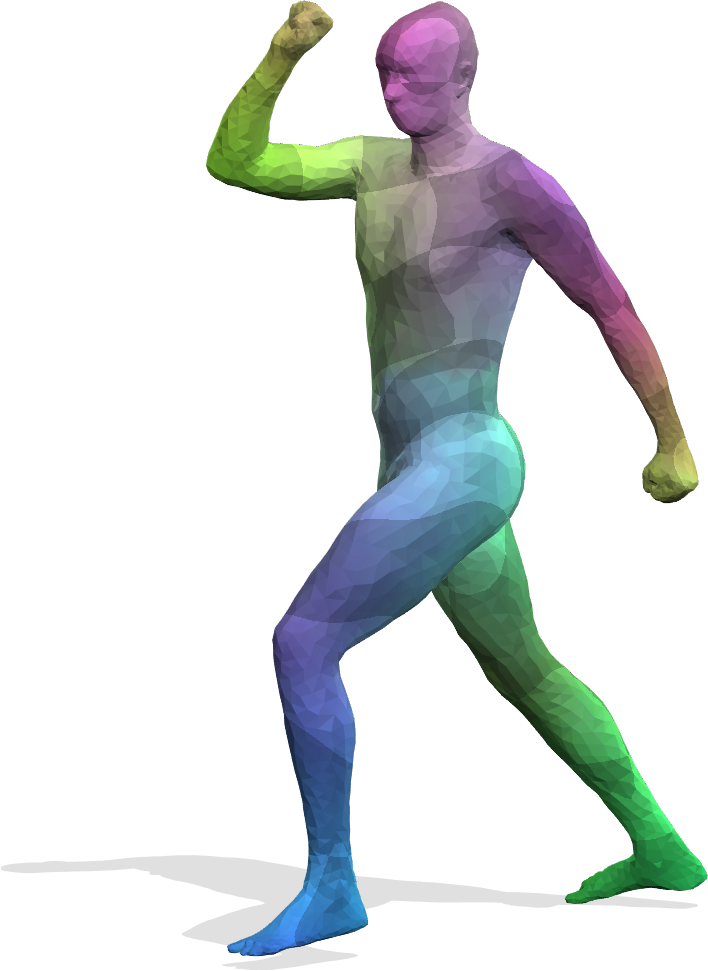}&
        \hspace{\hspaceCols}
        \includegraphics[height=\heightT, width=\widthT]{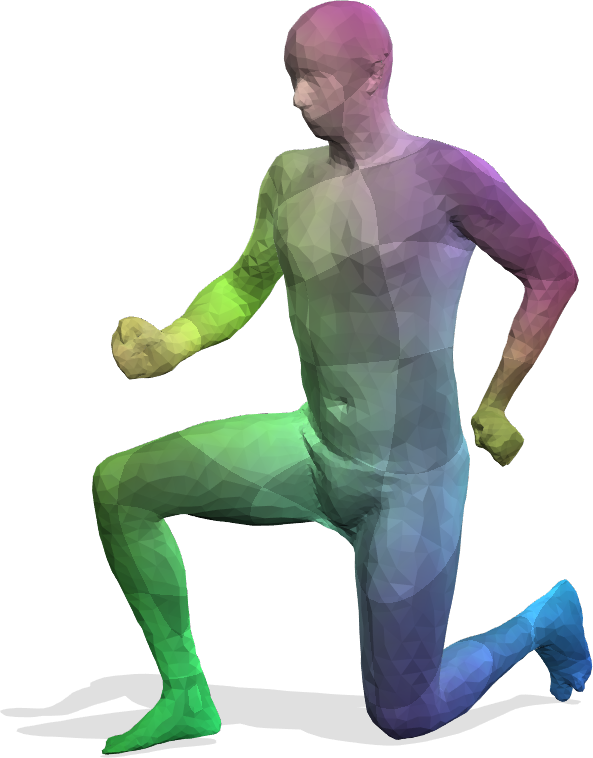} \\
        \hspace{\hspaceCols}
        \includegraphics[height=\heightT, width=\widthT]{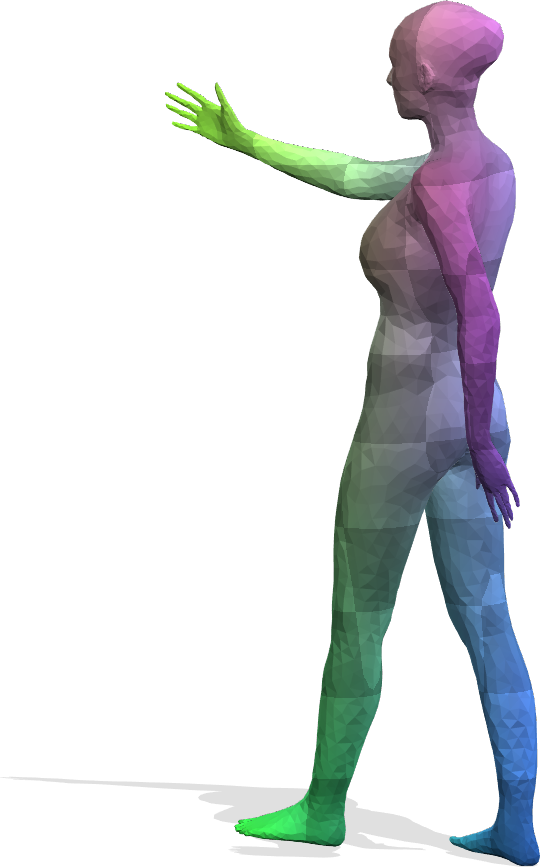}&
        \hspace{\hspaceCols}
        \includegraphics[height=\heightT, width=\widthT]{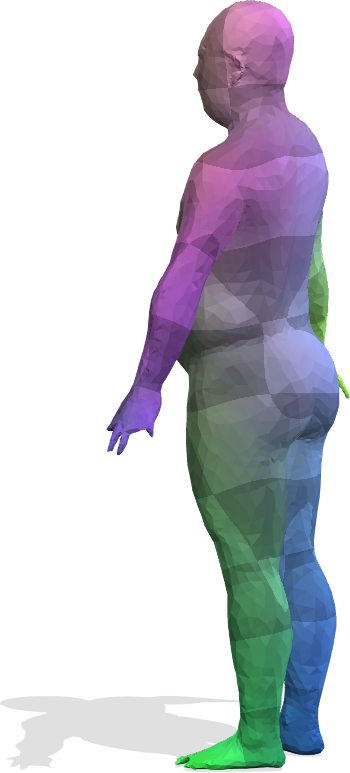}&
        \hspace{\hspaceCols}
        \includegraphics[height=\heightT, width=\widthT]{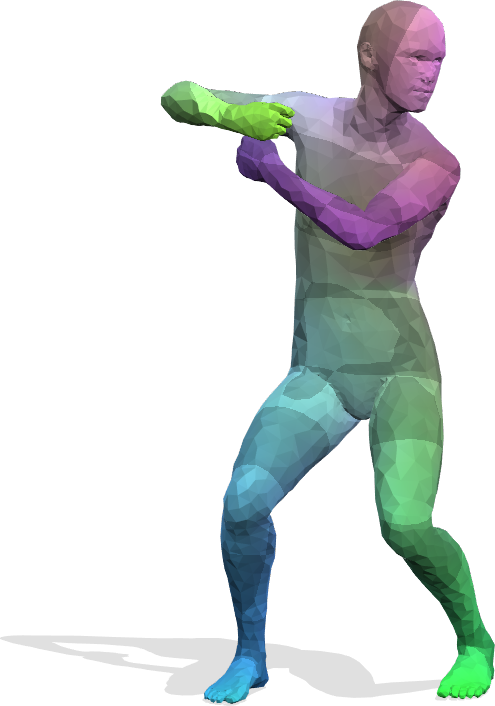}&
        \hspace{\hspaceCols}
        \includegraphics[height=\heightT, width=\widthT]{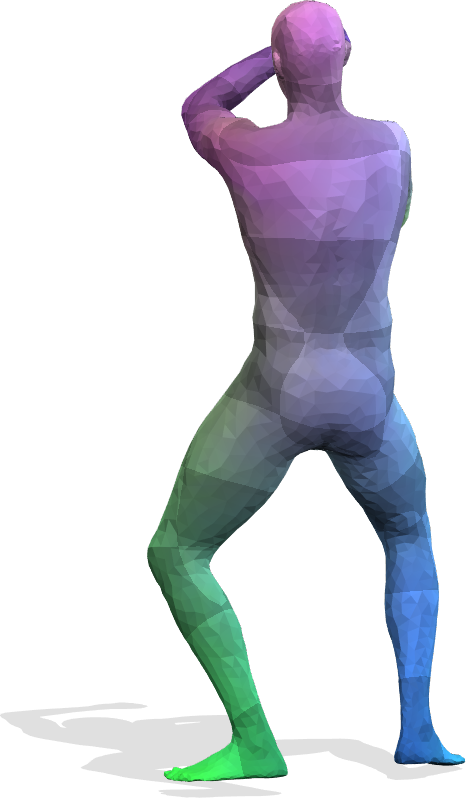}&
        \hspace{\hspaceCols}
        \includegraphics[height=\heightT, width=\widthT]{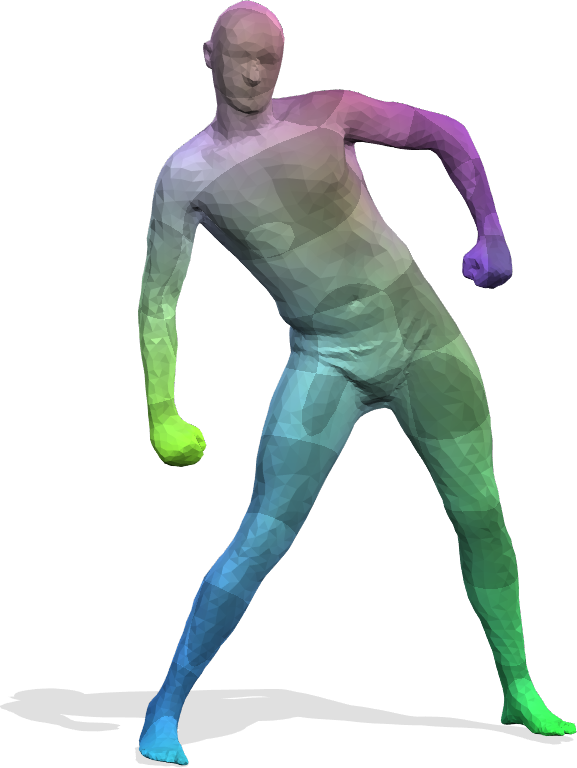}&
        \hspace{\hspaceCols}
        \includegraphics[height=\heightT, width=\widthT]{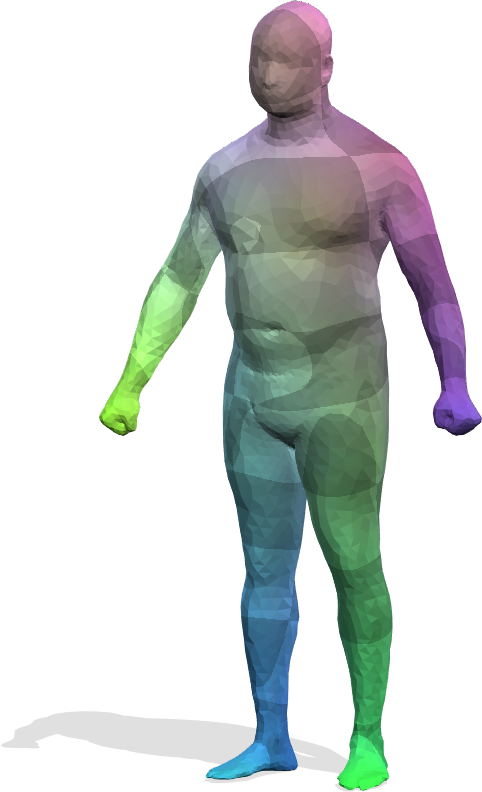}
    \end{tabular}
    \caption{\textbf{Qualitative results of our method on the SHREC'19 dataset.} The leftmost shape on each row is the reference shape to be matched by other shapes. Our method obtains accurate matchings for human shapes with diverse poses and appearances.}
    \label{fig:shrec19_qualitative}
\end{figure}

\begin{figure}[!ht]
    \centering
    \def\rowOnecolumnOne{kid00-kid01}
\def\rowOnecolumnTwo{kid00-kid02}
\def\rowOnecolumnThree{kid00-kid03}
\def\rowOnecolumnFour{kid00-kid04}
\def\rowOnecolumnFive{kid00-kid05}
\def\rowOnecolumnSix{kid00-kid24}
\def\rowTwocolumnOne{kid00-kid06}
\def\rowTwocolumnTwo{kid00-kid07}
\def\rowTwocolumnThree{kid00-kid08}
\def\rowTwocolumnFour{kid00-kid09}
\def\rowTwocolumnFive{kid00-kid10}
\def\rowTwocolumnSix{kid00-kid16}
\def\hspaceCols{-0.6cm}
\def\height{1.7cm}
\def\width{1.75cm}
\def\heightT{\height}
\def\widthT{\width}
\def\heightQ{\height}
\def\widthQ{\width}
\begin{tabular}{cccccc}%
        \setlength{\tabcolsep}{0pt} 
        \hspace{\hspaceCols}
        \includegraphics[height=\heightT, width=\widthT]{\pathOurs\rowOnecolumnOne\srcEnd}&
        \hspace{\hspaceCols}
        \includegraphics[height=\heightT, width=\widthT]{\pathOurs\rowOnecolumnTwo\trgtEnd}&
        \hspace{\hspaceCols}
        \includegraphics[height=\heightT, width=\widthT]{\pathOurs\rowOnecolumnThree\trgtEnd}&
        \hspace{\hspaceCols}
        \includegraphics[height=\heightT, width=\widthT]{\pathOurs\rowOnecolumnFour\trgtEnd}&
        \hspace{\hspaceCols}
        \includegraphics[height=\heightT, width=\widthT]{\pathOurs\rowOnecolumnFive\trgtEnd}&
        \hspace{\hspaceCols}
        \includegraphics[height=\heightT, width=\widthT]{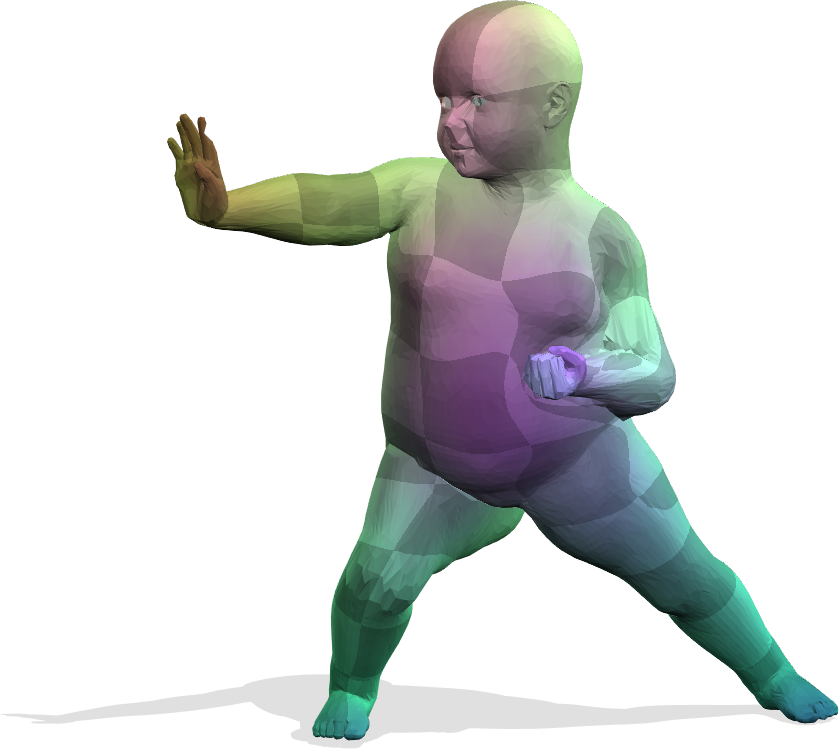}
        \\
        \hspace{\hspaceCols}
        \includegraphics[height=\heightT, width=\widthT]{\pathOurs\rowTwocolumnOne\trgtEnd}&
        \hspace{\hspaceCols}
        \includegraphics[height=\heightT, width=\widthT]{\pathOurs\rowTwocolumnTwo\trgtEnd}&
        \hspace{\hspaceCols}
        \includegraphics[height=\heightT, width=\widthT]{\pathOurs\rowTwocolumnThree\trgtEnd}&
        \hspace{\hspaceCols}
        \includegraphics[height=\heightT, width=\widthT]{\pathOurs\rowTwocolumnFour\trgtEnd}&
        \hspace{\hspaceCols}
        \includegraphics[height=\heightT, width=\widthT]{\pathOurs\rowTwocolumnFive\trgtEnd}&
        \hspace{\hspaceCols}
        \includegraphics[height=\heightT, width=\widthT]{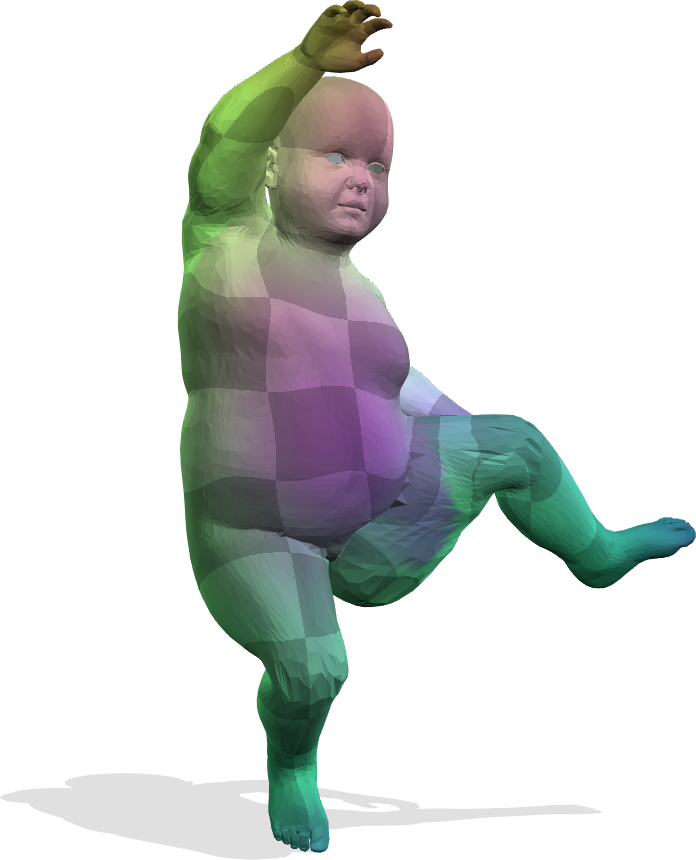}
    \end{tabular}
    \caption{\textbf{Qualitative results of our method on the TOPKIDS dataset.} The top-left shape is the reference shape to be matched by other shapes. Our method is robust against topological noise.}
    \label{fig:topkids_qualitative}
     \vspace{-0.5cm}
\end{figure}

\begin{figure}[!ht]
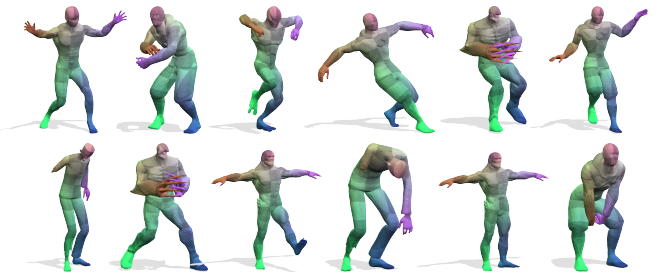

    \centering
    \def\rowOnecolumnOne{Standing2HMagicAttack01034-BrooklynUprock134}
\def\rowOnecolumnTwo{Standing2HMagicAttack01034-BrooklynUprock134}
\def\rowOnecolumnThree{Standing2HMagicAttack01034-DancingRunningMan259}
\def\rowOnecolumnFour{Standing2HMagicAttack01034-BrooklynUprock146}
\def\rowOnecolumnFive{Standing2HMagicAttack01034-GoalkeeperScoop046}
\def\rowOnecolumnSix{Standing2HMagicAttack01034-Floating099}
\def\rowTwocolumnOne{Standing2HMagicAttack01034-Falling271}
\def\rowTwocolumnTwo{Standing2HMagicAttack01034-GoalkeeperScoop065}
\def\rowTwocolumnThree{Standing2HMagicAttack01034-InvertedDoubleKickToKipUp189}
\def\rowTwocolumnFour{Standing2HMagicAttack01034-Falling237}
\def\rowTwocolumnFive{Standing2HMagicAttack01034-InvertedDoubleKickToKipUp255}
\def\rowTwocolumnSix{Standing2HMagicAttack01034-KettlebellSwing057}
\def\hspaceCols{-0.75cm}
\def\height{1.7cm}
\def\width{1.75cm}
\def\heightT{\height}
\def\widthT{\width}
\def\heightQ{\height}
\def\widthQ{\width}
\begin{tabular}{cccccc}%
        \setlength{\tabcolsep}{0pt} 
        \hspace{\hspaceCols}
        \includegraphics[height=\heightT, width=\widthT]{\pathOurs\rowOnecolumnOne\srcEnd}&
        \hspace{\hspaceCols}
        \includegraphics[height=\heightT, width=\widthT]{\pathOurs\rowOnecolumnTwo\trgtEnd}&
        \hspace{\hspaceCols}
        \includegraphics[height=\heightT, width=\widthT]{\pathOurs\rowOnecolumnThree\trgtEnd}&
        \hspace{\hspaceCols}
        \includegraphics[height=\heightT, width=\widthT]{\pathOurs\rowOnecolumnFour\trgtEnd}&
        \hspace{\hspaceCols}
        \includegraphics[height=\heightT, width=\widthT]{\pathOurs\rowOnecolumnFive\trgtEnd}&
        \hspace{\hspaceCols}
        \includegraphics[height=\heightT, width=\widthT]{\pathOurs\rowOnecolumnSix\trgtEnd}
        \\
        \hspace{\hspaceCols}
        \includegraphics[height=\heightT, width=\widthT]{\pathOurs\rowTwocolumnOne\trgtEnd}&
        \hspace{\hspaceCols}
        \includegraphics[height=\heightT, width=\widthT]{\pathOurs\rowTwocolumnTwo\trgtEnd}&
        \hspace{\hspaceCols}
        \includegraphics[height=\heightT, width=\widthT]{\pathOurs\rowTwocolumnThree\trgtEnd}&
        \hspace{\hspaceCols}
        \includegraphics[height=\heightT, width=\widthT]{\pathOurs\rowTwocolumnFour\trgtEnd}&
        \hspace{\hspaceCols}
        \includegraphics[height=\heightT, width=\widthT]{\pathOurs\rowTwocolumnFive\trgtEnd}&
        \hspace{\hspaceCols}
        \includegraphics[height=\heightT, width=\widthT]{\pathOurs\rowTwocolumnSix\trgtEnd}
    \end{tabular}
    \caption{\textbf{Qualitative results of our method on the DT4D-H dataset.} The top-left shape is the reference shape to be matched by other shapes. Our method obtains accurate correspondences for non-isometric deformed shapes.}
    \label{fig:dt4d_qualitative}
    \vspace{-0.5cm}
\end{figure}

\begin{figure}[!ht]
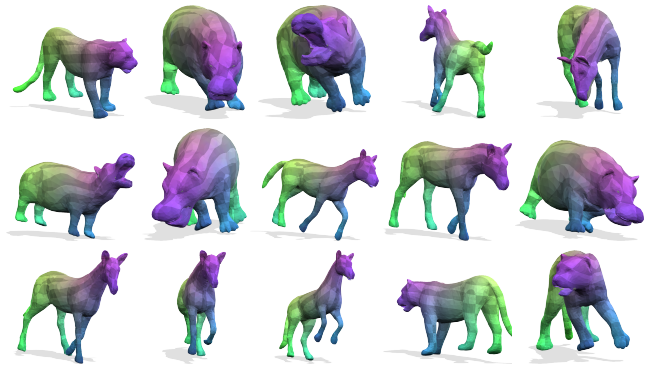

    \centering
    \def\rowOnecolumnOne{cougar_01-hippo_01}
\def\rowOnecolumnTwo{cougar_01-hippo_01}
\def\rowOnecolumnThree{cougar_01-hippo_02}
\def\rowOnecolumnFour{cougar_01-horse_02}
\def\rowTwocolumnFive{cougar_01-hippo_04}
\def\rowTwocolumnOne{cougar_01-hippo_05}
\def\rowTwocolumnTwo{cougar_01-hippo_06}
\def\rowTwocolumnThree{cougar_01-horse_01}
\def\rowTwocolumnFour{cougar_01-horse_03}
\def\rowOnecolumnFive{cougar_01-horse_05}
\def\rowThreecolumnOne{cougar_01-horse_06}
\def\rowThreecolumnTwo{cougar_01-horse_07}
\def\rowThreecolumnThree{cougar_01-horse_08}
\def\rowThreecolumnFour{cougar_01-cougar_01}
\def\rowThreecolumnFive{cougar_01-cougar_04}
\def\hspaceCols{-0.45cm}
\def\height{1.5cm}
\def\width{1.7cm}
\def\heightT{\height}
\def\widthT{\width}
\def\heightQ{\height}
\def\widthQ{\width}
\begin{tabular}{ccccc}%
        \setlength{\tabcolsep}{0pt} 
        \hspace{\hspaceCols}
        \includegraphics[height=\heightT, width=\widthT]{\pathOurs\rowOnecolumnOne\srcEnd}&
        \hspace{\hspaceCols}
        \includegraphics[height=\heightT, width=\widthT]{\pathOurs\rowOnecolumnTwo\trgtEnd}&
        \hspace{\hspaceCols}
        \includegraphics[height=\heightT, width=\widthT]{\pathOurs\rowOnecolumnThree\trgtEnd}&
        \hspace{\hspaceCols}
        \includegraphics[height=\heightT, width=\widthT]{\pathOurs\rowOnecolumnFour\trgtEnd}&
        \hspace{\hspaceCols}
        \includegraphics[height=\heightT, width=\widthT]{\pathOurs\rowOnecolumnFive\trgtEnd}
        \\
        \hspace{\hspaceCols}
        \includegraphics[height=\heightT, width=\widthT]{\pathOurs\rowTwocolumnOne\trgtEnd}&
        \hspace{\hspaceCols}
        \includegraphics[height=\heightT, width=\widthT]{\pathOurs\rowTwocolumnTwo\trgtEnd}&
        \hspace{\hspaceCols}
        \includegraphics[height=\heightT, width=\widthT]{\pathOurs\rowTwocolumnThree\trgtEnd}&
        \hspace{\hspaceCols}
        \includegraphics[height=\heightT, width=\widthT]{\pathOurs\rowTwocolumnFour\trgtEnd}&
        \hspace{\hspaceCols}
        \includegraphics[height=\heightT, width=\widthT]{\pathOurs\rowTwocolumnFive\trgtEnd}
        \\
        \hspace{\hspaceCols}
        \includegraphics[height=\heightT, width=\widthT]{\pathOurs\rowThreecolumnOne\trgtEnd}&
        \hspace{\hspaceCols}
        \includegraphics[height=\heightT, width=\widthT]{\pathOurs\rowThreecolumnTwo\trgtEnd}&
        \hspace{\hspaceCols}
        \includegraphics[height=\heightT, width=\widthT]{\pathOurs\rowThreecolumnThree\trgtEnd}&
        \hspace{\hspaceCols}
        \includegraphics[height=\heightT, width=\widthT]{\pathOurs\rowThreecolumnFour\trgtEnd}&
        \hspace{\hspaceCols}
        \includegraphics[height=\heightT, width=\widthT]{\pathOurs\rowThreecolumnFive\trgtEnd}
    \end{tabular}
    \caption{\textbf{Qualitative results of our method on the SMAL dataset.} The top-left shape is the reference shape to be matched by other shapes. Our method obtains accurate correspondences for shapes in different classes.}
    \label{fig:smal_qualitative}
\end{figure}

\begin{figure}[!ht]
    \centering
    \def\rowOnecolumnOne{cat-cuts_cat_shape_1}
\def\rowOnecolumnTwo{cat-cuts_cat_shape_13}
\def\rowOnecolumnThree{cat-cuts_cat_shape_5}
\def\rowOnecolumnFour{cat-cuts_cat_shape_6}
\def\rowTwocolumnFive{cat-cuts_cat_shape_7}
\def\rowTwocolumnTwo{cat-cuts_cat_shape_2}
\def\rowTwocolumnThree{cat-cuts_cat_shape_11}
\def\rowTwocolumnFour{cat-cuts_cat_shape_14}
\def\rowOnecolumnFive{cat-cuts_cat_shape_21}
\def\rowThreecolumnTwo{cat-cuts_cat_shape_25}
\def\rowThreecolumnThree{cat-cuts_cat_shape_17}
\def\rowThreecolumnFour{cat-cuts_cat_shape_29}
\def\rowThreecolumnFive{cat-cuts_cat_shape_26}

\def\rowFourcolumnOne{centaur-cuts_centaur_shape_1}
\def\rowFourcolumnTwo{centaur-cuts_centaur_shape_1}
\def\rowFourcolumnThree{centaur-cuts_centaur_shape_2}
\def\rowFourcolumnFour{centaur-cuts_centaur_shape_4}
\def\rowFourcolumnFive{centaur-cuts_centaur_shape_3}
\def\rowFivecolumnTwo{centaur-cuts_centaur_shape_8}
\def\rowFivecolumnThree{centaur-cuts_centaur_shape_9}
\def\rowFivecolumnFour{centaur-cuts_centaur_shape_11}
\def\rowFivecolumnFive{centaur-cuts_centaur_shape_12}

\def\rowSixcolumnOne{horse-cuts_horse_shape_1}
\def\rowSixcolumnTwo{horse-cuts_horse_shape_1}
\def\rowSixcolumnThree{horse-cuts_horse_shape_3}
\def\rowSixcolumnFour{horse-cuts_horse_shape_4}
\def\rowSixcolumnFive{horse-cuts_horse_shape_5}
\def\rowSevencolumnTwo{horse-cuts_horse_shape_6}
\def\rowSevencolumnThree{horse-cuts_horse_shape_7}
\def\rowSevencolumnFour{horse-cuts_horse_shape_8}
\def\rowSevencolumnFive{horse-cuts_horse_shape_9}
\def\rowEightcolumnTwo{horse-cuts_horse_shape_10}
\def\rowEightcolumnThree{horse-cuts_horse_shape_12}
\def\rowEightcolumnFour{horse-cuts_horse_shape_2}
\def\rowEightcolumnFive{horse-cuts_horse_shape_16}

\def\rowNinecolumnOne{dog-cuts_dog_shape_1}
\def\rowNinecolumnTwo{dog-cuts_dog_shape_1}
\def\rowNinecolumnThree{dog-cuts_dog_shape_19}
\def\rowNinecolumnFour{dog-cuts_dog_shape_2}
\def\rowNinecolumnFive{dog-cuts_dog_shape_16}
\def\rowTencolumnTwo{dog-cuts_dog_shape_3}
\def\rowTencolumnThree{dog-cuts_dog_shape_15}
\def\rowTencolumnFour{dog-cuts_dog_shape_5}
\def\rowTencolumnFive{dog-cuts_dog_shape_13}
\def\rowElevencolumnTwo{dog-cuts_dog_shape_7}
\def\rowElevencolumnThree{dog-cuts_dog_shape_12}
\def\rowElevencolumnFour{dog-cuts_dog_shape_10}
\def\rowElevencolumnFive{dog-cuts_dog_shape_9}

\def\rowTwelvecolumnOne{wolf-cuts_wolf_shape_1}
\def\rowTwelvecolumnTwo{wolf-cuts_wolf_shape_1}
\def\rowTwelvecolumnThree{wolf-cuts_wolf_shape_2}
\def\rowTwelvecolumnFour{wolf-cuts_wolf_shape_5}
\def\rowTwelvecolumnFive{wolf-cuts_wolf_shape_3}

\def\hspaceCols{-0.45cm}
\def\height{1.5cm}
\def\width{1.7cm}
\def\heightT{\height}
\def\widthT{\width}
\def\heightQ{\height}
\def\widthQ{\width}
\begin{tabular}{ccccc}%
        \setlength{\tabcolsep}{0pt} 
        \hspace{\hspaceCols}
        \includegraphics[height=\heightT, width=\widthT]{\pathOurs\rowOnecolumnOne\srcEnd}&
        \hspace{\hspaceCols}
        \includegraphics[height=\heightT, width=\widthT]{\pathOurs\rowOnecolumnTwo\trgtEnd}&
        \hspace{\hspaceCols}
        \includegraphics[height=\heightT, width=\widthT]{\pathOurs\rowOnecolumnThree\trgtEnd}&
        \hspace{\hspaceCols}
        \includegraphics[height=\heightT, width=\widthT]{\pathOurs\rowOnecolumnFour\trgtEnd}&
        \hspace{\hspaceCols}
        \includegraphics[height=\heightT, width=\widthT]{\pathOurs\rowOnecolumnFive\trgtEnd}
        \\
        &
        \hspace{\hspaceCols}
        \includegraphics[height=\heightT, width=\widthT]{\pathOurs\rowTwocolumnTwo\trgtEnd}&
        \hspace{\hspaceCols}
        \includegraphics[height=\heightT, width=\widthT]{\pathOurs\rowTwocolumnThree\trgtEnd}&
        \hspace{\hspaceCols}
        \includegraphics[height=\heightT, width=\widthT]{\pathOurs\rowTwocolumnFour\trgtEnd}&
        \hspace{\hspaceCols}
        \includegraphics[height=\heightT, width=\widthT]{\pathOurs\rowTwocolumnFive\trgtEnd}
        \\
        &
        \hspace{\hspaceCols}
        \includegraphics[height=\heightT, width=\widthT]{\pathOurs\rowThreecolumnTwo\trgtEnd}&
        \hspace{\hspaceCols}
        \includegraphics[height=\heightT, width=\widthT]{\pathOurs\rowThreecolumnThree\trgtEnd}&
        \hspace{\hspaceCols}
        \includegraphics[height=\heightT, width=\widthT]{\pathOurs\rowThreecolumnFour\trgtEnd}&
        \hspace{\hspaceCols}
        \includegraphics[height=\heightT, width=\widthT]{\pathOurs\rowThreecolumnFive\trgtEnd} \\

        \hspace{\hspaceCols}
        \includegraphics[height=\heightT, width=\widthT]{\pathOurs\rowFourcolumnOne\srcEnd}&
        \hspace{\hspaceCols}
        \includegraphics[height=\heightT, width=\widthT]{\pathOurs\rowFourcolumnTwo\trgtEnd}&
        \hspace{\hspaceCols}
        \includegraphics[height=\heightT, width=\widthT]{\pathOurs\rowFourcolumnThree\trgtEnd}&
        \hspace{\hspaceCols}
        \includegraphics[height=\heightT, width=\widthT]{\pathOurs\rowFourcolumnFour\trgtEnd}&
        \hspace{\hspaceCols}
        \includegraphics[height=\heightT, width=\widthT]{\pathOurs\rowFourcolumnFive\trgtEnd}
        \\
        &
        \hspace{\hspaceCols}
        \includegraphics[height=\heightT, width=\widthT]{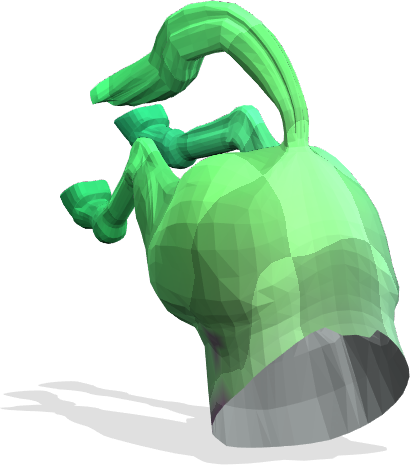}&
        \hspace{\hspaceCols}
        \includegraphics[height=\heightT, width=\widthT]{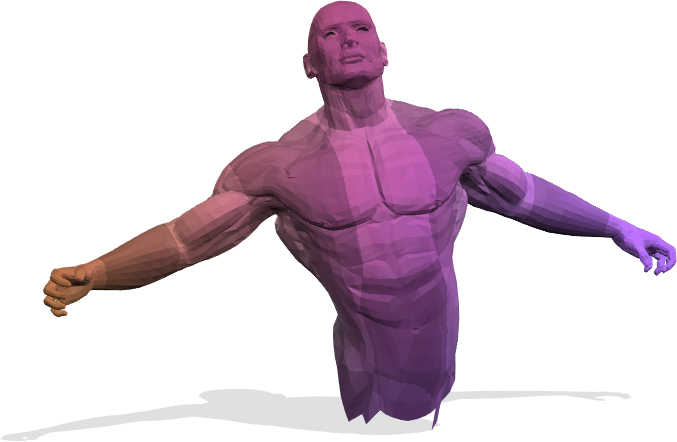}&
        \hspace{\hspaceCols}
        \includegraphics[height=\heightT, width=\widthT]{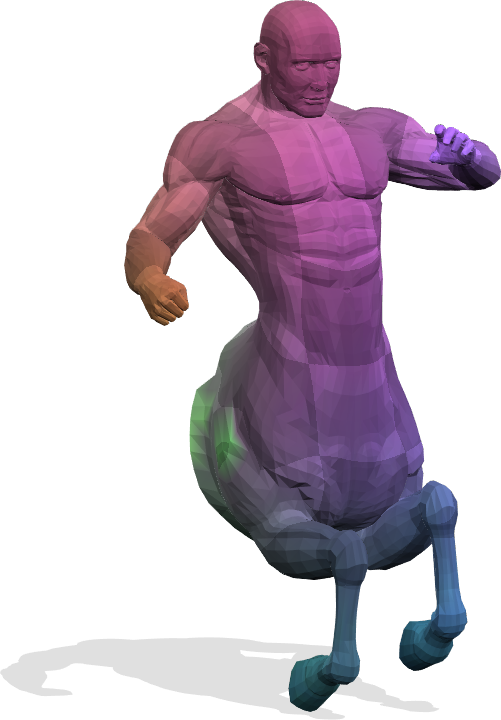}&
        \hspace{\hspaceCols}
        \includegraphics[height=\heightT, width=\widthT]{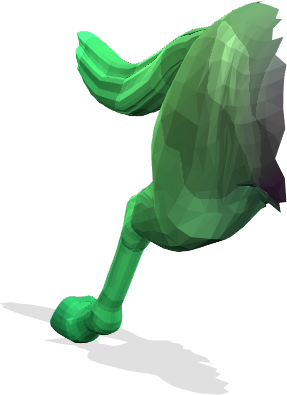} \\
        \hspace{\hspaceCols}
        \includegraphics[height=\heightT, width=\widthT]{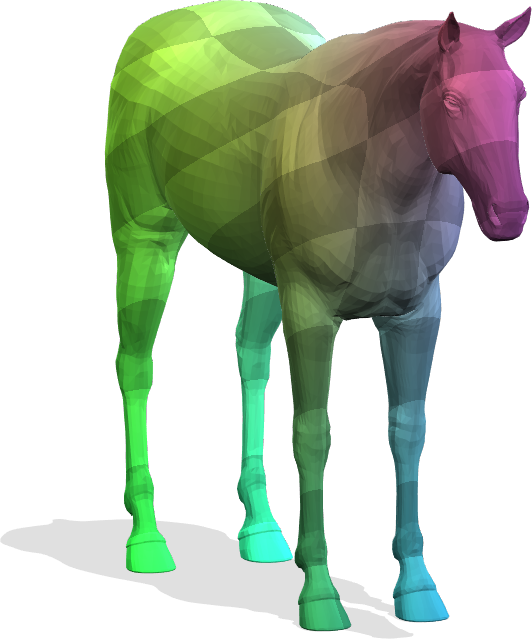}&
        \hspace{\hspaceCols}
        \includegraphics[height=\heightT, width=\widthT]{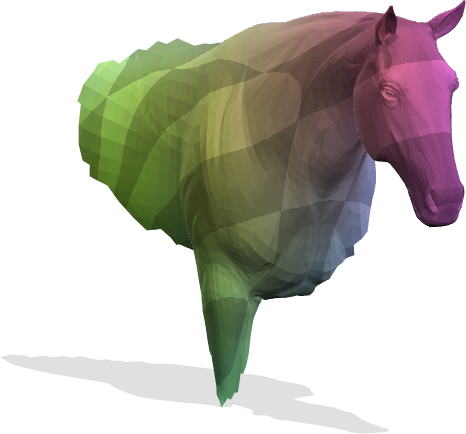}&
        \hspace{\hspaceCols}
        \includegraphics[height=\heightT, width=\widthT]{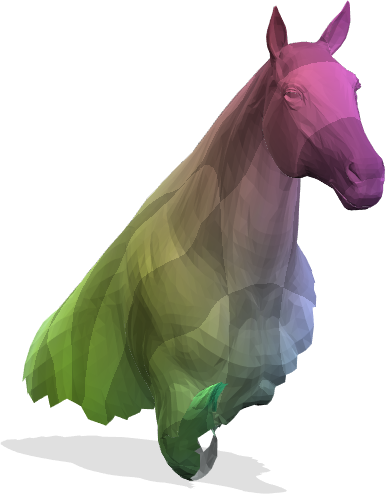}&
        \hspace{\hspaceCols}
        \includegraphics[height=\heightT, width=\widthT]{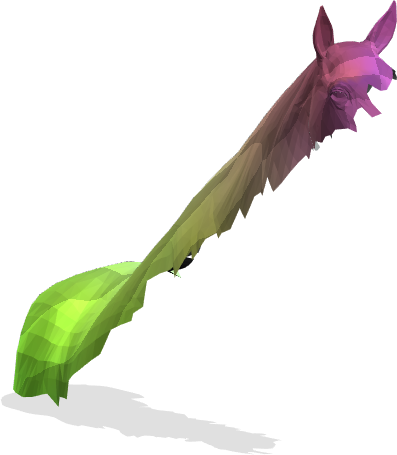}&
        \hspace{\hspaceCols}
        \includegraphics[height=\heightT, width=\widthT]{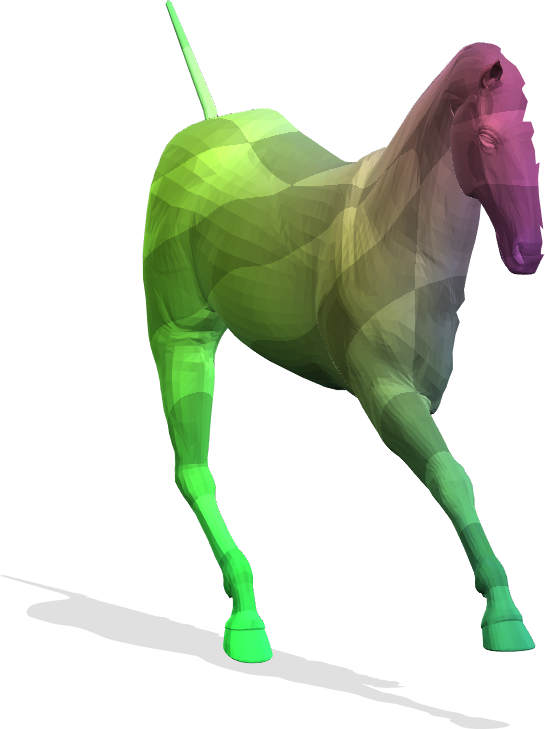} \\
        &
        \hspace{\hspaceCols}
        \includegraphics[height=\heightT, width=\widthT]{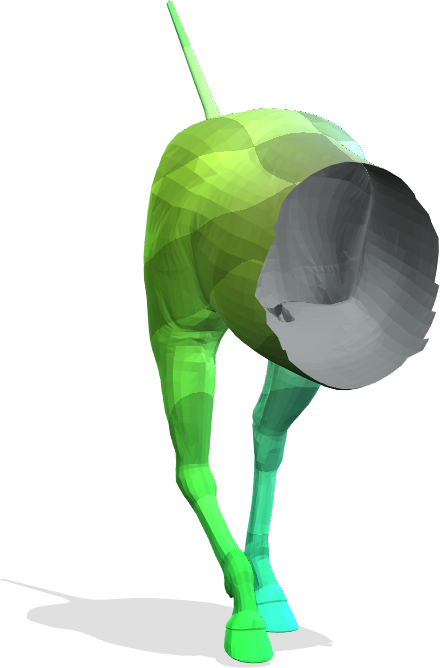}&
        \hspace{\hspaceCols}
        \includegraphics[height=\heightT, width=\widthT]{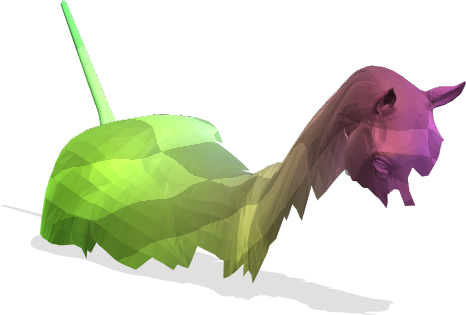}&
        \hspace{\hspaceCols}
        \includegraphics[height=\heightT, width=\widthT]{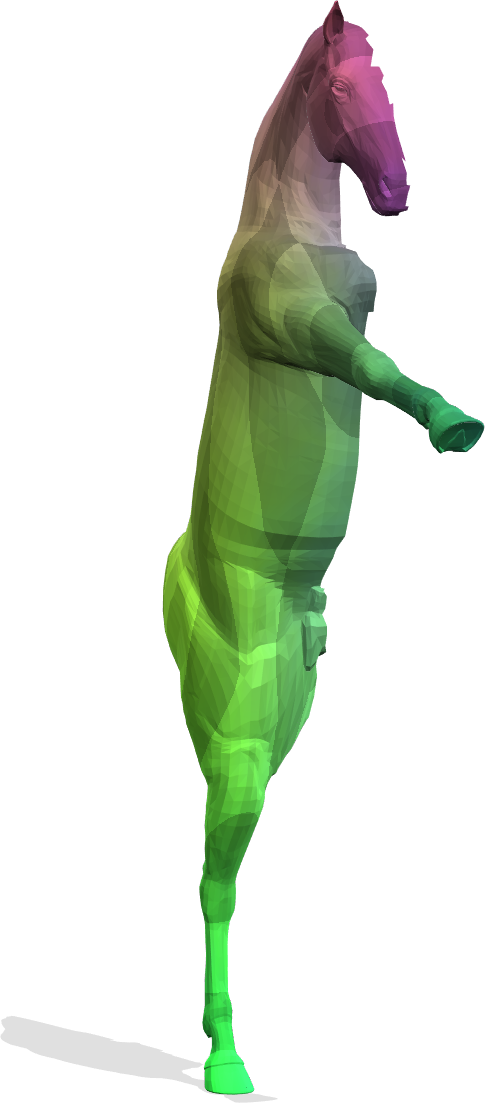}&
        \hspace{\hspaceCols}
        \includegraphics[height=\heightT, width=\widthT]{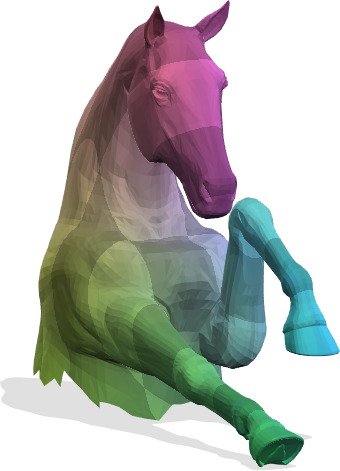}
        \\
        &
        \hspace{\hspaceCols}
        \includegraphics[height=\heightT, width=\widthT]{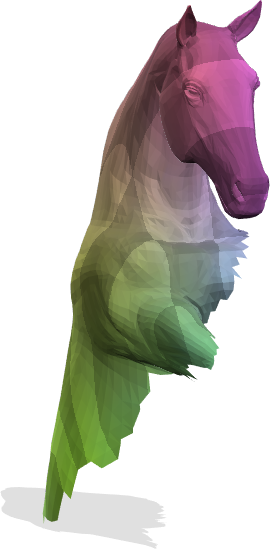}&
        \hspace{\hspaceCols}
        \includegraphics[height=\heightT, width=\widthT]{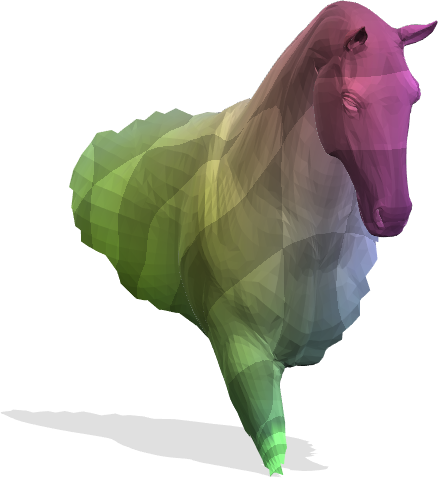}&
        \hspace{\hspaceCols}
        \includegraphics[height=\heightT, width=\widthT]{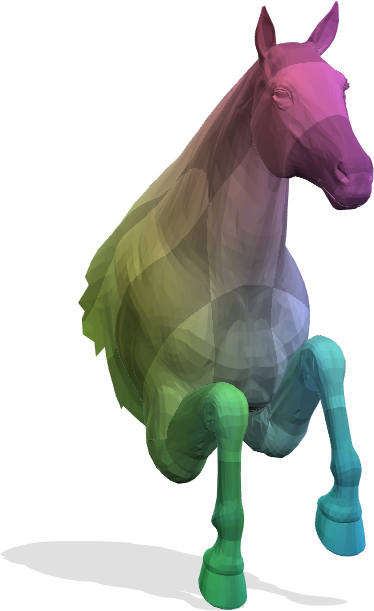}&
        \hspace{\hspaceCols}
        \includegraphics[height=\heightT, width=\widthT]{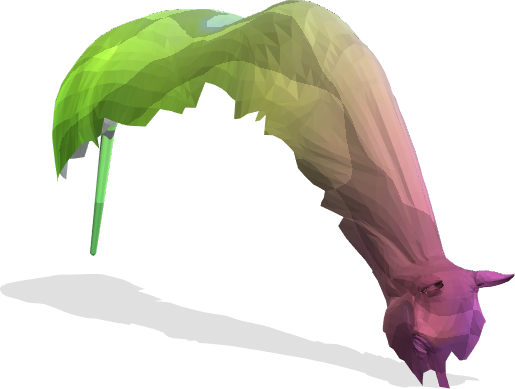} \\

        \hspace{\hspaceCols}
        \includegraphics[height=\heightT, width=\widthT]{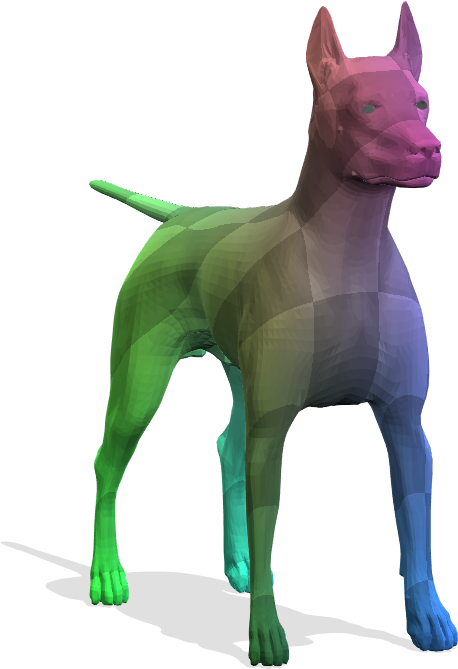}&
        \hspace{\hspaceCols}
        \includegraphics[height=\heightT, width=\widthT]{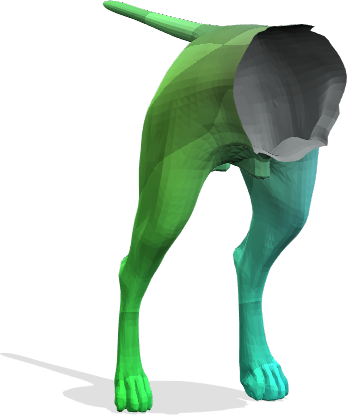}&
        \hspace{\hspaceCols}
        \includegraphics[height=\heightT, width=\widthT]{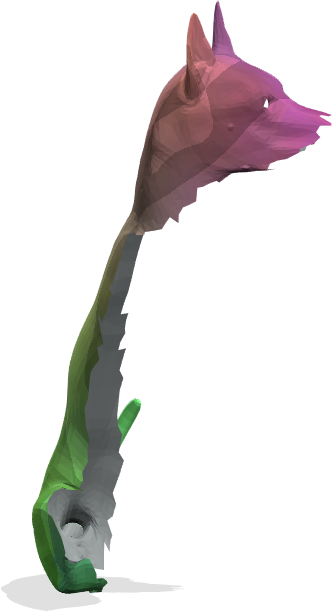}&
        \hspace{\hspaceCols}
        \includegraphics[height=\heightT, width=\widthT]{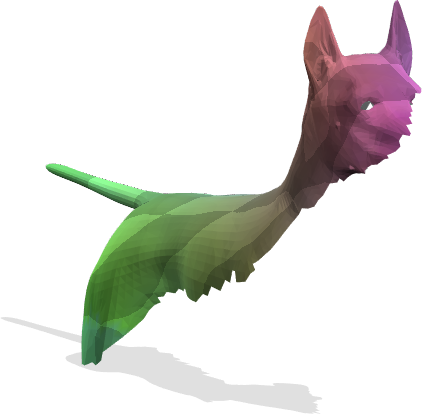}&
        \hspace{\hspaceCols}
        \includegraphics[height=\heightT, width=\widthT]{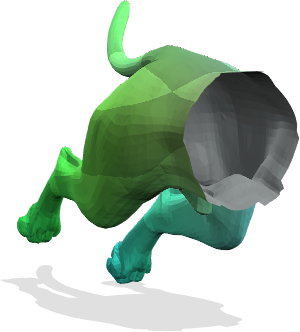} \\
        &
        \hspace{\hspaceCols}
        \includegraphics[height=\heightT, width=\widthT]{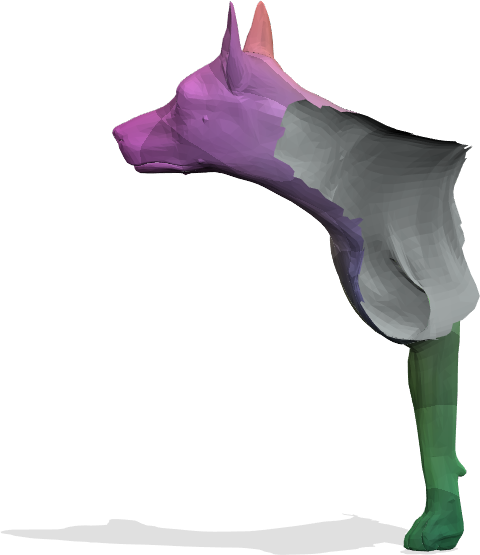}&
        \hspace{\hspaceCols}
        \includegraphics[height=\heightT, width=\widthT]{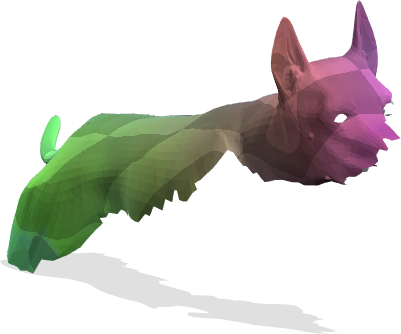}&
        \hspace{\hspaceCols}
        \includegraphics[height=\heightT, width=\widthT]{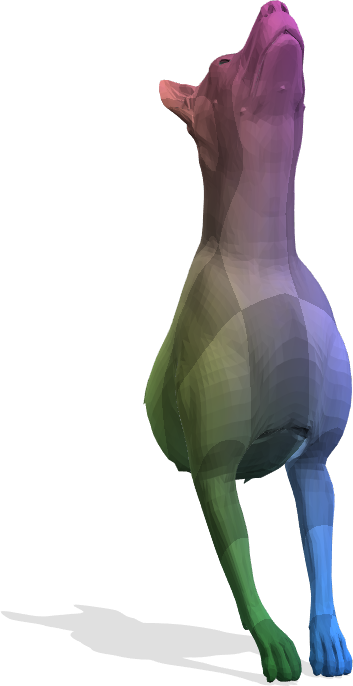}&
        \hspace{\hspaceCols}
        \includegraphics[height=\heightT, width=\widthT]{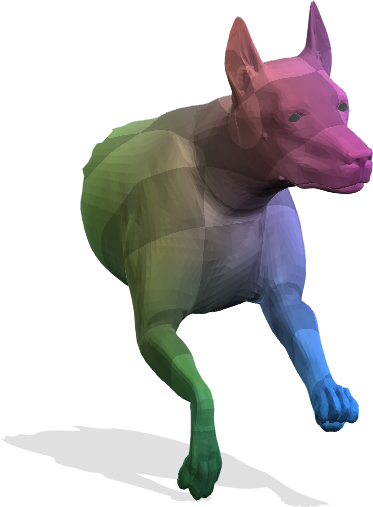}
        \\
        &
        \hspace{\hspaceCols}
        \includegraphics[height=\heightT, width=\widthT]{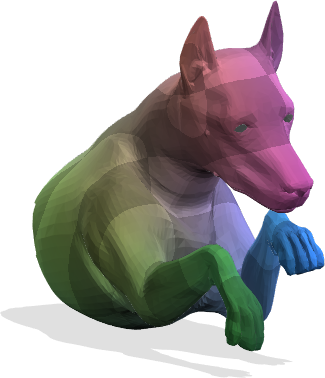}&
        \hspace{\hspaceCols}
        \includegraphics[height=\heightT, width=\widthT]{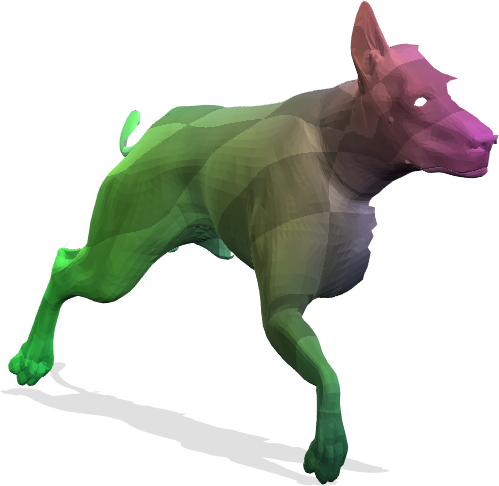}&
        \hspace{\hspaceCols}
        \includegraphics[height=\heightT, width=\widthT]{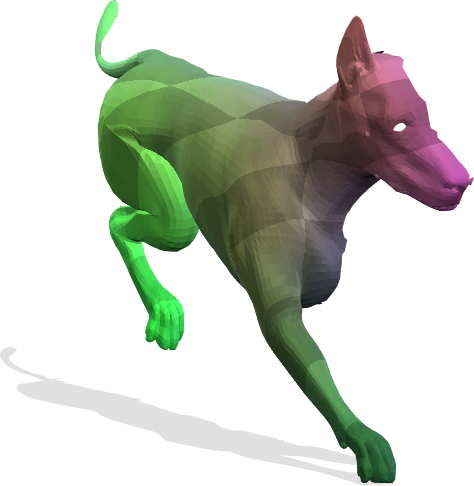}&
        \hspace{\hspaceCols}
        \includegraphics[height=\heightT, width=\widthT]{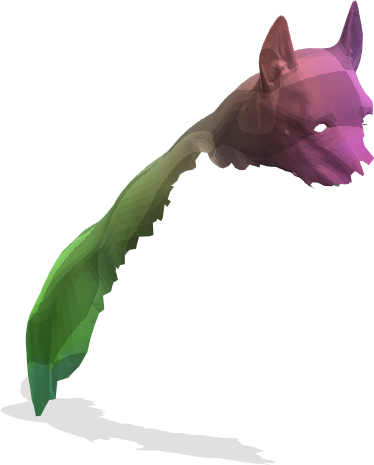} \\

        \hspace{\hspaceCols}
        \includegraphics[height=\heightT, width=\widthT]{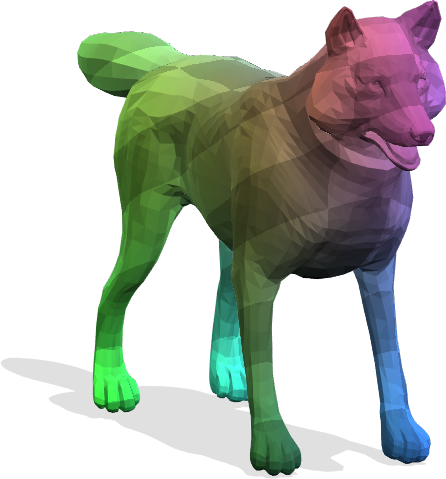}&
        \hspace{\hspaceCols}
        \includegraphics[height=\heightT, width=\widthT]{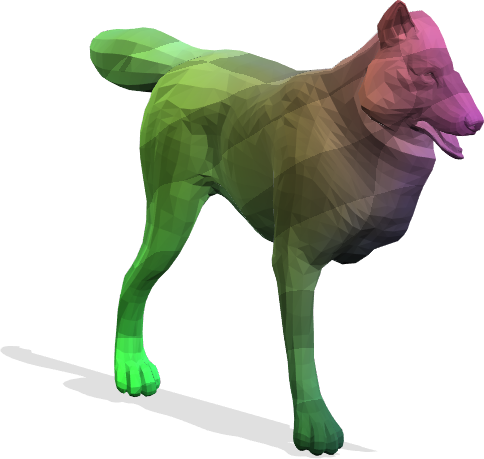}&
        \hspace{\hspaceCols}
        \includegraphics[height=\heightT, width=\widthT]{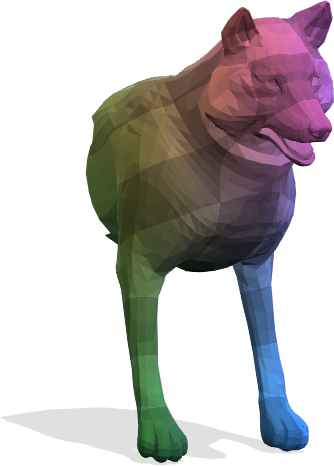}&
        \hspace{\hspaceCols}
        \includegraphics[height=\heightT, width=\widthT]{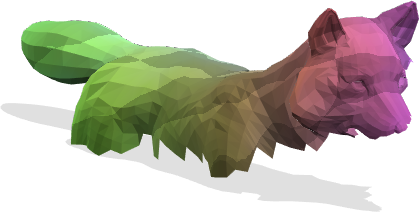}&
        \hspace{\hspaceCols}
        \includegraphics[height=\heightT, width=\widthT]{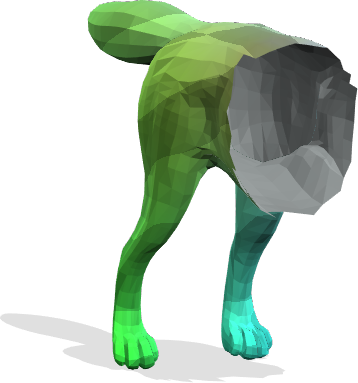} \\
        
    \end{tabular}
    \caption{\textbf{Qualitative results of our method on the SHREC'16 CUTS dataset.} For each shape category, the top-left shape is the reference shape to be matched by other shapes. Our method obtains accurate correspondences for partial shapes with a large missing part.}
    \label{fig:cuts_qualitative}
\end{figure}

\begin{figure}[!ht]
    \centering
    \def\rowOnecolumnOne{cat-holes_cat_shape_1}
\def\rowOnecolumnTwo{cat-holes_cat_shape_2}
\def\rowOnecolumnThree{cat-holes_cat_shape_24}
\def\rowOnecolumnFour{cat-holes_cat_shape_4}
\def\rowTwocolumnFive{cat-holes_cat_shape_23}
\def\rowTwocolumnTwo{cat-holes_cat_shape_5}
\def\rowTwocolumnThree{cat-holes_cat_shape_18}
\def\rowTwocolumnFour{cat-holes_cat_shape_7}
\def\rowOnecolumnFive{cat-holes_cat_shape_16}
\def\rowThreecolumnTwo{cat-holes_cat_shape_8}
\def\rowThreecolumnThree{cat-holes_cat_shape_14}
\def\rowThreecolumnFour{cat-holes_cat_shape_11}
\def\rowThreecolumnFive{cat-holes_cat_shape_13}

\def\rowFourcolumnOne{centaur-holes_centaur_shape_1}
\def\rowFourcolumnTwo{centaur-holes_centaur_shape_2}
\def\rowFourcolumnThree{centaur-holes_centaur_shape_17}
\def\rowFourcolumnFour{centaur-holes_centaur_shape_3}
\def\rowFourcolumnFive{centaur-holes_centaur_shape_12}
\def\rowFivecolumnTwo{centaur-holes_centaur_shape_5}
\def\rowFivecolumnThree{centaur-holes_centaur_shape_10}
\def\rowFivecolumnFour{centaur-holes_centaur_shape_6}
\def\rowFivecolumnFive{centaur-holes_centaur_shape_1}

\def\rowSixcolumnOne{horse-holes_horse_shape_1}
\def\rowSixcolumnTwo{horse-holes_horse_shape_1}
\def\rowSixcolumnThree{horse-holes_horse_shape_21}
\def\rowSixcolumnFour{horse-holes_horse_shape_2}
\def\rowSixcolumnFive{horse-holes_horse_shape_20}
\def\rowSevencolumnTwo{horse-holes_horse_shape_3}
\def\rowSevencolumnThree{horse-holes_horse_shape_19}
\def\rowSevencolumnFour{horse-holes_horse_shape_5}
\def\rowSevencolumnFive{horse-holes_horse_shape_15}
\def\rowEightcolumnTwo{horse-holes_horse_shape_12}
\def\rowEightcolumnThree{horse-holes_horse_shape_8}
\def\rowEightcolumnFour{horse-holes_horse_shape_11}
\def\rowEightcolumnFive{horse-holes_horse_shape_7}

\def\rowNinecolumnOne{dog-holes_dog_shape_1}
\def\rowNinecolumnTwo{dog-holes_dog_shape_12}
\def\rowNinecolumnThree{dog-holes_dog_shape_25}
\def\rowNinecolumnFour{dog-holes_dog_shape_3}
\def\rowNinecolumnFive{dog-holes_dog_shape_23}
\def\rowTencolumnTwo{dog-holes_dog_shape_4}
\def\rowTencolumnThree{dog-holes_dog_shape_19}
\def\rowTencolumnFour{dog-holes_dog_shape_6}
\def\rowTencolumnFive{dog-holes_dog_shape_17}
\def\rowElevencolumnTwo{dog-holes_dog_shape_8}
\def\rowElevencolumnThree{dog-holes_dog_shape_15}
\def\rowElevencolumnFour{dog-holes_dog_shape_10}
\def\rowElevencolumnFive{dog-holes_dog_shape_14}

\def\rowTwelvecolumnOne{wolf-holes_wolf_shape_1}
\def\rowTwelvecolumnTwo{wolf-holes_wolf_shape_1}
\def\rowTwelvecolumnThree{wolf-holes_wolf_shape_2}
\def\rowTwelvecolumnFour{wolf-holes_wolf_shape_5}
\def\rowTwelvecolumnFive{wolf-holes_wolf_shape_8}

\def\hspaceCols{-0.45cm}
\def\height{1.5cm}
\def\width{1.7cm}
\def\heightT{\height}
\def\widthT{\width}
\def\heightQ{\height}
\def\widthQ{\width}
\begin{tabular}{ccccc}%
        \setlength{\tabcolsep}{0pt} 
        \hspace{\hspaceCols}
        \includegraphics[height=\heightT, width=\widthT]{\pathOurs\rowOnecolumnOne\srcEnd}&
        \hspace{\hspaceCols}
        \includegraphics[height=\heightT, width=\widthT]{\pathOurs\rowOnecolumnTwo\trgtEnd}&
        \hspace{\hspaceCols}
        \includegraphics[height=\heightT, width=\widthT]{\pathOurs\rowOnecolumnThree\trgtEnd}&
        \hspace{\hspaceCols}
        \includegraphics[height=\heightT, width=\widthT]{\pathOurs\rowOnecolumnFour\trgtEnd}&
        \hspace{\hspaceCols}
        \includegraphics[height=\heightT, width=\widthT]{\pathOurs\rowOnecolumnFive\trgtEnd}
        \\
        &
        \hspace{\hspaceCols}
        \includegraphics[height=\heightT, width=\widthT]{\pathOurs\rowTwocolumnTwo\trgtEnd}&
        \hspace{\hspaceCols}
        \includegraphics[height=\heightT, width=\widthT]{\pathOurs\rowTwocolumnThree\trgtEnd}&
        \hspace{\hspaceCols}
        \includegraphics[height=\heightT, width=\widthT]{\pathOurs\rowTwocolumnFour\trgtEnd}&
        \hspace{\hspaceCols}
        \includegraphics[height=\heightT, width=\widthT]{\pathOurs\rowTwocolumnFive\trgtEnd}
        \\
        &
        \hspace{\hspaceCols}
        \includegraphics[height=\heightT, width=\widthT]{\pathOurs\rowThreecolumnTwo\trgtEnd}&
        \hspace{\hspaceCols}
        \includegraphics[height=\heightT, width=\widthT]{\pathOurs\rowThreecolumnThree\trgtEnd}&
        \hspace{\hspaceCols}
        \includegraphics[height=\heightT, width=\widthT]{\pathOurs\rowThreecolumnFour\trgtEnd}&
        \hspace{\hspaceCols}
        \includegraphics[height=\heightT, width=\widthT]{\pathOurs\rowThreecolumnFive\trgtEnd} \\

        \hspace{\hspaceCols}
        \includegraphics[height=\heightT, width=\widthT]{\pathOurs\rowFourcolumnOne\srcEnd}&
        \hspace{\hspaceCols}
        \includegraphics[height=\heightT, width=\widthT]{\pathOurs\rowFourcolumnTwo\trgtEnd}&
        \hspace{\hspaceCols}
        \includegraphics[height=\heightT, width=\widthT]{\pathOurs\rowFourcolumnThree\trgtEnd}&
        \hspace{\hspaceCols}
        \includegraphics[height=\heightT, width=\widthT]{\pathOurs\rowFourcolumnFour\trgtEnd}&
        \hspace{\hspaceCols}
        \includegraphics[height=\heightT, width=\widthT]{\pathOurs\rowFourcolumnFive\trgtEnd}
        \\
        &
        \hspace{\hspaceCols}
        \includegraphics[height=\heightT, width=\widthT]{\pathOurs\rowFivecolumnTwo\trgtEnd}&
        \hspace{\hspaceCols}
        \includegraphics[height=\heightT, width=\widthT]{\pathOurs\rowFivecolumnThree\trgtEnd}&
        \hspace{\hspaceCols}
        \includegraphics[height=\heightT, width=\widthT]{\pathOurs\rowFivecolumnFour\trgtEnd}&
        \hspace{\hspaceCols}
        \includegraphics[height=\heightT, width=\widthT]{\pathOurs\rowFivecolumnFive\trgtEnd} \\
        \hspace{\hspaceCols}
        \includegraphics[height=\heightT, width=\widthT]{\pathOurs\rowSixcolumnOne\srcEnd}&
        \hspace{\hspaceCols}
        \includegraphics[height=\heightT, width=\widthT]{\pathOurs\rowSixcolumnTwo\trgtEnd}&
        \hspace{\hspaceCols}
        \includegraphics[height=\heightT, width=\widthT]{\pathOurs\rowSixcolumnThree\trgtEnd}&
        \hspace{\hspaceCols}
        \includegraphics[height=\heightT, width=\widthT]{\pathOurs\rowSixcolumnFour\trgtEnd}&
        \hspace{\hspaceCols}
        \includegraphics[height=\heightT, width=\widthT]{\pathOurs\rowSixcolumnFive\trgtEnd} \\
        &
        \hspace{\hspaceCols}
        \includegraphics[height=\heightT, width=\widthT]{\pathOurs\rowSevencolumnTwo\trgtEnd}&
        \hspace{\hspaceCols}
        \includegraphics[height=\heightT, width=\widthT]{\pathOurs\rowSevencolumnThree\trgtEnd}&
        \hspace{\hspaceCols}
        \includegraphics[height=\heightT, width=\widthT]{\pathOurs\rowSevencolumnFour\trgtEnd}&
        \hspace{\hspaceCols}
        \includegraphics[height=\heightT, width=\widthT]{\pathOurs\rowSevencolumnFive\trgtEnd}
        \\
        &
        \hspace{\hspaceCols}
        \includegraphics[height=\heightT, width=\widthT]{\pathOurs\rowEightcolumnTwo\trgtEnd}&
        \hspace{\hspaceCols}
        \includegraphics[height=\heightT, width=\widthT]{\pathOurs\rowEightcolumnThree\trgtEnd}&
        \hspace{\hspaceCols}
        \includegraphics[height=\heightT, width=\widthT]{\pathOurs\rowEightcolumnFour\trgtEnd}&
        \hspace{\hspaceCols}
        \includegraphics[height=\heightT, width=\widthT]{\pathOurs\rowEightcolumnFive\trgtEnd} \\

        \hspace{\hspaceCols}
        \includegraphics[height=\heightT, width=\widthT]{\pathOurs\rowNinecolumnOne\srcEnd}&
        \hspace{\hspaceCols}
        \includegraphics[height=\heightT, width=\widthT]{\pathOurs\rowNinecolumnTwo\trgtEnd}&
        \hspace{\hspaceCols}
        \includegraphics[height=\heightT, width=\widthT]{\pathOurs\rowNinecolumnThree\trgtEnd}&
        \hspace{\hspaceCols}
        \includegraphics[height=\heightT, width=\widthT]{\pathOurs\rowNinecolumnFour\trgtEnd}&
        \hspace{\hspaceCols}
        \includegraphics[height=\heightT, width=\widthT]{\pathOurs\rowNinecolumnFive\trgtEnd} \\
        &
        \hspace{\hspaceCols}
        \includegraphics[height=\heightT, width=\widthT]{\pathOurs\rowTencolumnTwo\trgtEnd}&
        \hspace{\hspaceCols}
        \includegraphics[height=\heightT, width=\widthT]{\pathOurs\rowTencolumnThree\trgtEnd}&
        \hspace{\hspaceCols}
        \includegraphics[height=\heightT, width=\widthT]{\pathOurs\rowTencolumnFour\trgtEnd}&
        \hspace{\hspaceCols}
        \includegraphics[height=\heightT, width=\widthT]{\pathOurs\rowTencolumnFive\trgtEnd}
        \\
        &
        \hspace{\hspaceCols}
        \includegraphics[height=\heightT, width=\widthT]{\pathOurs\rowElevencolumnTwo\trgtEnd}&
        \hspace{\hspaceCols}
        \includegraphics[height=\heightT, width=\widthT]{\pathOurs\rowElevencolumnThree\trgtEnd}&
        \hspace{\hspaceCols}
        \includegraphics[height=\heightT, width=\widthT]{\pathOurs\rowElevencolumnFour\trgtEnd}&
        \hspace{\hspaceCols}
        \includegraphics[height=\heightT, width=\widthT]{\pathOurs\rowElevencolumnFive\trgtEnd} \\

        \hspace{\hspaceCols}
        \includegraphics[height=\heightT, width=\widthT]{\pathOurs\rowTwelvecolumnOne\srcEnd}&
        \hspace{\hspaceCols}
        \includegraphics[height=\heightT, width=\widthT]{\pathOurs\rowTwelvecolumnTwo\trgtEnd}&
        \hspace{\hspaceCols}
        \includegraphics[height=\heightT, width=\widthT]{\pathOurs\rowTwelvecolumnThree\trgtEnd}&
        \hspace{\hspaceCols}
        \includegraphics[height=\heightT, width=\widthT]{\pathOurs\rowTwelvecolumnFour\trgtEnd}&
        \hspace{\hspaceCols}
        \includegraphics[height=\heightT, width=\widthT]{\pathOurs\rowTwelvecolumnFive\trgtEnd} \\
        
    \end{tabular}
    \caption{\textbf{Qualitative results of our method on the SHREC'16 HOLES dataset.} For each shape category, the top-left shape is the reference shape to be matched by other shapes. Our method obtains accurate correspondences for partial shapes with multiple missing parts.}
    \label{fig:holes_qualitative}
\end{figure}

\end{document}